\numberwithin{equation}{section}
\theoremstyle{plain}
\newtheorem{theorem*}{Theorem}
\newtheorem{definition*}{Definition}
\newtheorem{corollary*}{Corollary}
\newtheorem{lemma*}{Lemma}
\newtheorem{remark*}{Remark}
\newtheorem{assumption*}{Assumption}
\newtheorem{proposition*}{Proposition}
\newtheorem{property*}{Property}
\newcommand{\Dn}{\mathscr{D}_{n}}
\newcommand{\bX}{\textbf{X}}
\newcommand{\bTheta}{\Theta}
\newcommand{\bx}{\textbf{x}}
\newcommand{\bz}{\textbf{z}}
\renewcommand{\P}{\mathds{P}}
\newcommand{\R}{\mathds{R}}
\newcommand{\E}{\mathbb{E}}
\newcommand{\V}{\mathbb{V}}
\newcommand{\INDSTATE}[1][1]{\STATE\hspace{#1\algorithmicindent}}
\newcommand{\Xperm}{X_{\pi_j}}
\newcommand{\Xpermi}{X_{i, \pi_j}}
\newcommand{\Xpermk}{X_{k, \pi_j}}
\title{\textbf{\LARGE MDA for random forests: inconsistency, and a practical solution via the Sobol-MDA}}
\author[1,2]{Clément Bénard}
\affil[1]{Safran Tech, Digital Sciences \& Technologies, 78114 Magny-Les-Hameaux, France}
\author[1]{Sébastien Da Veiga}
\affil[2]{Sorbonne Université, CNRS, LPSM, 75005 Paris, France}
\author[3]{Erwan Scornet}
\affil[3]{Ecole Polytechnique, IP Paris, CMAP, 91128 Palaiseau, France}
\date{}
\begin{document}

\maketitle

\begin{abstract}
    Variable importance measures are the main tools to analyze the black-box mechanisms of random forests. Although the mean decrease accuracy (MDA) is widely accepted as the most efficient variable importance measure for random forests, little is known about its statistical properties. In fact, the definition of MDA varies across the main random forest software. In this article, our objective is to rigorously analyze the behavior of the main MDA implementations. Consequently, we mathematically formalize the various implemented MDA algorithms, and then establish their limits when the sample size increases. This asymptotic analysis reveals that these MDA versions differ as importance measures, since they converge towards different quantities. More importantly, we break down these limits into three components: the first two terms are related to Sobol indices, which are well-defined measures of a covariate contribution to the response variance, widely used in the sensitivity analysis field, as opposed to the third term, whose value increases with dependence within covariates. Thus, we theoretically demonstrate that the MDA does not target the right quantity to detect influential covariates in a dependent setting, a fact that has already been noticed experimentally.
    To address this issue, we define a new importance measure for random forests, the Sobol-MDA, which fixes the flaws of the original MDA, and consistently estimates the accuracy decrease of the forest retrained without a given covariate, but with an efficient computational cost. The Sobol-MDA empirically outperforms its competitors on both simulated and real data for variable selection. An open source implementation in \texttt{R} and \texttt{C++} is available online.
\end{abstract}

\textbf{Keywords:}
MDA; Random forests; Sensitivity analysis; Sobol indices; Variable importance; Variable selection

\section{Introduction}

Random forests \citep{breiman2001random} are a statistical learning algorithm, which aggregates a large number of trees to solve regression and classification problems, and achieves state-of-the-art performance on a wide range of problems. In particular, random forests exhibit a good behavior on high-dimensional or noisy data, without any parameter tuning, and are also well known for their robustness. 
However, they suffer from a major drawback: a given prediction is generated through a large number of operations, typically tens of thousands, which makes the interpretation of the prediction mechanism impossible. Because of this complexity, random forests are often qualified as black boxes. More generally, the interpretability of learning algorithms is receiving an increasingly high interest since this black-box characteristic is a strong practical limitation. For example, applications involving critical decisions, typically healthcare, require predictions to be justified. The most popular way to interpret random forests is variable importance analysis: covariates are ranked by decreasing order of their importance in the algorithm prediction process. Thus, specific variable importance measures were developed along with random forests \citep{breiman2001random, breiman2003atechnical}. However, we will see that they may not target the right variable ranking to detect influential covariates in a dependent setting, and could therefore be improved. First, we present the objectives of variable importance. Second, we review the existing variable importance measures for random forests, and then conduct a theoretical analysis of their limitations. Finally, we introduce the Sobol-MDA algorithm, a new importance measure for random forests, which mimics the brute force algorithm of retraining the forest without a covariate to measure the accuracy decrease, but with a much higher computational efficiency. The Sobol-MDA is proved to be consistent, and outperforms the existing variable importance competitors for variable selection, as shown in the experiments. An implementation in \texttt{R} and \texttt{C++} of the Sobol-MDA is available at \texttt{https://gitlab.com/drti/sobolmda}, and is based on \texttt{ranger} \citep{wright2017ranger}, a fast implementation of random forests.

\section{Context and Objectives}

\subsection{Variable Importance for Random Forests.}
There are essentially two importance measures for random forests: the mean decrease accuracy (MDA) \citep{breiman2001random} and the mean decrease impurity (MDI) \citep{breiman2003atechnical}. 
The MDA measures the decrease of accuracy when the values of a given covariate are permuted, thus breaking its relation to the response variable and to the other covariates. On the other hand, the MDI sums the weighted decreases of impurity over all nodes that split on a given covariate, averaged over all trees in the forest.
In both cases, a high value of the metric means that the covariate is used in many important operations of the prediction mechanism of the forest. Unfortunately, there is no precise and rigorous interpretation since these two definitions are purely empirical. Furthermore, in the last two decades, many empirical analyses have highlighted the flaws of the MDI \citep{strobl2007bias}.
Although \citet{li2019debiased}, \citet{zhou2019unbiased}, and \citet{loecher2020unbiased} recently improved the MDI to partially remove its bias, \citet{scornet2020trees} demonstrated that the MDI is consistent under a strong and restrictive assumption: the regression function is additive and the covariates are independent. Otherwise, the MDI is ill-defined. Overall, the MDA is widely considered as the most efficient variable importance measure for random forests \citep{strobl2007bias, ishwaran2007variable, genuer2010variable, boulesteix2012overview}, and we therefore focus on the MDA. Although it is extensively used in practice, little is known about its statistical properties. To our knowledge, only \citet{ishwaran2007variable} and \citet{zhu2015reinforcement} provide theoretical analyses of modified versions of the MDA, but the asymptotic behavior of the original MDA algorithm \citep{breiman2001random} is unknown: \citet{ishwaran2007variable} considers Breiman's forests but simplifies the MDA procedure, whereas \citet{zhu2015reinforcement} considers the original MDA but assumes the independence of the covariates and an exponential concentration inequality on the random forest estimate, the latter being proved only for purely random forests, which do not use the data set to build the tree partitions.
On the practical side, many empirical analyses provide evidence that when covariates are dependent, the MDA may fail to detect some relevant covariates \citep{archer2008empirical, strobl2008conditional, nicodemus2009predictor, genuer2010variable, auret2011empirical, tolocsi2011classification, gregorutti2017correlation, hooker2019please, mentch2020getting}.
Several proposals \citep[][]{mentchquantifying2016, candes2016panning, williamson2020unified} were recently made to overcome this issue. \citet{mentchquantifying2016} prove the asymptotic normality of random forests, which enables detection of whether the predictions of a forest built without a given covariate are significantly different from the ones of the original forest with all covariates. Alternatively, \citet{candes2016panning} introduce model-X knockoffs, which rely on conditional randomization tests, where the relation between a covariate and the response variable is broken without modifying the joint distribution of the covariates.  Finally, \citet{williamson2020unified} propose to measure the decrease of accuracy between the original procedure and a new run without a given covariate.
However, these methods have a much higher computational cost, as many model retrains are involved, and are in particular intractable in high dimension.
Furthermore, it is critical to assess that the properties of a variable importance measure are in line with the final objective of the conducted analysis. In the following subsection, we review the possible goals of variable importance, and then introduce sensitivity analysis to deepen the theoretical understanding of the MDA. 

\subsection{Sensitivity Analysis}
In practice, obtaining raw measures of variable importance is rarely the end goal. Rather, practitioners are frequently interested in using such measures to detect influential covariates to either \citep{genuer2010variable}: (i) find a small number of covariates with a maximized accuracy, or (ii) detect and rank all influential covariates to focus on for further exploration with domain experts. Depending on which of these two objectives is of interest, different strategies should be used as the following example shows: if two influential covariates are strongly correlated, one must be discarded in the first case, while the two must be kept in the second case. Indeed, if two covariates convey the same statistical information, only one should be selected if the goal is to maximize the predictive accuracy with a small number of covariates, i.e., objective (i). On the other hand, these two covariates may be acquired differently and represent distinct physical quantities. Therefore, they may have different interpretations for domain experts, and both should be kept for objective (ii) of ranking all variables for interpretation.

Sensitivity analysis is the study of uncertainties in a system. The main goal is to apportion the uncertainty of a system response to the uncertainty of the different covariates. \citet{iooss2015review} and \citet{ghanem2017handbook} provide detailed reviews of global sensitivity analysis. In particular, sensitivity analysis introduces well-defined importance measures of covariate contributions to the response variance: Sobol indices \citep{sobol1993sensitivity, saltelli2002making} and Shapley effects \citep{shapley1953value, owen2014sobol, iooss2017shapley}.
These metrics are widely used to analyze computer code experiments, especially for the design of industrial systems. However, the literature about variable importance in the fields of statistical learning and machine learning rarely mentions sensitivity analysis. The reason of this hiatus is clear: until quite recently, sensitivity analysis was focused on independent covariates, whereas such an assumption is generally unreasonable in machine learning contexts. In the last years, \citet{gregorutti:tel-01146830} first established a link between sensitivity analysis and the MDA: in the case of independent covariates, the theoretical counterpart of the MDA is the unnormalized total Sobol index, i.e., twice the amount of explained variance lost when a given covariate is removed from the model, which is the expected quantity for both objectives (i) and (ii) in this independent setting. Accordingly, the algorithm from \citet{williamson2020unified} also estimates the total Sobol index when the accuracy metric is the explained variance, even when covariates are dependent, and although this connection is not explicitly mentioned. 
When one is using variable importance to select a small number of covariates while maximizing predictive accuracy, i.e. objective (i), the total Sobol index is clearly the relevant measure to eliminate the less influential covariates, as also suggested by \citet{williamson2020unified}.
Additionally, \citet{owen2014sobol} reintroduced Shapley effects, originally proposed in game theory \citep{shapley1953value}. Shapley effects exhibit very interesting properties for objective (ii), of ranking all variables for interpretation, as they equitably allocate the mutual contribution due to dependence and interactions to individual covariates. Shapley effects are now widely used by the machine learning community to interpret both tree ensembles and neural networks. SHAP values \citep{lundberg2017unified} also adapt Shapley effects for local interpretation of model predictions, and \citet{lundberg2018consistent} provide a fast algorithm for tree ensembles. Finally, we refer to \citet{antoniadis2020random} for a review of random forests and sensitivity analysis.

\section{MDA Theoretical Limitations} \label{sec_MDA}

\subsection{MDA Definitions}

The MDA was originally proposed by Breiman in his seminal article \citep{breiman2001random}, and works as follows. The values of a specific covariate are permuted to break its relation to the response variable. Then, the predictive accuracy is computed for this perturbed dataset. The difference between this degraded accuracy and the original one gives the importance of the covariate: a high decrease of accuracy means that the considered covariate has a strong influence on the prediction mechanism. However, a review of the literature on random forests and their software implementations reveals that there is no consensus on the exact mathematical formulation of the MDA. We focus on the most popular random forest algorithms: the \texttt{R} package \texttt{randomForests} \citep{RrandomForest} based on the original \texttt{Fortran} code from Breiman and Cutler, the fast \texttt{R/C++} implementation \texttt{ranger} \citep{wright2017ranger}, the most widely used \texttt{python} machine learning library \texttt{scikit-learn} \citep{scikit-learn} (\texttt{RandomForestClassifier}/\texttt{RandomForestRegressor}), and the \texttt{R} package \texttt{randomForestSRC} \citep{RrandomForestSRC}, which implements survival forests in addition to the original algorithm.
To give an order of magnitude, the typical number of users of each of these packages during the year 2020 is about half a million. A close inspection of their code exhibits that essentially three distinct definitions of the MDA are widely used. References and details about the MDA implementation in the package codes are provided in the Supplementary Material. The differences between the three MDA versions are twofold: the MDA can be computed based on the tree error or the whole forest error, and via a test set or out-of-bag samples, as summarized in Table \ref{table_MDA_def}. We first introduce the required notations, and then mathematically formalize these different MDA definitions.
\begin{table} 
\setlength{\tabcolsep}{2pt}
\centering
\begin{tabular}{| c | c | c | c |}
  \hline \hline
  Algorithm & Package & Error Estimate & Data \\
  \hline
  Train-Test MDA & \begin{tabular}{c}
    \texttt{scikit-learn}\\\texttt{randomForestSRC} \end{tabular} & Forest & Testing dataset \\
  \hline
  Breiman-Cutler MDA & \begin{tabular}{c}
    \texttt{randomForest} (normalized)  \\ \texttt{ranger} / \texttt{randomForestSRC} \end{tabular} & Tree & OOB sample  \\
  \hline
  Ishwaran-Kogalur MDA & \begin{tabular}{c}
    \vspace*{-2.5mm} \\ \texttt{randomForestSRC} \\ \vspace*{-2.5mm} \end{tabular} & Forest & OOB sample  \\
  \hline \hline
\end{tabular}
\vspace*{1.5mm}
\caption{Summary of the different MDA characteristics.}
\label{table_MDA_def}
\end{table}
We define a standard regression setting with the following Assumption \ref{A1}, as well as the random forest notations below.
\begin{assumption*} \label{A1}
The response variable $Y \in \R$ follows $Y = m(X) + \varepsilon$, 
where the covariate vector $X = (X^{(1)}, \hdots, X^{(p)}) \in [0,1]^p$ admits a density over $[0,1]^p$ bounded from above and below by strictly positive constants, $m$ is continuous, and the noise $\varepsilon$ is sub-Gaussian, independent of $X$, and centered.
A sample $\Dn = \{(X_1, Y_1), \hdots, (X_n, Y_n) \}$ of $n$ independent random variables distributed as $(X, Y)$ is available.
\end{assumption*}
The random CART estimate $m_n(x,\Theta)$ is trained with $\Dn$ and $\Theta$, where $\Theta$ is used to generate the bootstrap sampling and the split randomization, and $x \in [0,1]^p$ is a new observation. The component of $\Theta$ used to resample the data is denoted $\Theta^{(S)} \subset \{1,\hdots,n\}$.
The random forest estimate $m_{M,n}(x, \bTheta_{(M)})$ aggregates $M$ $\Theta$-random CART, each of which is randomized by a component of $\bTheta_{(M)} = (\Theta_1,\hdots,\Theta_M)$. 
In the sequel, we consider a fixed index $j \in \{1,\hdots,p\}$. Next, we define $\Xpermi$ as the vector $X_i$ where the $j$-th component is permuted between observations. Similarly, $\Xperm$ is the vector $X$ where the $j$-th component is replaced by an independent copy of $\smash{X^{(j)}}$.
Finally, we also introduce $\smash{X^{(-j)}}$, as the random vector $X$ without the $j$-th component.
Now, we can detail the three MDA definitions, summarized in Table \ref{table_MDA_def}.

The most simple approach is taken by \texttt{scikit-learn} where the forest is fit with a training sample and the accuracy decrease is estimated with an independent testing sample $\Dn' = \{(X'_1, Y'_1), \hdots, (X'_n, Y'_n) \}$. Throughout the article, we call the generalization error of the forest the expected squared error for a new observation, usually estimated with an independent sample. Thus, forest predictions are run for both the test set and its permuted version, and the corresponding mean squared errors are subtracted to give the generalization error increase, called the Train-Test MDA. 
\begin{definition*}[Train/Test MDA]
The Train/Test MDA is defined by
\begin{align*}
    \widehat{\textrm{MDA}}_{M,n}^{(TT)}(X^{(j)}) = \frac{1}{n} \sum_{i = 1}^{n}  \big\{Y'_i - m_{M,n}(\Xpermi', \bTheta_{(M)})\big\}^2 - \big\{Y'_i - m_{M,n}(X'_i, \bTheta_{(M)})\big\}^2.
\end{align*}
\end{definition*}
This algorithm is the only MDA version implemented in \texttt{scikit-learn}, and is one possibility in \texttt{randomForestSRC}. Note that the Train/Test-MDA is straightforward to implement with any random forest package by simply running predictions.

In practice, splitting the sample in two parts for training and testing often hurts the accuracy of the procedure, then decreasing the accuracy of the MDA estimate.
Since the data is bootstrapped prior to each tree growing, a portion of the sample is left out, which is called the out-of-bag sample and can be used to measure accuracy. Despite the lack of mathematical formulation in the original MDA introduction \citep{breiman2001random}, it seems clear that for each tree, the generalization error is estimated using its out-of-bag sample and the permuted version. Then, the two errors are subtracted and this difference is averaged across all trees to give the Breiman-Cutler MDA.
\begin{definition*}[Breiman-Cutler MDA]
If $X_{i,\pi_{j\ell}}$ is the $i$-th permuted out-of-bag sample for the $\ell$-th tree and for $\smash{i \in \{1,\hdots,n\} \setminus \Theta_{\ell}^{(S)}}$, then the Breiman-Cutler MDA (BC-MDA)  \citep{breiman2001random} is defined by
\begin{align*}
    \widehat{\textrm{MDA}}_{M,n}^{(BC)}(X^{(j)}) =
    \frac{1}{M} \sum_{\ell = 1}^{M}
    \frac{1}{N_{n,\ell}} \sum_{i = 1}^{n}  \big[\{Y_i - m_{n}(X_{i,\pi_{j\ell}}, \Theta_{\ell})\}^2 - \{Y_i - m_{n}(X_i, \Theta_{\ell})\}^2\big]
    \mathds{1}_{i \notin \Theta_{\ell}^{(S)}},
\end{align*}
where $N_{n,\ell} = \sum_{i = 1}^{n}  \mathds{1}_{i \neq \Theta_{\ell}^{(S)}}$ is the size of the out-of-bag sample of the $\ell$-th tree. 
\end{definition*}
Among the four main random forest implementations introduced above, only \texttt{ranger} and \texttt{randomForestSRC} exactly follow this original definition. In \texttt{randomForests}, the final quantity is normalized by the standard deviation of the generalization error differences. However, this procedure is questionable \citep{diaz2006gene, strobl2008danger}: a non-influential covariate would constantly have a standard deviation close to zero, potentially leading to a high normalized MDA.

More importantly, observe that Breiman's MDA definition is in fact a Monte-Carlo estimate of a random tree decrease of accuracy when a covariate is noised up. 
Since we are interested in the covariate influence in the entire forest, and not only in a single tree, it seems natural to extend the out-of-bag procedure to estimate the forest error \citep{ishwaran2008random} as implemented in \texttt{randomForestSRC}: for each observation $X_i$, we retrieve the random set $\Lambda_{n,i}$ of trees which do not involve $X_i$ in their construction because of the resampling step, formally defined by
\begin{align*}
    \Lambda_{n,i} = \{\ell \in \{1,\hdots,M\}: i \notin \Theta_{\ell}^{(S)}\}.
\end{align*}
We can take advantage of such batch of trees to define the out-of-bag random forest estimate by averaging the tree predictions considering only trees that belong to $\Lambda_{n,i}$, i.e., for $i \in \{1,\hdots, n\}$,
\begin{align*}
    m_{M,n}^{(OOB)}(X_i, \bTheta_{(M)}) = \frac{1}{|\Lambda_{n,i}|} \sum_{\ell \in \Lambda_{n,i}} m_n(X_i, \Theta_{\ell}) \mathds{1}_{|\Lambda_{n,i}| > 0}.
\end{align*}
It is therefore possible to estimate the random forest error using $\Dn$ alone.
Recall that for each $\Theta_{\ell}$-random tree, we randomly permute the $j$-th component of the out-of-bag dataset to define $X_{i,\pi_{j\ell}}$, and we stress that the permutation is independent for each tree. Then, we define the permuted out-of-bag forest estimate as
\begin{align*}
    m_{M,n,\pi_{j}}^{(OOB)}(X_{i}, \bTheta_{(M)}) = \frac{1}{|\Lambda_{n,i}|} \sum_{\ell \in \Lambda_{n,i}} m_n(X_{i,\pi_{j\ell}}, \Theta_{\ell}) \mathds{1}_{|\Lambda_{n,i}| > 0}.
\end{align*}
These estimates enable to compute both the out-of-bag error of the forest and the inflated out-of-bag forest error when a covariate is noised up. Finally, the difference between these two errors forms the Ishwaran-Kogalur MDA. From an algorithmic point of view, the only difference with Breiman's definition is the mechanism to aggregate tree predictions and compute the errors, as highlighted in Algorithms $1$ and $2$ of the Supplementary Material.
\begin{definition*}[Ishwaran-Kogalur MDA]
The Ishwaran-Kogalur MDA (IK-MDA) \citep{ishwaran2007variable, ishwaran2008random} is defined by
\begin{align*}
\widehat{\textrm{MDA}}_{M,n}^{(IK)}(X^{(j)}) = \frac{1}{N_{M,n}} \sum_{i = 1}^{n} \{Y_i - m_{M,n,\pi_{j}}^{(OOB)}(X_i, \bTheta_{(M)})\}^2 - \{Y_i - m_{M,n}^{(OOB)}(X_i, \bTheta_{(M)})\}^2,
\end{align*}
where $N_{M,n} = \sum_{i = 1}^{n} \mathds{1}_{|\Lambda_{n,i}| > 0}$ is the number of points which are not used in all tree constructions. 
\end{definition*}

An asymptotic analysis of these three MDA versions, summarized in Table \ref{table_MDA_def}, reveals that they do not share the same theoretical counterpart. Consequently, they have different meanings and generate different variable rankings, from which divergent conclusions can be drawn. However, these MDA versions are used interchangeably in practice. The convergence of the MDA is established in the next subsection, and then the different theoretical counterparts are analyzed in the following subsection.

\subsection{MDA Inconsistency}

The out-of-bag estimate is involved in both the Breiman-Cutler MDA and Ishwaran-Kogalur MDA, but is also used in practice to provide a fast estimate of the random forest error. We begin our asymptotic analysis by a result on the efficiency of the out-of-bag estimate, stated in Proposition \ref{prop_oob_risk} below, which shows that the out-of-bag error consistently estimates the generalization error of the forest. This result will be later used to establish the convergence of the Ishwaran-Kogalur MDA.
The only difference between the implemented algorithms and our theoretical results, is that the resampling in the forest growing is done without replacement to alleviate the mathematical analysis \citep{scornet2015consistency, mentchquantifying2016, wager2018estimation}. We define $a_n$ the number of subsampled training observations used to build each tree.

\begin{proposition*} \label{prop_oob_risk}
    If Assumption \ref{A1} is satisfied, for a fixed sample size $n$ and $i \in \{1,\hdots,n\}$, we have
    \begin{align*}
         \Big|\E\big[ \big\{ m_{M,a_n,n}^{(OOB)}(X_i, \bTheta_{(M)}) - m(X_i) \big\}^2  \big] - \E\big[ \big\{m_{M,a_n,n-1}(X, \bTheta_{(M)}) - m(X) \big\}^2 \big] \Big| = O\Big(\frac{1}{M}\Big).
    \end{align*}
\end{proposition*}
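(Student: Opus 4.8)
The plan is to exploit a distributional identity between an out-of-bag tree and a tree grown on one fewer observation, and then to separate, in both error terms, the \emph{infinite-forest} bias from the Monte-Carlo fluctuation induced by the finite number of trees. Fix $i$ and condition on the event $\{i \notin \Theta_\ell^{(S)}\}$ that tree $\ell$ does not resample observation $i$. Since the subsampling is without replacement, conditionally on this event $\Theta_\ell^{(S)}$ is uniform over the size-$a_n$ subsets of $\{1,\dots,n\}\setminus\{i\}$, while $(X_i,Y_i)$ is independent of $\Dn \setminus \{(X_i,Y_i)\}$ with the same law as $(X,Y)$. Hence the pair (training data seen by an out-of-bag tree, query point $X_i$) has exactly the law of (a fresh sample of $n-1$ observations, an independent test point $X$). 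Consequently the \emph{conditional} tree mean and variance $\mu_i = \E_\Theta[m_n(X_i,\Theta)\mid \Dn,\, i\notin\Theta^{(S)}]$ and $\sigma_i^2 = \V_\Theta[m_n(X_i,\Theta)\mid \Dn,\, i\notin\Theta^{(S)}]$, which are functions of $\Dn$ alone, satisfy $\E[(\mu_i-m(X_i))^2] = \E[(\mu-m(X))^2]$ and $\E[\sigma_i^2]=\E[\sigma^2]$, where $\mu,\sigma^2$ denote the analogous quantities for the forest grown on $n-1$ points and evaluated at $X$.

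Next I would carry out a conditional bias-variance decomposition. Writing $K=|\Lambda_{n,i}|$, the out-of-bag trees are, given $\Dn$ and $\{K>0\}$, conditionally i.i.d.\ with mean $\mu_i$ and variance $\sigma_i^2$, so that $\E[\{m^{(OOB)}_{M,a_n,n}(X_i,\bTheta_{(M)})-m(X_i)\}^2 \mid \Dn, K] = (\mu_i - m(X_i))^2 + \sigma_i^2/K$ on $\{K>0\}$, while the estimate is $0$ on $\{K=0\}$. Crucially, $K\sim \mathrm{Bin}(M, 1-a_n/n)$ is a function of the resampling only, hence independent of the data-measurable quantities $\mu_i,\sigma_i^2$. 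Taking expectations and using the identity above, the out-of-bag mean squared error equals
\[
\E[(\mu-m(X))^2]\,\P(K>0) + \E[\sigma^2]\,\E\!\left[\tfrac{\mathds{1}_{K>0}}{K}\right] + \E[m(X_i)^2]\,\P(K=0),
\]
whereas the same decomposition applied to the $(n-1)$-forest with its fixed $M$ trees gives $\E[(\mu-m(X))^2] + \tfrac1M\E[\sigma^2]$.

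Finally I would bound the difference term by term. The bias terms cancel up to $\E[(\mu-m(X))^2]\,\P(K=0)$, and $\P(K=0)=(a_n/n)^M$ is exponentially small in $M$; together with the $\E[m(X_i)^2]\,\P(K=0)$ contribution (finite, since $m$ is bounded on $[0,1]^p$ and the noise is sub-Gaussian, which guarantees all the relevant second moments) these are $o(1/M)$. The variance terms differ by $\E[\sigma^2]\big(\E[\mathds{1}_{K>0}/K]-1/M\big)$; using $\tfrac1K\mathds{1}_{K\geq1}\leq \tfrac{2}{K+1}$ and the closed form $\E[1/(K+1)] = (1-(a_n/n)^{M+1})/((M+1)(1-a_n/n))$, one obtains $\E[\mathds{1}_{K>0}/K]=O(1/M)$ for fixed $n$, so this difference is $O(1/M)$ as well. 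Summing the three bounds yields the claim.

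The main obstacle I anticipate is handling the random, possibly-zero number of out-of-bag trees: one must justify the conditional decomposition with a random denominator $K$, use the independence between $K$ and $\Dn$ to factorize $\E[\sigma_i^2\,\mathds{1}_{K>0}/K]=\E[\sigma^2]\,\E[\mathds{1}_{K>0}/K]$, and control this negative moment of a binomial. The matching of the bias terms is a clean consequence of the exchangeability identity, and the integrability requirements follow routinely from Assumption \ref{A1}, so the binomial negative-moment estimate and the careful conditioning on $\{K>0\}$ are where the real work lies.
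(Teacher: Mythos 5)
Your proposal is correct and follows essentially the same route as the paper's proof: the distributional identity between an out-of-bag tree evaluated at $X_i$ and a tree trained on $n-1$ fresh observations evaluated at an independent query point, a bias/variance separation whose coefficients are negative moments of the binomial $K=|\Lambda_{n,i}|$, the bound $\E[\mathds{1}_{K\geq 1}/K]\leq 2/\{(M+1)(1-a_n/n)\}$ for that negative moment, and the exponentially small correction $\P(K=0)=(a_n/n)^M$ for the empty-out-of-bag event. Your conditional bias--variance packaging is an equivalent reparametrization of the paper's coefficients $\delta_{M,n}$ and $\gamma_{M,n}$ (indeed $\E[\mathds{1}_{K>0}/K]=\gamma_{M,n}/M$ and $\P(K>0)-\E[\mathds{1}_{K>0}/K]=\tfrac{M-1}{M}\delta_{M,n}$), so the two arguments coincide term by term.
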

First observe that, by construction of the set of trees $\Lambda_{n,i}$, the out-of-bag estimate aggregates a smaller number of trees than in the standard forest: $\E[|\Lambda_{n,i}|] = (1 - a_n/n) M$ trees in average. Therefore, the errors of the out-of-bag and standard forest estimates are different quantities. 
To our knowledge, this is the first result which states the convergence of the out-of-bag error towards the forest error for any fixed sample size, with a fast rate of $1/M$. This suggests that growing a large number of trees in the forest, which is computationally possible and what is done in practice, ensures that the out-of-bag estimate provides a good approximation of the forest error.

Next, the convergence of the three versions of the MDA holds under the following Assumption \ref{A2} of the consistency of a theoretical randomized CART. Since we are interested in the random forest interpretation through the MDA, it seems natural to conduct our analysis assuming that each tree of the forest is an efficient learner, i.e., consistent.
To formalize such an assumption, we first define the variation of the regression function within a cell $A \subset [0,1]^p$ by
\begin{align*}
	\Delta(m, A) = \underset{x,x'\in  A}{\sup}|m(x) - m(x')|,
\end{align*}
and secondly, we introduce $A_k^{\star}(x, \Theta)$ the cell of the theoretical CART of depth $k$ (randomized with $\Theta$) in which the observation $x \in [0,1]^p$ falls.

\begin{assumption*} \label{A2}
The randomized theoretical CART tree built with the distribution of $(X, Y)$ is consistent, that is, for all $x \in [0,1]^p$, almost surely,
\begin{align*}
	\lim \limits_{k \to \infty} \Delta \{ m,  A_k^{\star}(x, \Theta) \} = 0.
\end{align*}
\end{assumption*}
At first glance, Assumption \ref{A2} seems quite obscure since it involves the theoretical CART. However, \citet{scornet2015consistency} show that Assumption \ref{A2} holds if the regression function is additive. Because the original CART \citep{leo1984classification} is a greedy algorithm, Assumption \ref{A2} may not always be satisfied when the regression function $m$ has interaction terms. However, it holds if the CART algorithm is slightly modified to avoid splits close to the edges of cells, and the split randomization is slightly increased to have a positive probability to split in all directions at all nodes \citep{meinshausen2006quantile, wager2018estimation}. Indeed in that case, all cells become infinitely small as the tree depth $k$ increases, and therefore Assumption \ref{A2} holds by continuity of $m$. Such modifications of CART have a negligible impact in practice on the random forest estimate since the cut threshold and the split randomization increase can be chosen arbitrarily small. Notice that such asymptotic regime is specifically analyzed in the next section.

As specified above, $a_n$ is the number of training observations subsampled without replacement to build each tree, and we define $t_n$ as the final number of terminal leaves in every tree.
Notice that we can specify $a_n$ in $m_{M,a_n,n}(x, \bTheta_{(M)})$ or $m_{a_n,n}(x, \Theta)$ when needed, but we omit it in general to avoid cumbersome notations.
In order to properly define the MDA procedures, the out-of-bag sample needs to be at least of size $2$ to enable permutations, i.e., $a_n \leq n-2$.
Finally, we need the following Assumption \ref{A3} on the asymptotic regime of the empirical forest as stated in \citet{scornet2015consistency}, which essentially controls the number of terminal leaves with respect to the sample size $n$ to enforce the random forest consistency. 

\begin{assumption*} \label{A3}
The asymptotic regime of $a_n$, the size of the subsampling without replacement, and the number of terminal leaves $t_n$ is such that $a_n \leq n-2$, $a_n/n < 1 - \kappa$ for a fixed $\kappa > 0$, $\lim \limits_{n \to \infty} a_n = \infty$, $\lim \limits_{n \to \infty} t_n = \infty$, and $\lim \limits_{n \to \infty} t_n \frac{(\log(a_n))^9}{a_n} = 0$.
\end{assumption*}

In the case of the Ishwaran-Kogalur MDA, the number of trees has to tend to infinity with the sample size to ensure convergence. To lighten notations, we drop the dependence of $M_n$ to $n$.
\begin{assumption*} \label{A4}
    The number of trees grows to infinity with the sample size $n$: $M \underset{n \to \infty}{\longrightarrow} \infty$.
\end{assumption*}

\begin{theorem*} \label{thm_MDA}
    If Assumptions \ref{A1}, \ref{A2}, and \ref{A3} are satisfied,
	then, for all $M \in \mathbb{N}^{\star}$ and $j \in \{1,\hdots,p\}$, we have
	\begin{align*} &(i) \quad  \widehat{\textrm{MDA}}_{M,n}^{(TT)}(X^{(j)}) \overset{\mathbb{L}^1}{\longrightarrow} \E[\{m(X) - m(\Xperm)\}^2] \\ &(ii) \quad \widehat{\textrm{MDA}}_{M,n}^{(BC)}(X^{(j)}) \overset{\mathbb{L}^1}{\longrightarrow} \E[\{m(X) - m(\Xperm)\}^2].
	\end{align*}
	If Assumption \ref{A4} is additionally satisfied, then
	\begin{align*}
	(iii) \quad \widehat{\textrm{MDA}}_{M,n}^{(IK)}(X^{(j)}) \overset{\mathbb{L}^1}{\longrightarrow} \E[\{m(X) - \E[m(\Xperm)|X^{(-j)}]\}^2].
	\end{align*}
\end{theorem*}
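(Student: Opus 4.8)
The plan is to treat all three statements with a common pipeline: linearize the squared-error differences, replace the forest (or tree) predictions by the true regression function via $\mathbb{L}^2$-consistency, and only then analyze how the permutation interacts with the averaging structure, which is exactly what separates the three limits. Throughout I would use $Y=m(X)+\varepsilon$ with $\varepsilon$ centered and independent of $X$ (Assumption \ref{A1}), so that for a prediction rule $\widehat{g}$ independent of the evaluation noise and a perturbed input $\widetilde{X}$,
\begin{align*}
\{Y-\widehat{g}(\widetilde{X})\}^2 - \{Y-\widehat{g}(X)\}^2 = \{m(X)-\widehat{g}(\widetilde{X})\}^2 - \{m(X)-\widehat{g}(X)\}^2 + 2\varepsilon\{\widehat{g}(X)-\widehat{g}(\widetilde{X})\},
\end{align*}
where the cross term vanishes in expectation and contributes only $O(1/n)$ to the variance after averaging. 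The replacement of predictions by $m$ rests on the $\mathbb{L}^2$-consistency of the subsampled, regularized trees of Breiman's forest established in \citet{scornet2015consistency} under Assumptions \ref{A1}--\ref{A3} (whence the fixed-$M$ forest, a convex average of $M$ such trees, is consistent too). Since the perturbed evaluation points have a density equal to the product of the marginals of $X^{(-j)}$ and $X^{(j)}$, hence bounded above and below on $[0,1]^p$, the standard consistency for an evaluation point distributed as $X$ transfers to an $\Xperm$-distributed point through a bounded likelihood-ratio argument. Finally, to upgrade convergence of expectations to $\mathbb{L}^1$ convergence to a constant $c$, I would establish $\E[\widehat{\textrm{MDA}}]\to c$ together with $\V[\widehat{\textrm{MDA}}]\to 0$, using boundedness of $m$, finite moments of tree predictions, and the mild dependence induced by a single uniform permutation.

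For (i) and (ii), the point is that the permuted input is compared to the signal \emph{before} any averaging over trees, so the limit retains $m(\Xperm)$. For the Train/Test MDA the test sample is independent of the forest; after linearization and the consistency replacement, the estimator is $\mathbb{L}^1$-close to $\frac{1}{n}\sum_i\{m(X'_i)-m(\Xpermi')\}^2$, whose mean equals exactly $(1-1/n)\,\E[\{m(X)-m(\Xperm)\}^2]$: conditionally on $\{\pi(i)=k\neq i\}$ the pair $(X_i^{(-j)},X_k^{(j)})$ is \emph{exactly} distributed as $\Xperm$ by independence of the observations, while the diagonal event $\{\pi(i)=i\}$ has probability $1/n$ and contributes zero. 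For the Breiman--Cutler MDA I would condition on $\{i\notin\Theta_\ell^{(S)}\}$, under which $(X_i,Y_i)$ is independent of the tree $m_n(\cdot,\Theta_\ell)$ (the subsampling acts on indices, not values), so each tree's inner average behaves like a single-tree Train/Test MDA over its out-of-bag sample of size $N_{n,\ell}\to\infty$; tree consistency and the same permutation computation give the common limit $\E[\{m(X)-m(\Xperm)\}^2]$ for every tree, preserved by averaging over the fixed number $M$ of trees.

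The core is (iii), where predictions are averaged over trees \emph{before} squaring, so the permutation is integrated out. I would split $\widehat{\textrm{MDA}}_{M,n}^{(IK)}(X^{(j)})$ into the permuted term $T_1$ and the plain out-of-bag term $T_2$. For $T_2$, Proposition \ref{prop_oob_risk} matches the out-of-bag forest error with the forest generalization error up to $O(1/M)$, which tends to zero by consistency, so $T_2\to\E[\varepsilon^2]$. For $T_1$ the key step is to show that, as $M\to\infty$ (Assumption \ref{A4}) and $n\to\infty$,
\begin{align*}
m_{M,n,\pi_{j}}^{(OOB)}(X_i,\bTheta_{(M)}) = \frac{1}{|\Lambda_{n,i}|}\sum_{\ell\in\Lambda_{n,i}} m_n(X_{i,\pi_{j\ell}},\Theta_\ell) \longrightarrow \E[m(\Xperm)\mid X^{(-j)}=X_i^{(-j)}].
\end{align*}
This holds because the trees in $\Lambda_{n,i}$ are i.i.d.\ given the data and out-of-bag membership and each permutation $\pi_{j\ell}$ is independent, so a law of large numbers over the $|\Lambda_{n,i}|\to\infty$ trees, together with tree consistency and the convergence of the empirical marginal of $X^{(j)}$ to the true marginal, replaces $\frac{1}{|\Lambda_{n,i}|}\sum_\ell m(X_{i,\pi_{j\ell}})$ by the conditional expectation. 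Since $m_{M,n}^{(OOB)}(X_i,\bTheta_{(M)})\to m(X_i)$ by consistency, linearization yields $T_1\to\E[\{m(X)-\E[m(\Xperm)\mid X^{(-j)}]\}^2]+\E[\varepsilon^2]$, and $T_1-T_2$ converges to the announced limit.

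I expect the main obstacle to be the rigorous treatment of the coupled double limit in (iii). One cannot let $M\to\infty$ first and then $n\to\infty$, because $M$ is tied to $n$ through Assumption \ref{A4} and the tree predictions only approximate $m$ with a nonzero finite-$n$ error; I must therefore show these errors do not accumulate when averaged over the random, correlated collection of trees indexed by $\Lambda_{n,i}$, and that the empirical average over independent permutations concentrates to the conditional expectation uniformly enough in $n$. This calls for combining the $1/M$ rate of Proposition \ref{prop_oob_risk} with the $\mathbb{L}^2$-consistency rate, carefully handling the random denominators $|\Lambda_{n,i}|$ and $N_{M,n}$ (which concentrate around $(1-a_n/n)M$ and $n$), and verifying that $\E[m(\Xperm)\mid X^{(-j)}]$ is well approximated despite the permutation drawing from the out-of-bag empirical distribution rather than the exact marginal.
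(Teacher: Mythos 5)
Your proposal is correct and follows essentially the same route as the paper's proof: the same linearization of the squared-error differences, the same extension of the $\mathbb{L}^2$-consistency of \citet{scornet2015consistency} to $\Xperm$-distributed query points via a bounded density-ratio argument, the same per-tree conditioning on out-of-bag membership for the Breiman--Cutler case, and, for the Ishwaran--Kogalur case, the same key mechanism whereby the independent per-tree permutations are averaged out as $M \to \infty$ into the conditional expectation $\E[m(\Xperm)\mid X^{(-j)}]$, with the random quantities $|\Lambda_{n,i}|$ and $N_{M,n}$ controlled through second-moment bounds. The technical obstacles you flag at the end (the coupled limit in $M$ and $n$, the $1/|\Lambda_{n,i}|$ moments, permutation coincidences such as $\pi_{j1}=\pi_{j2}$ or fixed points) are exactly what the paper's supporting lemmas resolve, via explicit finite-$n$, finite-$M$ bounds rather than any iterated limit, and with the observation that conditionally on a non-coincident permutation index the permuted coordinate is an exact draw from the true marginal, independent of the tree's training data.
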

Theorem~\ref{thm_MDA} reveals that the theoretical MDA counterparts are not identical across the different MDA definitions. Thus, covariates are ranked according to different criteria depending on the MDA version involved. We deepen this discussion in the following subsection.

\subsection{MDA Analysis}

The theoretical counterparts of the MDA established in Theorem \ref{thm_MDA} are hard to interpret since $\Xperm$ has a different distribution from the original covariate vector $X$ whenever components of $X$ are dependent. These different MDA versions are widely used in practice to assess the variable importance of random forests, but the relevance of such analyses completely relies on the ranking criteria $\E[\{m(X) - m(\Xperm)\}^2]$ or $\smash{\E[\{m(X) - \E[m(\Xperm)|X^{(-j)}]\}^2]}$, according to Theorem~\ref{thm_MDA}. It is possible to deepen the discussion, observing that $X$ and $\Xperm$ are independent conditionally on $\smash{X^{(-j)}}$ by construction. It enables to break down the MDA limit using Sobol indices that are well-defined quantity to measure the contribution of a covariate to the response variance.

\begin{definition*}[Total Sobol Index]
	The total Sobol index of covariate $X^{(j)}$ \citep{sobol1993sensitivity, saltelli2002making} gives the proportion of explained response variance lost when $X^{(j)}$ is removed from the model, that is
	\begin{align*}
	    ST^{(j)} = \frac{\E\{\V(m(X) \mid X^{(-j)})\}}{\mathbb{V}(Y)} = \frac{\V\{m(X)\} - \V\{m^{(-j)}(X^{(-j)})\}}{\mathbb{V}(Y)},
	\end{align*}
	where $m^{(-j)}(X^{(-j)}) \overset{def}{=} \E\{m(X) \mid X^{(-j)}\}$. Notice that $ST^{(j)}$ is also called the independent total Sobol index in \citet{kucherenko2012estimation} and \citet{benoumechiara:tel-02293846}.
\end{definition*}
We also introduce a new sensitivity index: the total Sobol index computed for the input vector $\Xperm$. We call it the marginal total Sobol index, since the distribution of $\Xperm$ is the product of the marginal distributions of $\smash{X^{(j)}}$ and $\smash{X^{(-j)}}$. It can take high values even when $\smash{X^{(j)}}$ is strongly correlated with other covariates, as opposed to the original total Sobol index. We derive the main properties of this new sensitivity index below, proved in the Supplementary Material.
\begin{definition*}[Marginal Total Sobol Index]	
	The marginal total Sobol index of covariate $X^{(j)}$ is defined by
	\begin{align*}
	    ST^{(j)}_{mg} = \frac{\E\{\V(m(\Xperm) \mid X^{(-j)})\}}{\mathbb{V}(Y)}.
	\end{align*}
\end{definition*}

\begin{property*}[Marginal Total Sobol Index] \label{property_Smg}
    If Assumption \ref{A1} is satisfied, the marginal total Sobol index $ST^{(j)}_{mg}$ satisfies the following properties. \vspace*{-2mm}
    \begin{enumerate}[(a)]
        \item $ST^{(j)}_{mg} = 0 \iff ST^{(j)} = 0$.
        \item If the components of $X$ are independent, then we have $ST^{(j)}_{mg} = ST^{(j)}$.
        \item If $m$ is additive, i.e. $m(X) = \sum_{k} m_k(X^{(k)})$, then we have $ST^{(j)}_{mg} = \V\{m_j(X^{(j)})\}/\V[Y]$, and $ST^{(j)}_{mg} \geq ST^{(j)}$.
    \end{enumerate}
\end{property*}

Notice that the last property states that $ST^{(j)}_{mg} \geq ST^{(j)}$ for additive regression functions, which may also hold in the general case with interactions. However, such an extension is out of the scope of the article. We also mention that total Sobol indices can be generalized to a group of covariates.
It is now possible to break down the MDA limits as the sum of positive terms using total Sobol indices and the following quantity $\smash{\textrm{MDA}^{\star (j)}_3}$, further discussed below and defined as
\begin{align*}
    \textrm{MDA}_3^{\star (j)} = \E[\{\E(m(X) \mid X^{(-j)}) - \E(m(\Xperm) \mid X^{(-j)})\}^2].
\end{align*}
\begin{proposition*} \label{prop_MDA}
    If Assumptions \ref{A1}, \ref{A2} and \ref{A3} are satisfied, then for all $M \in \mathbb{N}^{\star}$ and $j \in \{1,\hdots,p\}$, we have
	\begin{align*} &(i) \quad  \widehat{\textrm{MDA}}_{M,n}^{(TT)}(X^{(j)}) \overset{\mathbb{L}^1}{\longrightarrow} \V[Y] \times ST^{(j)} + \V[Y] \times ST^{(j)}_{mg} + \textrm{MDA}_3^{ \star (j)} \\ &(ii) \quad \widehat{\textrm{MDA}}_{M,n}^{(BC)}(X^{(j)}) \overset{\mathbb{L}^1}{\longrightarrow} \V[Y] \times ST^{(j)} + \V[Y] \times ST^{(j)}_{mg} + \textrm{MDA}_3^{ \star (j)}.
	\end{align*}
	If Assumption \ref{A4} is additionally satisfied, then
	\begin{align*}
	(iii) \quad \widehat{\textrm{MDA}}_{M,n}^{(IK)}(X^{(j)}) \overset{\mathbb{L}^1}{\longrightarrow} \V[Y] \times ST^{(j)} + \textrm{MDA}_3^{ \star (j)}.
	\end{align*}
\end{proposition*}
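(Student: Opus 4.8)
The plan is to treat Theorem~\ref{thm_MDA} as a black box supplying the three limits, and then to recognize Proposition~\ref{prop_MDA} as a purely algebraic rewriting of those limits obtained by conditioning on $\smash{X^{(-j)}}$. The single structural fact that drives everything is the one emphasized just before the statement: by construction $X$ and $\Xperm$ share the same $\smash{X^{(-j)}}$ component and are independent conditionally on $\smash{X^{(-j)}}$, because the replaced coordinate of $\Xperm$ is an independent copy of $\smash{X^{(j)}}$. I would record the two shorthands $f(X^{(-j)}) = \E[m(X) \mid X^{(-j)}]$ and $g(X^{(-j)}) = \E[m(\Xperm) \mid X^{(-j)}]$, so that $\textrm{MDA}_3^{\star(j)} = \E[\{f(X^{(-j)}) - g(X^{(-j)})\}^2]$ holds by definition.

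For parts (i) and (ii), the goal is to show
\[
\E[\{m(X) - m(\Xperm)\}^2] = \E\{\V(m(X) \mid X^{(-j)})\} + \E\{\V(m(\Xperm) \mid X^{(-j)})\} + \textrm{MDA}_3^{\star(j)}.
\]
The main step is to condition on $\smash{X^{(-j)}}$ and expand the square. The cross term $\E[m(X)\, m(\Xperm) \mid X^{(-j)}]$ factors as $f(X^{(-j)})\, g(X^{(-j)})$ precisely because of the conditional independence above, while the two quadratic terms are rewritten via the identity $\E[W^2 \mid X^{(-j)}] = \V(W \mid X^{(-j)}) + \E[W \mid X^{(-j)}]^2$. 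Collecting the pieces yields, conditionally on $\smash{X^{(-j)}}$, the exact expression $\V(m(X) \mid X^{(-j)}) + \V(m(\Xperm) \mid X^{(-j)}) + \{f(X^{(-j)}) - g(X^{(-j)})\}^2$. Taking the outer expectation and invoking the definitions of $ST^{(j)}$ and $ST^{(j)}_{mg}$, which give $\E\{\V(m(X) \mid X^{(-j)})\} = \V[Y]\, ST^{(j)}$ and $\E\{\V(m(\Xperm) \mid X^{(-j)})\} = \V[Y]\, ST^{(j)}_{mg}$, closes these two cases.

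Part (iii) follows the same recipe but is shorter. The target limit is $\E[\{m(X) - g(X^{(-j)})\}^2]$, and since $g(X^{(-j)})$ is already $\smash{X^{(-j)}}$-measurable there is no genuine cross term to handle: conditioning on $\smash{X^{(-j)}}$ and applying the same variance identity directly gives $\V(m(X) \mid X^{(-j)}) + \{f(X^{(-j)}) - g(X^{(-j)})\}^2$, whose expectation is $\V[Y]\, ST^{(j)} + \textrm{MDA}_3^{\star(j)}$. This also explains transparently why the Ishwaran-Kogalur limit drops the $ST^{(j)}_{mg}$ term: replacing $m(\Xperm)$ by its conditional mean $g(X^{(-j)})$ suppresses exactly the conditional-variance contribution $\E\{\V(m(\Xperm) \mid X^{(-j)})\}$.

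I do not expect a serious obstacle, since once Theorem~\ref{thm_MDA} is granted the argument is elementary conditioning built on the law of total variance. The one point deserving care is the factorization of the cross term: it is valid only because $X$ and $\Xperm$ are conditionally independent given $\smash{X^{(-j)}}$, so I would state that fact explicitly and check that the independent-copy construction of $\Xperm$ indeed makes its $j$-th coordinate independent of $\smash{X^{(-j)}}$ while leaving $\smash{X^{(-j)}}$ itself unchanged. Everything else is substitution of the two Sobol-index definitions.
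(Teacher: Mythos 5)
Your proposal is correct and follows essentially the same route as the paper's own proof: invoke Theorem~\ref{thm_MDA} for the three $\mathbb{L}^1$ limits, then condition on $X^{(-j)}$, use the conditional independence of $X$ and $\Xperm$ given $X^{(-j)}$ to kill (or factor) the cross terms, and identify the resulting conditional variances with $\V[Y]\,ST^{(j)}$ and $\V[Y]\,ST^{(j)}_{mg}$. The only difference is cosmetic — the paper centers each of $m(X)$ and $m(\Xperm)$ at its conditional mean before expanding, whereas you expand the square directly and complete it afterwards — so the two arguments coincide.
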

Importantly, each term of the decompositions of Proposition \ref{prop_MDA} is positive, and can be interpreted alone. We denote $\smash{\textrm{MDA}_1^{ \star (j)} = \V[Y] \times ST^{(j)}}$ and $\smash{\textrm{MDA}_2^{ \star (j)} = \V[Y] \times ST_{mg}^{(j)}}$.

$\smash{\textrm{MDA}_1^{ \star (j)}}$ is the non-normalized total Sobol index that has a straightforward interpretation: the amount of explained output variance lost when $\smash{X^{(j)}}$ is removed from the model. This quantity is really the information one is looking for when computing the MDA for objective (i) of finding a small group of the most predictive covariates.

$\smash{\textrm{MDA}_2^{ \star (j)}}$ is the non-normalized marginal total Sobol index. Its interpretation is more difficult.
Intuitively, in the case of $\smash{\textrm{MDA}_1^{ \star (j)}}$, contributions due to the dependence between $X^{(j)}$ and $X^{(-j)}$ are excluded because of the conditioning on $X^{(-j)}$. For $\smash{\textrm{MDA}_2^{ \star (j)}}$, this dependence is ignored, and therefore such removal does not take place.
For example, if $X^{(j)}$ has a strong influence on the regression function but is highly correlated with other covariates, then $\smash{\textrm{MDA}_1^{ \star (j)}}$ is small, whereas $\smash{\textrm{MDA}_2^{ \star (j)}}$ is high. For objective (i), one wants to keep only one covariate of a group of highly influential and correlated inputs, and therefore $\smash{ST^{(j)}_{mg}}$ can be a misleading component.

$\smash{\textrm{MDA}_3^{ \star (j)}}$ is not a known measure of importance, and seems to have no clear interpretation: it measures how the permutation shifts the average of $m$ over the $j$-th covariate, and thus characterizes the structure of $m$ and the dependence of $X$ combined.
$\smash{\textrm{MDA}_3^{ \star (j)}}$ is null if covariates are independent.  The value of $\smash{\textrm{MDA}_3^{ \star (j)}}$ increases with dependence, and this effect can be amplified by interactions between covariates.

Overall, all MDA definitions are misleading with respect to both objectives $(i)$ and $(ii)$ since they include $\smash{\textrm{MDA}_3^{ \star (j)}}$ in their theoretical counterparts. In the Supplementary Material, we provide an analytical example to show how the MDA can fail to detect relevant covariates when the data has both dependence and interactions.
From a practical perspective, it is only possible to conclude in general that the Breiman-Cutler MDA or Ishwaran-Kogalur MDA should be used rather than the Train/Test-MDA. Indeed, on the one hand we only have access to one finite sample $\Dn$ in practice, which has to be split in two parts to use the Train/Test-MDA, hurting the forest accuracy. On the other hand, it is possible to grow many trees at a reasonable linear computational cost, and Proposition \ref{prop_oob_risk} ensures that the out-of-bag estimate is efficient in this case.
With additional assumptions on the data distribution, the Breiman-Cutler MDA and the Ishwaran-Kogalur MDA recover meaningful theoretical counterparts. 

\begin{corollary*} \label{cor_MDA_indep}
If covariates are independent, and if Assumptions \ref{A1}-\ref{A3} are satisfied, for all $M \in \mathbb{N}^{\star}$ and $j \in \{1,\hdots,p\}$, we have 
\begin{align*}
\widehat{\textrm{MDA}}_{M,n}^{(TT)}(X^{(j)})& \overset{\mathbb{L}^1}{\longrightarrow} 2\V[Y] \times ST^{(j)} \quad \textrm{and} \quad 
\widehat{\textrm{MDA}}_{M,n}^{(BC)}(X^{(j)}) \overset{\mathbb{L}^1}{\longrightarrow} 2\V[Y] \times ST^{(j)}.
\end{align*}
In addition, if Assumption \ref{A4} is satisfied,
\begin{align*}
\widehat{\textrm{MDA}}_{M,n}^{(IK)}(X^{(j)})& \overset{\mathbb{L}^1}{\longrightarrow} \V[Y] \times ST^{(j)}.
\end{align*}
\end{corollary*}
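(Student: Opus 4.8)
The plan is to build directly on the decomposition already obtained in Proposition \ref{prop_MDA}, which expresses each MDA limit as a sum of the three nonnegative terms $\V[Y]\, ST^{(j)}$, $\V[Y]\, ST^{(j)}_{mg}$, and $\textrm{MDA}_3^{\star(j)}$. Since Corollary \ref{cor_MDA_indep} merely specializes those limits to the independent case, no new convergence argument is needed: the $\mathbb{L}^1$ convergence is inherited verbatim from Proposition \ref{prop_MDA}, and the task reduces to simplifying the right-hand sides when the components of $X$ are independent. First, for the marginal total Sobol index, I would invoke Property \ref{property_Smg}(b), which states precisely that under independence $ST^{(j)}_{mg} = ST^{(j)}$. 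This collapses the first two terms of the Train/Test and Breiman-Cutler limits into $2\V[Y]\, ST^{(j)}$.

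The only genuine computation is to show that the third term $\textrm{MDA}_3^{\star(j)} = \E[\{\E(m(X)\mid X^{(-j)}) - \E(m(\Xperm)\mid X^{(-j)})\}^2]$ vanishes under independence. The key observation is that $\Xperm$ is obtained from $X$ by replacing $X^{(j)}$ with an independent copy while leaving $X^{(-j)}$ untouched, so that conditionally on $X^{(-j)}$ the $j$-th coordinates of $X$ and of $\Xperm$ share the same law. When $X^{(j)}$ is independent of $X^{(-j)}$, that common conditional law is simply the marginal distribution $\mu_j$ of $X^{(j)}$; consequently both $\E(m(X)\mid X^{(-j)})$ and $\E(m(\Xperm)\mid X^{(-j)})$ equal $\int m(X^{(-j)}, u)\, d\mu_j(u)$. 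Their difference is therefore almost surely zero, whence $\textrm{MDA}_3^{\star(j)} = 0$.

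Substituting these two simplifications back into the three statements of Proposition \ref{prop_MDA} yields the claimed limits: $2\V[Y]\, ST^{(j)}$ for the Train/Test and Breiman-Cutler MDA, and $\V[Y]\, ST^{(j)}$ for the Ishwaran-Kogalur MDA (the latter additionally requiring Assumption \ref{A4}, already present in the corresponding line of Proposition \ref{prop_MDA}). I do not anticipate a real obstacle here: the corollary is essentially a bookkeeping exercise once Proposition \ref{prop_MDA} and Property \ref{property_Smg} are in hand, and the only point requiring a short justification is the vanishing of $\textrm{MDA}_3^{\star(j)}$, which is immediate from the fact that the permuted coordinate is an independent copy drawn from the same marginal as the original one.
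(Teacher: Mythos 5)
Your proof is correct and is exactly the argument the paper intends: the paper gives no standalone proof of this corollary, treating it as an immediate specialization of Proposition \ref{prop_MDA} obtained by combining Property \ref{property_Smg}-(b) (so that $ST^{(j)}_{mg} = ST^{(j)}$) with the vanishing of $\textrm{MDA}_3^{\star (j)}$ under independence, which is precisely your route. One small wording caveat: your sentence asserting that, conditionally on $X^{(-j)}$, the $j$-th coordinates of $X$ and $\Xperm$ share the same law is not a consequence of the construction alone (the copy in $\Xperm$ always has the marginal law, while the coordinate of $X$ has the conditional law), but since you then invoke independence to identify both with the marginal $\mu_j$, the argument as a whole is sound.
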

Thus, Corollary \ref{cor_MDA_indep} states that when covariates are independent, all MDA versions estimate the same quantity, the unnormalized total Sobol index (up to a factor $2$), as stated in \citet{gregorutti:tel-01146830}. However, since the Train/Test-MDA is based on a portion of the training sample, the Breiman-Cutler MDA on the accuracy of a single tree, and the Ishwaran-Kogalur MDA on the accuracy of the forest, the Ishwaran-Kogalur MDA appears to be a more efficient estimate than the two others in this independent setting.
Also notice that in the case of independent covariates, the total Sobol index is a relevant measure for both objectives (i) and (ii).
Interestingly, when covariates are dependent but without interactions, all MDA versions then estimate the marginal total Sobol index, as stated in the following Corollary.

\begin{corollary*} \label{cor_MDA_additive}
If the regression function $m$ is additive, and if Assumptions \ref{A1}-\ref{A3} are satisfied, for all $M \in \mathbb{N}^{\star}$ and $j \in \{1,\hdots,p\}$, we have 
\begin{align*}
\widehat{\textrm{MDA}}_{M,n}^{(TT)}(X^{(j)})& \overset{\mathbb{L}^1}{\longrightarrow} 2 \V[Y] \times ST_{mg}^{(j)} \quad \textrm{and} \quad 
\widehat{\textrm{MDA}}_{M,n}^{(BC)}(X^{(j)}) \overset{\mathbb{L}^1}{\longrightarrow} 2 \V[Y] \times ST_{mg}^{(j)}.
\end{align*}
In addition, if Assumption \ref{A4} is satisfied,
\begin{align*}
\widehat{\textrm{MDA}}_{M,n}^{(IK)}(X^{(j)})& \overset{\mathbb{L}^1}{\longrightarrow} \V[Y] \times ST_{mg}^{(j)}.
\end{align*}
\end{corollary*}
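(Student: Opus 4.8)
The plan is to derive Corollary \ref{cor_MDA_additive} as a purely algebraic consequence of Proposition \ref{prop_MDA}, specialized to an additive regression function $m(X) = \sum_k m_k(X^{(k)})$. The three MDA limits in Proposition \ref{prop_MDA} all share the combination $\V[Y] \times ST^{(j)} + \textrm{MDA}_3^{\star (j)}$, so the entire content of the corollary reduces to establishing the single identity
\begin{align*}
\V[Y] \times ST^{(j)} + \textrm{MDA}_3^{\star (j)} = \V[Y] \times ST^{(j)}_{mg}
\end{align*}
under additivity. Granting this, the Train/Test and Breiman--Cutler limits collapse to $\V[Y] \times ST^{(j)}_{mg} + \V[Y] \times ST^{(j)}_{mg} = 2\V[Y] \times ST^{(j)}_{mg}$, and the Ishwaran--Kogalur limit becomes $\V[Y] \times ST^{(j)}_{mg}$, which are exactly the three claimed limits.

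To prove the identity, I would compute each ingredient under additivity, exploiting the structural fact that conditioning on $X^{(-j)}$ freezes every summand $m_k(X^{(k)})$ with $k \neq j$. Thus $\V(m(X) \mid X^{(-j)}) = \V(m_j(X^{(j)}) \mid X^{(-j)})$, which gives $\V[Y] \times ST^{(j)} = \E[\V(m_j(X^{(j)}) \mid X^{(-j)})]$. For the third term, the summands with $k \neq j$ cancel in the difference $\E(m(X) \mid X^{(-j)}) - \E(m(\Xperm) \mid X^{(-j)})$; since the independent copy injected into $\Xperm$ is independent of $X^{(-j)}$ yet keeps the marginal law of $X^{(j)}$, its conditional mean is the constant $\E[m_j(X^{(j)})]$, so the difference equals $\E(m_j(X^{(j)}) \mid X^{(-j)}) - \E[m_j(X^{(j)})]$ and hence $\textrm{MDA}_3^{\star (j)} = \V(\E[m_j(X^{(j)}) \mid X^{(-j)}])$.

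The final step glues these two expressions together via the law of total variance applied to $m_j(X^{(j)})$:
\begin{align*}
\V(m_j(X^{(j)})) = \E[\V(m_j(X^{(j)}) \mid X^{(-j)})] + \V(\E[m_j(X^{(j)}) \mid X^{(-j)}]).
\end{align*}
The right-hand side is precisely $\V[Y] \times ST^{(j)} + \textrm{MDA}_3^{\star (j)}$ by the previous paragraph, while the left-hand side equals $\V[Y] \times ST^{(j)}_{mg}$ by Property \ref{property_Smg}(c), which itself records that for additive $m$ the marginal index reduces to $\V\{m_j(X^{(j)})\}/\V[Y]$. This establishes the identity and therefore the corollary.

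I do not anticipate a genuine obstacle, since Proposition \ref{prop_MDA} and Property \ref{property_Smg} already carry all the analytic weight and the remainder is the bookkeeping above. The only point requiring care is to track which variable each conditional expectation acts on, and in particular to use consistently that the copy injected in $\Xperm$ is independent of $X^{(-j)}$ while retaining the marginal distribution of $X^{(j)}$; it is precisely this independence that turns its conditional mean into a constant and makes the three-term decomposition telescope.
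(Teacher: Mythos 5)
Your proof is correct, and it takes a recognizably different algebraic route from the paper's. The paper proves the corollary directly from Theorem \ref{thm_MDA}: under additivity the raw limits telescope, since $m(\bX) - m(\Xperm) = m_j(X^{(j)}) - m_j(X'^{(j)})$ with $X'^{(j)}$ an independent copy of $X^{(j)}$, giving $\E[\{m(\bX)-m(\Xperm)\}^2] = 2\V[m_j(X^{(j)})]$ for the TT/BC limits and $\E[\{m(\bX) - \E[m(\Xperm)\mid \bX^{(-j)}]\}^2] = \V[m_j(X^{(j)})]$ for the IK limit, and then identifies $\V[m_j(X^{(j)})] = \V[Y]\, ST^{(j)}_{mg}$ by an inline computation (the same one as in Property \ref{property_Smg}-(c)). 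You instead start from the decomposed limits of Proposition \ref{prop_MDA} and reduce everything to the single identity $\V[Y]\, ST^{(j)} + \textrm{MDA}_3^{\star(j)} = \V[Y]\, ST^{(j)}_{mg}$ under additivity, which you verify by computing $\V[Y]\,ST^{(j)} = \E[\V(m_j(X^{(j)}) \mid X^{(-j)})]$ and $\textrm{MDA}_3^{\star(j)} = \V(\E[m_j(X^{(j)}) \mid X^{(-j)}])$ and gluing them with the law of total variance; all three steps check out, including the key observation that the injected copy is independent of $X^{(-j)}$ so its conditional mean is the constant $\E[m_j(X^{(j)})]$. Both arguments carry the same analytic weight (everything downstream of Theorem \ref{thm_MDA}), but they buy different things: the paper's computation is more self-contained and does not require the three-term decomposition at all, whereas yours makes explicit a structural fact the paper leaves implicit, namely that in the additive case the parasitic term $\textrm{MDA}_3^{\star(j)}$ equals exactly $\V[Y](ST^{(j)}_{mg} - ST^{(j)})$, which simultaneously explains why the factor $2$ appears for the TT/BC versions and why the dependence-induced inflation of the MDA is precisely the between-group variance of $m_j$.
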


In this correlated and additive setting, the MDA versions now estimate the marginal total Sobol index, which takes the simple form stated in Property \ref{property_Smg}-(c), but is difficult to estimate with a finite sample because of dependence. The MDA is thus quite relevant for objective (ii) of ranking all variables for interpretation: while contributions due to the dependence between covariates are removed in the total Sobol index, it is not the case here. Also notice that covariates with no influence in the regression function are excluded. If we further assume that the regression function is linear, the MDA limits can be written with the linear coefficients and the input variances as stated in \citet{gregorutti2015grouped, hooker2019please}.

\begin{remark*}[Distribution Support] \label{rmk_dist}
    Our asymptotic analysis relies on Assumption \ref{A1}, which states that the support of the covariate distribution $X$ is a hypercube. Without such geometrical assumption, the support of $\Xperm$ may differ from the support of $X$ in the dependent case. It means that the random forest estimate may be applied on regions  with no training samples, resulting in inconsistent forest and MDA estimates, and then in a low predictive accuracy \citep{hooker2019please}. This is an additional source of confusion of the MDA when inputs are dependent, induced by the permutation trick. 
\end{remark*}

\section{Sobol-MDA} \label{sec_sobol}

\subsection{Objectives}
When covariates are dependent, the MDA fails to estimate the total Sobol index, which is our true objective to solve problem (i) of finding a small group of the most predictive covariates, as shown in Section \ref{sec_MDA}.
Therefore, we introduce an improved MDA procedure for random forests: the Sobol-MDA, that consistently estimates the total Sobol index even when covariates are dependent and have interactions.
The Sobol-MDA is able to identify the less relevant covariates, as the total Sobol index is the proportion of response explained variance lost when a given covariate is removed from the model. For example, if two influential variables are strongly correlated, the Sobol-MDA takes small values for these two variables, since the model accuracy does not decrease much when one of them is removed. Therefore, a recursive feature elimination procedure based on the Sobol-MDA is highly efficient for our objective (i) of selecting a small number of covariates while maximizing predictive accuracy.
Notice that training a random forest without the covariate of interest would also enable to get an estimate of the total Sobol index, and is the approach taken by \citet{williamson2020unified}. However, the Sobol-MDA only requires to perform forest predictions, which is computationally faster than the forest growing, and scales with the dimension $p$ as opposed to this brute force approach from \citet{williamson2020unified}. Similarly, \citet{mentchquantifying2016} detect influential covariates with hypothesis tests based on the asymptotic normality of random forests and a model retrain without the considered covariate. However, this approach is only valid in specific forest settings \citep{pengEJS1958}, which considerably reduce the accuracy of tree ensembles compared to Breiman's algorithm, and therefore the ability to identify influential covariates. It is also possible to estimate total Sobol indices with existing algorithms which are not specific to random forests. Indeed, this type of methods only requires a black-box estimate to generate predictions from given values of the covariates. Initially, \citet{mara2015non} introduce Monte-Carlo algorithms for the estimation of total Sobol indices in a dependent setting. The first step of the method is to generate a sample from the conditional distributions of the covariates. However, in our setting defined in Assumption \ref{A1}, we do not have access to these conditional distributions, and their estimation is a difficult problem when only a limited sample $\Dn$ is available. Consequently, the approach of \citet{mara2015non} is not really appropriate for our setting.     
Notice that the promising approach from \citet{candes2016panning} to detect relevant covariates also requires to sample from the conditional distributions of the covariates, and is therefore not adapted to our problem as well. In the sequel, we describe the Sobol-MDA algorithm, along with its main properties.

\subsection{Sobol-MDA Algorithm}

The key feature of the original MDA procedures is to permute the values of the $j$-th covariate to break its relation to the response, and then compute the degraded accuracy of the forest. Observe that this is strictly equivalent to drop the original dataset down each tree of the forest, but when a sample hits a split involving covariate $j$, it is randomly sent to the left or right side with a probability equal to the proportion of points in each child node. 
This fact highlights that the goal of the MDA is simply to perturb the tree prediction process to cancel out the splits on covariate $j$. 
Besides, notice that this point of view on the MDA procedure (using the original dataset and noisy trees) is introduced by \citet{ishwaran2007variable} to conduct a theoretical analysis of a modified version of the MDA. 
Here, our Sobol-MDA algorithm builds on the same principle of ignoring splits on covariate $j$, such that the noisy CART tree predicts $m^{(-j)}(X^{(-j)}) = \E[m(X)|X^{(-j)}]$, similarly to the tree that is rebuilt by removing $X^{(j)}$ from the training data. It enables to recover the proper theoretical counterpart: the unnormalized total Sobol index, i.e., $\smash{\E[\V(m(X)|X^{(-j)})]}$.
To achieve this, we leave aside the permutation trick, and use another approach to cancel out a given covariate $j$ in the tree prediction process: the partition of the covariate space obtained with the terminal leaves of the original tree is projected along the $j$-th direction, as shown in Figure \ref{fig_proj_CART}, and the outputs of the cells of this new projected partition are recomputed with the training data. From an algorithmic point of view, this procedure is quite straightforward as we will see below, and enables to get rid of covariate $X^{(j)}$ in the tree estimate. Then, it is possible to compute the accuracy of the associated out-of-bag projected forest estimate, subtract it from the original accuracy, and normalize the obtained difference by $\V[Y]$ to obtain the Sobol-MDA for $X^{(j)}$.

Interestingly, to compute SHAP values for tree ensembles, \citet{lundberg2018consistent} also introduce an algorithm to modify the CART predictions to estimate $\smash{m^{(-j)}(X^{(-j)})}$. More precisely, they propose the following recursive algorithm: the observation $x$ is dropped down the tree, but when a split on covariate $j$ is hit, $x$ is sent to both the left and right children nodes. Then, $x$ falls in multiple terminal cells of the tree. The final prediction is the weighted average of the cell outputs, where the weight associated to a terminal leave $A$ is given by an estimate of $\P(X \in A | X^{(-j)} = x^{(-j)})$: the product of the empirical probabilities to choose the side that leads to A at each split on covariate $j$ in the path of the original tree. At first sight, their approach seems suited to estimate total Sobol indices, but unfortunately, the weights are properly estimated by such procedure only if the covariates are independent.
Therefore, as highlighted in \citet{aas2019explaining}, this algorithm gives biased predictions in a correlated setting.

We improve over \citet{lundberg2018consistent} with the Projected-CART algorithm, formalized in Algorithm $3$ in the Supplementary Material: both training and out-of-bag samples are dropped down the tree and sent on both right and left children nodes when a split on covariate $j$ is met. Again, each observation may belong to multiple cells at each level of the tree. For each out-of-bag sample, the associated prediction is the output average over all training observations that belong to the same collection of terminal leaves. In other words, we compute the intersection of these terminal leaves to select the training observations belonging to every cell of this collection to estimate the prediction. This intersection gives the projected cell.
Overall, this mechanism is equivalent to projecting the tree partition on the subspace span by $X^{(-j)}$, as illustrated in Figure \ref{fig_proj_CART} for $p = 2$ and $j = 2$. Recall that $A_n(X, \Theta)$ is the cell of the original tree partition where $X$ falls, whereas the associated cell of the projected partition is denoted $\smash{A_n^{(-j)}(X^{(-j)}, \Theta)}$.
Formally, we respectively denote the associated projected tree and projected out-of-bag forest estimates as $\smash{m_n^{(-j)}(X^{(-j)}, \Theta)}$ and $\smash{m_{M,n}^{(-j, OOB)}(X_i^{(-j)}, \bTheta_{(M)})}$, respectively defined by
\begin{align*}
    m_n^{(-j)}(X^{(-j)}, \Theta) &= \frac{ \sum_{i=1}^{a_n} Y_i \mathds{1}_{X_i \in A_n^{(-j)}(X^{(-j)}, \Theta)}}{\sum_{i=1}^{a_n} \mathds{1}_{X_i \in A_n^{(-j)}(X^{(-j)}, \Theta)}}, \\
    m_{M,n}^{(-j, OOB)}(X_i^{(-j)}, \bTheta_{(M)}) &= \frac{1}{|\Lambda_{n,i}|} \sum_{\ell \in \Lambda_{n,i}} m_n^{(-j)}(X_i^{(-j)}, \Theta_{\ell}) \mathds{1}_{|\Lambda_{n,i}| > 0}.
\end{align*}
The Projected-CART algorithm provides two sources of improvements over \citet{lundberg2018consistent}: first, the training data points are dropped down the modified tree to recompute the cell outputs, and thus $\smash{\E[m(X)|X^{(-j)} \in A]}$ is directly estimated in each cell. Secondly, the projected partition is finer than in the original tree, which mitigates masking effects (when an influential covariate is not often selected in the tree splits because of other highly correlated covariates).

Finally, the Sobol-MDA estimate is given by the normalized difference of the squared error of the out-of-bag projected forest with the out-of-bag error of the original forest. Formally, we define the Sobol-MDA as 
\begin{align*}
\widehat{\textrm{S-MDA}}_{M,n}(X^{(j)}) =
\frac{1}{\hat{\sigma}^2_Y}
\frac{1}{n} \sum_{i = 1}^{n}  \big\{Y_i - m_{M,n}^{(-j, OOB)}&(X_i^{(-j)}, \bTheta_{(M)})\big\}^2 \\[-1em] & - \big\{Y_i - m_{M,n}^{(OOB)}(X_i, \bTheta_{(M)})\big\}^2,
\end{align*}
where $\hat{\sigma}^2_Y = \frac{1}{n-1} \sum_{i = 1}^{n}  (Y_i - \bar{Y})^2$ is the standard variance estimate of the response $Y$. An implementation in \texttt{R} and \texttt{C++} of the Sobol-MDA is available at \texttt{https://gitlab.com/drti/sobolmda} and is based on \texttt{ranger} \citep{wright2017ranger}, a fast implementation of random forests.
Given an initial random forest, the Sobol-MDA algorithm has a computational complexity of $O\{M n \log^3(n)\}$, which is in particular independent of the dimension $p$, and quasi-linear with the sample size $n$. On the other hand, the brute force approach from \citet{williamson2020unified} has a complexity of $O\{M p^2 n \log^2(n)\}$, which is quadratic with the dimension $p$ and therefore intractable in high-dimensional settings, as opposed to the Sobol-MDA. Additional details are provided in the Supplementary Material.

\begin{remark*}[Empty Cells]
    Some cells of the projected partition may contain no training samples. Consequently, the prediction for a new query point falling in such cells is undefined. In practice, the Projected-CART algorithm uses the following strategy to avoid empty cells. Recall that each level of the tree defines a partition of the input space (if a terminal leave occurs before the final tree level, it is copied down the tree at each level), and that a projected partition can thus be associated to each tree level. When a new observation is dropped down the tree, if it falls in an empty cell of the projected partition at a given tree level, the prediction is computed using the previous level. Notice that empty cells cannot occur in the partitions associated to the root and the first level of the tree by construction. Therefore, this mechanism enforces that the projected tree estimate is well defined over the full covariate space.
\end{remark*}

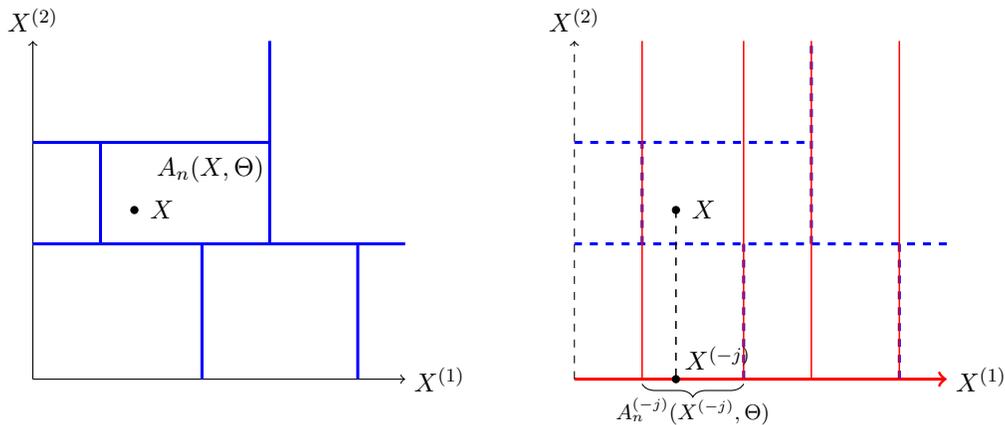
\begin{figure}
\centering
\begin{tikzpicture}[scale=0.9, every node/.style={scale=0.9}]

\draw[->] (-0,0) -- (5.5,0);
\draw (5.5,0) node[right] {$X^{(1)}$};
\draw [->] (0,-0) -- (0,5);
\draw (0,5) node[above] {$X^{(2)}$};
\draw [color = blue, line width = 0.4 mm] (3.5,2) -- (3.5,5);
\draw [color = blue, line width = 0.4 mm] (1,2) -- (1,3.5);
\draw [color = blue, line width = 0.4 mm] (2.5,2) -- (2.5,-0);
\draw [color = blue, line width = 0.4 mm] (5.5,2) -- (-0,2);
\draw [color = blue, line width = 0.4 mm] (-0,3.5) -- (3.5,3.5);
\draw [color = blue, line width = 0.4 mm] (4.8,2) -- (4.8,-0);
\draw (1.7,3.1) node[right] {$A_n(X, \Theta)$};
\filldraw (1.5,2.5) circle[radius=1.5pt];
\draw (1.6,2.5) node[right] {$X$};

\draw[->, line width = 0.4 mm, color = red] (8,0) -- (13.5,0);
\draw (13.5,0) node[right] {$X^{(1)}$};
\draw [->, dashed] (8,-0) -- (8,5);
\draw (8,5) node[above] {$X^{(2)}$};
\draw [color = blue, line width = 0.4 mm, dashed] (11.5,2) -- (11.5,5);
\draw [color = blue, line width = 0.4 mm, dashed] (9,2) -- (9,3.5);
\draw [color = blue, line width = 0.4 mm, dashed] (10.5,2) -- (10.5,-0);
\draw [color = blue, line width = 0.4 mm, dashed] (13.5,2) -- (8,2);
\draw [color = blue, line width = 0.4 mm, dashed] (8,3.5) -- (11.5,3.5);
\draw [color = blue, line width = 0.4 mm, dashed] (12.8,2) -- (12.8,0);
\draw [color = red, line width = 0.2 mm] (11.5,-0) -- (11.5,5);
\draw [color = red, line width = 0.2 mm] (9,-0) -- (9,5);
\draw [color = red, line width = 0.2 mm] (10.5,5) -- (10.5,-0);
\draw [color = red, line width = 0.2 mm] (12.8,5) -- (12.8,-0);
\filldraw (9.5,2.5) circle[radius=1.5pt];
\draw (9.6,2.5) node[right] {$X$};
\draw [line width = 0.2 mm, dashed] (9.5,2.5) -- (9.5,0);
\filldraw (9.5,0) circle[radius=1.5pt];
\draw (9.5,0) node[above right] {$X^{(-j)}$};
\draw [decorate,decoration={brace,mirror,amplitude=5pt},xshift=0pt,yshift=-2pt]
(9,0) -- (10.5,0) node [black,midway,yshift=-0.4cm] 
{\footnotesize $A_n^{(-j)}(X^{(-j)}, \Theta)$};

\end{tikzpicture}
\caption{Example of the partition of $[0,1]^2$ by a random CART tree (left side) projected on the subspace span by $X^{({-2})} = X^{(1)}$ (right side). Here, $p = 2$ and $j = 2$.}
\label{fig_proj_CART}
\end{figure}

\subsection{Sobol-MDA Consistency}
The original MDA versions do not converge towards the total Sobol index, which is the relevant quantity for our objective (i) of finding a small group of the most predictive covariates, as stated in Proposition \ref{prop_MDA}. On the other hand, the Sobol-MDA is consistent as stated below. Before introducing this convergence result, we need to introduce additional assumptions. Indeed, in Section \ref{sec_MDA}, we show the convergence of the different MDA versions provided that the forest is an efficient estimate, i.e. consistent. To enforce the consistency of random forests, we used Assumption \ref{A2} which controls the variation of the regression function in each cell of the theoretical tree: $\Delta\{m, A_k^{\star}(x, \Theta)\} \overset{a.s.}{\longrightarrow} 0$. Because the covariates may be dependent, Assumption \ref{A2} does not imply the same property for the projected partition. Therefore, we cannot directly build on \citet{scornet2015consistency} to prove the consistency of the Sobol-MDA. Thus, we take another route and define a new Assumption \ref{A5} which brings two modifications to the random forest algorithm. 

\begin{assumption*} \label{A5}
A node split is constrained to generate child nodes with at least a small fraction $\gamma > 0$ of the parent node observations. 
Secondly, the split selection is slightly modified: at each tree node, the number \texttt{mtry} of covariates drawn to optimize the split is set to $\texttt{mtry} = 1$ with a small probability $\delta > 0$. Otherwise, with probability $1 - \delta$, the default value of \texttt{mtry} is used.
\end{assumption*}
Importantly, since $\gamma$ and $\delta$ can be chosen arbitrarily small, the modifications of Assumption \ref{A5} are mild.
Besides, notice that this assumption follows \citet{meinshausen2006quantile} and \citet{wager2018estimation}: we slightly modify the random forest algorithm to enforce empirical cells to become infinitely small as the sample size increases. The projected forest inherits this property and an asymptotic analysis from \citet{gyorfi2006distribution} gives the consistency of the Sobol-MDA, provided that the complexity of tree partitions is appropriately controlled. If an original tree has $t_n$ terminal leaves, the associated projected partition may have a higher number of terminal leaves, at most $2^{t_n}$. Thus, we introduce Assumption \ref{A6}, which slightly modifies Assumption \ref{A3} with a more restrictive regime for the number of terminal leaves $t_n$ in the original trees.
\begin{assumption*} \label{A6}
The asymptotic regime of $a_n$, the size of the subsampling without replacement, and the number of terminal leaves $t_n$ is such that $a_n \leq n-2$, $a_n/n < 1 - \kappa$ for a fixed $\kappa > 0$, $\lim \limits_{n \to \infty} a_n = \infty$, $\lim \limits_{n \to \infty} t_n = \infty$, and $\lim \limits_{n \to \infty} 2^{t_n} \frac{(\log(a_n))^9}{a_n} = 0$.
\end{assumption*}

The Projected-CART algorithm ignores the splits based on the $j$-th covariate, and the associated out-of-bag projected forest consistently estimates $m^{(-j)}(X^{(-j)})$ under Assumptions \ref{A1}, \ref{A5}, and \ref{A6}, which leads to the Sobol-MDA consistency towards the total Sobol index, as stated below.
\begin{theorem*} \label{thm_MDA_sobol}
    If Assumptions \ref{A1}, \ref{A5}, and \ref{A6} are satisfied, for all $M \in \mathbb{N}^{\star}$ and $j \in \{1,\hdots,p\}$, we have
    \begin{align*}
        \widehat{\textrm{S-MDA}}_{M,n}(&X^{(j)}) \overset{p}{\longrightarrow} ST^{(j)}.
    \end{align*}
\end{theorem*}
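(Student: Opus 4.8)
The plan is to factor the estimator as a normalisation converging to $\V(Y)$ times a numerator converging to $\E\{\V(m(X)\mid X^{(-j)})\}$, whose ratio is exactly $ST^{(j)}$. First I would dispose of the normalisation: under Assumption \ref{A1} the response has a finite second moment, so $\hat{\sigma}^2_Y \to \V(Y)$ almost surely by the strong law of large numbers, and by Slutsky it suffices to show that the numerator
\[
\Delta_{M,n}=\frac1n\sum_{i=1}^n\Big[\{Y_i-m_{M,n}^{(-j,OOB)}(X_i^{(-j)},\bTheta_{(M)})\}^2-\{Y_i-m_{M,n}^{(OOB)}(X_i,\bTheta_{(M)})\}^2\Big]
\]
converges in probability to $\E\{\V(m(X)\mid X^{(-j)})\}=\E[\{m(X)-m^{(-j)}(X^{(-j)})\}^2]$. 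Note that every index $i$ with $|\Lambda_{n,i}|=0$ makes both out-of-bag estimates vanish, so the two squared errors cancel and such indices contribute exactly $0$ to $\Delta_{M,n}$; we may therefore work with the admissible indices only.

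The heart of the argument is the pair of consistency statements
\begin{align*}
\frac1n\sum_{i=1}^n\{m_{M,n}^{(OOB)}(X_i,\bTheta_{(M)})-m(X_i)\}^2 &\overset{p}{\longrightarrow}0, \\
\frac1n\sum_{i=1}^n\{m_{M,n}^{(-j,OOB)}(X_i^{(-j)},\bTheta_{(M)})-m^{(-j)}(X_i^{(-j)})\}^2 &\overset{p}{\longrightarrow}0.
\end{align*}
Both reduce, by exchangeability and Markov's inequality, to the $\mathbb{L}^2$-consistency of the corresponding out-of-bag forest at a test point. The first is the consistency of the forest for $m$; the second, which is the genuinely new ingredient, is the consistency of the projected forest for $m^{(-j)}(X^{(-j)})=\E[m(X)\mid X^{(-j)}]$.

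\textbf{The main obstacle} is this projected consistency. Because the covariates may be dependent, Assumption \ref{A2} does not transfer to the projected partition, so I would instead exploit Assumption \ref{A5}: the $\gamma$-balanced splits together with the positive probability $\delta$ of drawing $\texttt{mtry}=1$ force splits in every direction infinitely often, so the original cells, and hence their projections $A_n^{(-j)}(x^{(-j)},\Theta)$, have diameters tending to $0$ in the $X^{(-j)}$-space. For shrinking cells the projected cell average estimates $\E[Y\mid X^{(-j)}\in A_n^{(-j)}]$, which equals $\E[m(X)\mid X^{(-j)}\in A_n^{(-j)}]$ since $\varepsilon$ is centred and independent of $X$, and this tends to $m^{(-j)}(x^{(-j)})$; crucially the average runs over the full conditional law of $X^{(j)}$ given $X^{(-j)}$, so no independence of covariates is needed, which is precisely the improvement over \citet{lundberg2018consistent}. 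The complexity must be controlled: a tree with $t_n$ leaves yields a projected partition with at most $2^{t_n}$ cells, which is exactly why Assumption \ref{A6} sharpens Assumption \ref{A3} into the regime $2^{t_n}(\log a_n)^9/a_n\to0$. With vanishing diameters and this cell-count bound, consistency of the projected local-averaging estimate follows from the theory of data-dependent partitioning estimates in \citet{gyorfi2006distribution}; averaging over the trees of $\Lambda_{n,i}$ preserves $\mathbb{L}^2$-consistency by the triangle inequality, and the subsampling and out-of-bag bookkeeping is handled exactly as in Proposition \ref{prop_oob_risk}, the query point being out-of-bag for the trees that predict it.

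Finally I would conclude by a routine expansion. Writing $Y_i=m(X_i)+\varepsilon_i$ and expanding each square in $\Delta_{M,n}$ around $m(X_i)$ and $m^{(-j)}(X_i^{(-j)})$ respectively, the law of large numbers gives $\frac1n\sum\{Y_i-m(X_i)\}^2\to\E[\varepsilon^2]$ and $\frac1n\sum\{Y_i-m^{(-j)}(X_i^{(-j)})\}^2\to\E[\{m(X)-m^{(-j)}(X^{(-j)})\}^2]+\E[\varepsilon^2]$, the cross term in the latter vanishing because $\varepsilon$ is centred and independent of $X$, while the two estimation cross terms vanish by Cauchy--Schwarz combined with the consistency statements above and $\frac1n\sum\varepsilon_i^2=O_{\P}(1)$. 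Subtracting yields $\Delta_{M,n}\overset{p}{\longrightarrow}\E[\{m(X)-m^{(-j)}(X^{(-j)})\}^2]=\E\{\V(m(X)\mid X^{(-j)})\}$, and dividing by $\hat{\sigma}^2_Y\to\V(Y)$ gives $\widehat{\textrm{S-MDA}}_{M,n}(X^{(j)})\overset{p}{\longrightarrow}ST^{(j)}$.
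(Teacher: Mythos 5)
Your proposal is correct and follows essentially the same route as the paper: the law of large numbers plus Slutsky for the normalisation by $\hat{\sigma}^2_Y$, the same expansion of the numerator around $m(X_i)$ and $m^{(-j)}(X_i^{(-j)})$, and the same three key ingredients --- cell diameters shrinking to zero as forced by Assumption \ref{A5}, consistency of the projected partitioning estimate via \citet{gyorfi2006distribution} under the $2^{t_n}$ complexity regime of Assumption \ref{A6}, and the out-of-bag risk bound of Proposition \ref{prop_oob_risk} to transfer consistency to the OOB projected forest. The paper packages exactly these steps as its lemmas on cell diameters, projected-CART consistency, and OOB projected-forest consistency, then reruns the Ishwaran--Kogalur decomposition with $\E[m(\Xperm)\mid X^{(-j)}]$ replaced by $m^{(-j)}(X^{(-j)})$, which is precisely your concluding expansion.
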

According to Theorem \ref{thm_MDA_sobol}, the Sobol-MDA targets the appropriate quantity for objective (i), of selecting a small number of covariates while maximizing accuracy, whereas original MDA versions target a biased quantity, as stated in Proposition \ref{prop_MDA}. 
Notice that the brute force approach of retraining the forest without covariate $\smash{X^{(j)}}$ also estimates the total Sobol index, as proved in Theorem $3$ of the Supplementary Material. However, the brute force method has a quadratic computational complexity with respect to the dimension $p$, and is thus intractable in high dimensional settings. Since the Sobol-MDA complexity is independent of $p$, our approach is much more computationally efficient and outperforms its competitors, as shown in the next subsections.

\subsection{Experiments with Simulated Data} \label{subsec_xp_simu}

We conduct three batches of experiments. First, we use the analytical example of the Supplementary Material, and show empirically that the Sobol-MDA leads to the accurate importance variable ranking, while original MDA versions do not. Next, we simulate a typical setting where several groups of covariates are strongly correlated and only few covariates are involved in the regression function. In such difficult setting, the Sobol-MDA identifies the relevant covariates, as opposed to its competitors. Finally, we apply the recursive feature elimination algorithm on real data to show the performance improvement of the Sobol-MDA for variable selection.

We first consider the analytical example of the Supplementary Material, where the data has both dependence and interactions.
In this example, the covariates are distributed as a Gaussian vector with $p = 5$, and the regression function is given by
\begin{align*}
    m(X) = \alpha X^{(1)} X^{(2)} \mathds{1}_{X^{(3)} > 0} + \beta X^{(4)} X^{(5)} \mathds{1}_{X^{(3)} < 0}. 
\end{align*}
Here, we set $\alpha = 1.5$, $\beta = 1$, $\V[X^{(j)}] = 1$ for all covariates $j \in \{1,\hdots, 5\}$. The correlation coefficients are set to $\rho_{1,2} = 0.9$ and $\rho_{4,5} = 0.6$, and other covariance terms are null. Finally, we define the model response as $Y = m(X) + \varepsilon$, where $\varepsilon$ is an independent centered Gaussian noise whose variance verifies $\V[\varepsilon] / \V[Y] = 10\%$.
Then, we run the following experiment: first, we generate a sample $\Dn$ of size $n = 3000$ and distributed as the Gaussian vector $X$. 
Next, a random forest of $M = 300$ trees is fit with $\Dn$ and we compute the Breiman-Cutler MDA, Ishwaran-Kogalur MDA, the algorithm from \citet{williamson2020unified} denoted by $\widehat{\psi_{n,j}}$, and the Sobol-MDA.
To enable comparisons, the Breiman-Cutler MDA is normalized by $2\V[Y]$, and the Ishwaran-Kogalur MDA by $\V[Y]$, as suggested by Proposition \ref{prop_MDA}. To show the improvement of our Projected-CART algorithm, we also compute the Sobol-MDA using the algorithm from \citet{lundberg2018consistent}, denoted \smash{$\widehat{\textrm{S-MDA}_{Ldg}}$}.
All results are reported in Table \ref{table_SMDA}, along with the theoretical counterparts of the estimates, and the covariates are ranked by decreasing values of the theoretical total Sobol index since it is the value of interest: $X^{(3)}$, then $X^{(4)}$ and $X^{(5)}$, and finally $X^{(1)}$ and $X^{(2)}$.
\begin{table} 
\setlength{\tabcolsep}{3pt}
\centering
\begin{tabular}{|c ||c | c || c | c || c | c | c | c |}
  \hline \hline
   & $\textrm{BC-MDA}^{\star}$ & $\widehat{\textrm{BC-MDA}}$ & $\textrm{IK-MDA}^{\star}$ & $\widehat{\textrm{IK-MDA}}$ & $\textrm{ST}^{\star}$ & $\widehat{\textrm{S-MDA}}$ & $\widehat{\psi_{n,j}}$ & $\widehat{\textrm{S-MDA}_{Ldg}}$ \\
  \hline
  $X^{(3)}$ & 0.47 & 0.37 \small{(0.03)} & 0.47 & 0.43 \small{(0.02)} & \textit{0.47} & 0.45 \small{(0.03)} & 0.42 \small{(0.06)} & 0.43 \small{(0.03)} \\
  $X^{(4)}$ & 0.21 & 0.10 \small{(0.02)} & 0.37 & 0.14 \small{(0.01)} & \textit{0.10} & 0.08 \small{(0.01)} & 0.06 \small{(0.04)} & 0.13 \small{(0.01)}  \\
  $X^{(5)}$ & 0.21 & 0.09 \small{(0.01)} & 0.37 & 0.13 \small{(0.01)} & \textit{0.10} & 0.08 \small{(0.01)} & 0.06 \small{(0.04)} & 0.13 \small{(0.01)} \\
  $X^{(1)}$ & 0.64 & 0.24 \small{(0.02)} & 1.0 & 0.29 \small{(0.02)} & \textit{0.07} & 0.05 \small{(0.01)} & 0.03 \small{(0.04)} & 0.22 \small{(0.02)} \\
  $X^{(2)}$ & 0.64 & 0.24 \small{(0.02)} & 1.0 & 0.28 \small{(0.02)} & \textit{0.07} & 0.05 \small{(0.01)} & 0.03 \small{(0.04)} & 0.23 \small{(0.01)} \\
  \hline \hline
\end{tabular}
\vspace*{1.5mm}
\caption{BC-MDA (normalized by $2\V[Y]$), IK-MDA (normalized by $\V[Y]$), \citet{williamson2020unified} ($\widehat{\psi_{n,j}}$), and Sobol-MDA estimates for Example $1$ (standard deviations over $10$ repetitions in brackets). Theoretical counterparts are defined in Proposition \ref{prop_MDA}.}
\label{table_SMDA}
\end{table}
Thus, only the Sobol-MDA computed with the Projected-CART algorithm and \citet{williamson2020unified} rank the covariates in the same appropriate order than the total Sobol index. In particular, $X^{(4)}$ and $X^{(5)}$ have a higher total Sobol index than covariates $1$ and $2$ because of the stronger correlation between $X^{(1)}$ and $X^{(2)}$ than between $X^{(4)}$ and $X^{(5)}$.
For all the other importance measures, $X^{(1)}$ and $X^{(2)}$ are more important than $X^{(4)}$ and $X^{(5)}$.
For the original MDA, this is essentially due to the term $\smash{\textrm{MDA}_3^{ \star (j)}}$, which increases with correlation.
Since the explained variance of the random forest is $82\%$ in this experiment, all estimates have a negative bias. The bias of the Breiman-Cutler MDA and Ishwaran-Kogalur MDA dramatically increases with correlation. Indeed, a strong correlation between covariates leaves some regions of the input space free of training data. However, the out-of-bag permuted sample may fall in these regions, regions for which the forest has to extrapolate, resulting in a low predictive accuracy, and then in a high bias of the Breiman-Cutler MDA and Ishwaran-Kogalur MDA for correlated covariates. 
Finally, the Sobol-MDA computed with the algorithm of \citep{lundberg2018consistent} is biased as suggested by \citep{aas2019explaining}, and the bias also seems to increase with correlation.

We then consider the following problem inspired by \citet{archer2008empirical, gregorutti2017correlation} and related to gene expressions. The goal is to identify relevant covariates among several groups of many strongly correlated covariates. More precisely, we define $X$, a random vector of dimension $p = 200$, composed of $5$ independent groups of $40$ covariates. Each group is a centered gaussian random vector where two distinct components have a correlation of $0.8$ and the variance of each component is $1$. The regression function $m$ only involves one covariate from each group, and is simply defined by
\begin{align*}
    m(X) = 2 X^{(1)} + X^{(41)} + X^{(81)} + X^{(121)} + X^{(161)}. 
\end{align*}
Finally, we define the model response as $Y = m(X) + \varepsilon$, where $\varepsilon$ is an independent gaussian noise ($\V[\varepsilon] / \V[Y] = 10\%$).
Next, a sample of size $n = 1000$ is generated based on the distribution of $X$, and a random forest of $M = 300$ trees is fit. 
\begin{table} 
\centering
\begin{tabular}{| l | c |}
  \hline \hline
  \multicolumn{2}{|c|}{$\widehat{\textrm{S-MDA}}$} \\
  \hline
  $\textcolor{blue}{X^{(1)}}$ & 0.035 \\
  $\textcolor{blue}{X^{(161)}}$ & 0.005 \\
  $\textcolor{blue}{X^{(81)}}$ & 0.004 \\
  $\textcolor{blue}{X^{(121)}}$ & 0.004 \\
  $\textcolor{blue}{X^{(41)}}$ & 0.002 \\
  $X^{(179)}$ & 0.002 \\
  $X^{(13)}$ & 0.001 \\
  $X^{(25)}$ & 0.001 \\
  \hline \hline
\end{tabular}
\begin{tabular}{|l |c |}
  \hline \hline
  \multicolumn{2}{|c|}{$\widehat{\textrm{BC-MDA}}/2\V[Y]$} \\
  \hline
  $\textcolor{blue}{X^{(1)}}$ & 0.048 \\
  $X^{(25)}$ & 0.010 \\
  $X^{(31)}$ & 0.008 \\
  $X^{(14)}$ & 0.008 \\
  $X^{(40)}$ & 0.007 \\
  $X^{(3)}$ & 0.007 \\
  $X^{(17)}$ & 0.006 \\
  $X^{(26)}$ & 0.006 \\
  \hline \hline
\end{tabular}
\begin{tabular}{|l |c |}
  \hline \hline
  \multicolumn{2}{|c|}{$\widehat{\textrm{IK-MDA}}/\V[Y]$} \\
  \hline
  $\textcolor{blue}{X^{(1)}}$ & 0.056 \\
  $X^{(5)}$ & 0.009 \\
  $\textcolor{blue}{X^{(81)}}$ & 0.007 \\
  $\textcolor{blue}{X^{(41)}}$ & 0.005 \\
  $\textcolor{blue}{X^{(161)}}$ & 0.005 \\
  $X^{(15)}$ & 0.005 \\
  $\textcolor{blue}{X^{(121)}}$ & 0.005 \\
  $X^{(7)}$ & 0.005 \\
  \hline \hline
\end{tabular}
\begin{tabular}{|l |c |}
  \hline \hline
  \multicolumn{2}{|c|}{$\widehat{\psi_{n,j}}$} \\
  \hline
  $\textcolor{blue}{X^{(1)}}$ & 0.042 \\
  $X^{(119)}$ & 0.031 \\
  $X^{(155)}$ & 0.029 \\
  $X^{(24)}$ & 0.029 \\
  $X^{(54)}$ & 0.029 \\
  $X^{(72)}$ & 0.028 \\
  $X^{(103)}$ & 0.028 \\
  $X^{(124)}$ & 0.027 \\
  \hline \hline
\end{tabular}
\caption{Sobol-MDA, BC-MDA, IK-MDA, and \citet{williamson2020unified} ($\widehat{\psi_{n,j}}$) for Example $2$ (influential covariates in blue).}
\label{table_simulated_data}
\end{table}
\begin{table} 
\centering
\begin{tabular}{| c | c | c | c |}
  \hline \hline
  $\widehat{\textrm{S-MDA}}$ & $\widehat{\textrm{BC-MDA}}$ & $\widehat{\textrm{IK-MDA}}$ & $\widehat{\psi_{n,j}}$ \\
  \hline
  $0.90$ & $0$ & $0.33$ & $0$ \\
  \hline \hline
\end{tabular}
\caption{Probability to recover the $5$ relevant covariates in Example $2$ as the top $5$ most important covariates ranked using the BC-MDA, IK-MDA, Sobol-MDA, and \citet{williamson2020unified}.}
\label{table_proba}
\end{table}
Thus, Tables \ref{table_simulated_data} and \ref{table_proba} show that the Sobol-MDA identifies the five relevant covariates, whereas the Breiman-Cutler MDA, Ishwaran-Kogalur MDA, and \citet{williamson2020unified} identify some noisy covariates among the top five.
In this additive and correlated example, Corollary \ref{cor_MDA_additive} states that all MDA algorithms have an appropriate theoretical counterpart to identify the five relevant covariates involved in the regression function, because these five covariates are mutually independent. 
However, in this finite sample setting, the original MDA versions give a high importance to the covariates of the first group because of their correlation with the most influential covariate $X^{(1)}$. Since the Ishwaran-Kogalur MDA is based on the forest error, it outperforms the Breiman-Cutler MDA, which relies on the tree error.
Quite surprisingly, \citet{williamson2020unified} is the worst performing algorithm although it uses a brute force approach by retraining the forest without a given covariate to consistently estimate its total Sobol index, the appropriate theoretical counterpart. In fact, the multiple layers of data splitting involved in \citet{williamson2020unified} generate a high variance of the associated estimate, whereas the MDA and the Sobol-MDA operate with a given dataset and a given initial forest structure to compute the decrease of accuracy, resulting in finer estimates and a higher performance to detect irrelevant covariates.

\subsection{Experiments for Variable Selection with Real Data} \label{subsec_xp_real}
The recursive feature elimination algorithm is originally introduced by \citet{guyon2002gene} to perform variable selection with SVM. \citet{gregorutti2017correlation} apply the recursive feature elimination algorithm to random forests with the MDA as importance measure. The principle is to discard the less relevant covariates one by one, and is summarized in Algorithm $4$ in the Supplementary Material. Thus, the recursive feature elimination algorithm is a relevant  strategy for our objective (i) of finding a small group of the most predictive covariates.
At each step of the algorithm, the goal is to detect the less relevant covariates based on the trained model. Since the total Sobol index measures the proportion of explained response variance lost when a given covariate is removed, the optimal strategy is therefore to discard the covariate with the smallest total Sobol index. The Sobol-MDA directly estimates the total Sobol index, and therefore improves the performance of the recursive feature elimination procedure with respect to the original MDA, as shown in the following experiments. Indeed, the original MDA inflates the importance of dependent covariates, which leads to discard influential independent covariates, in favor of covariates which are related to the response only through correlation with others.

\begin{figure}
	\begin{center}
		\includegraphics[height=7.3cm,width=7.3cm]{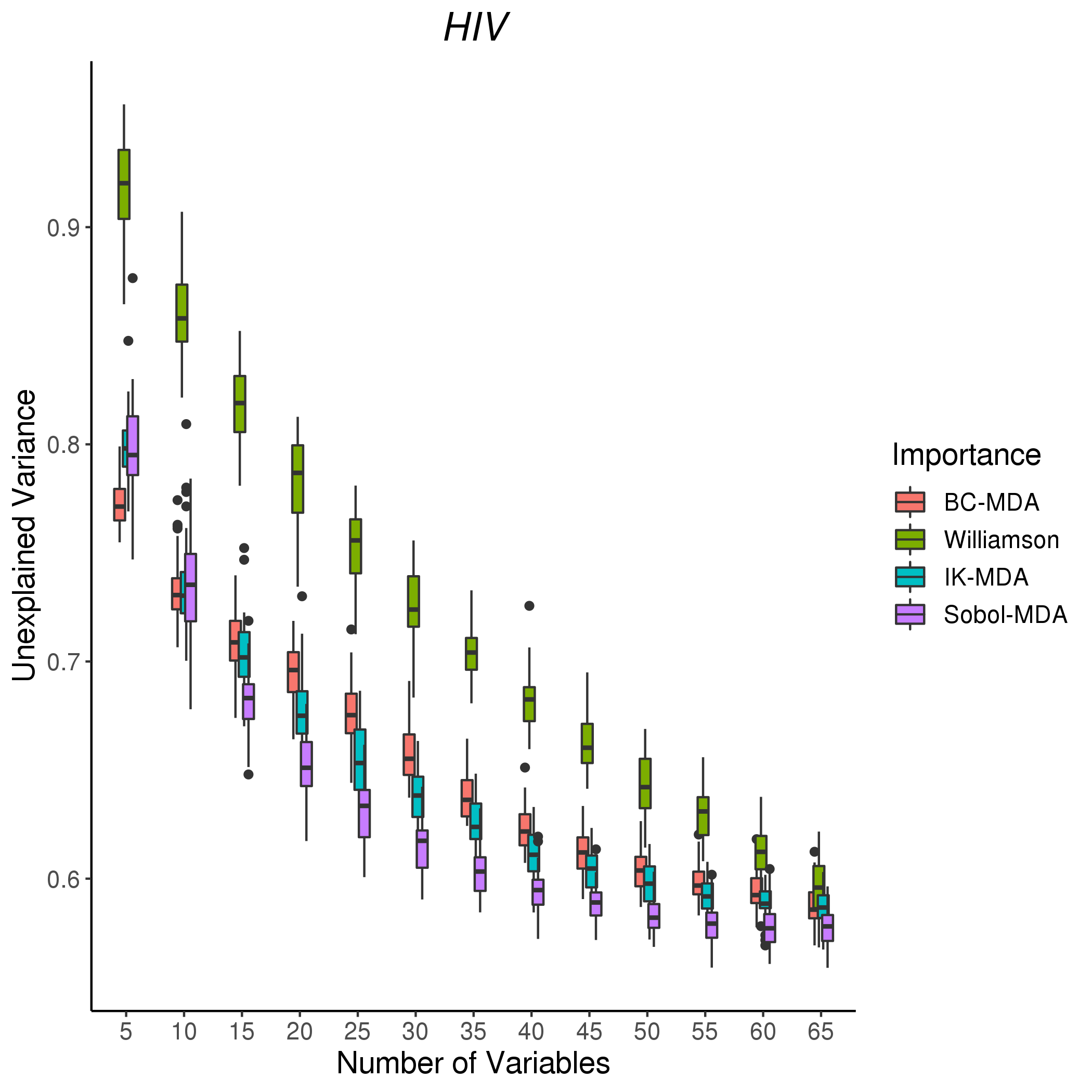}
		\includegraphics[height=7.3cm,width=7.3cm]{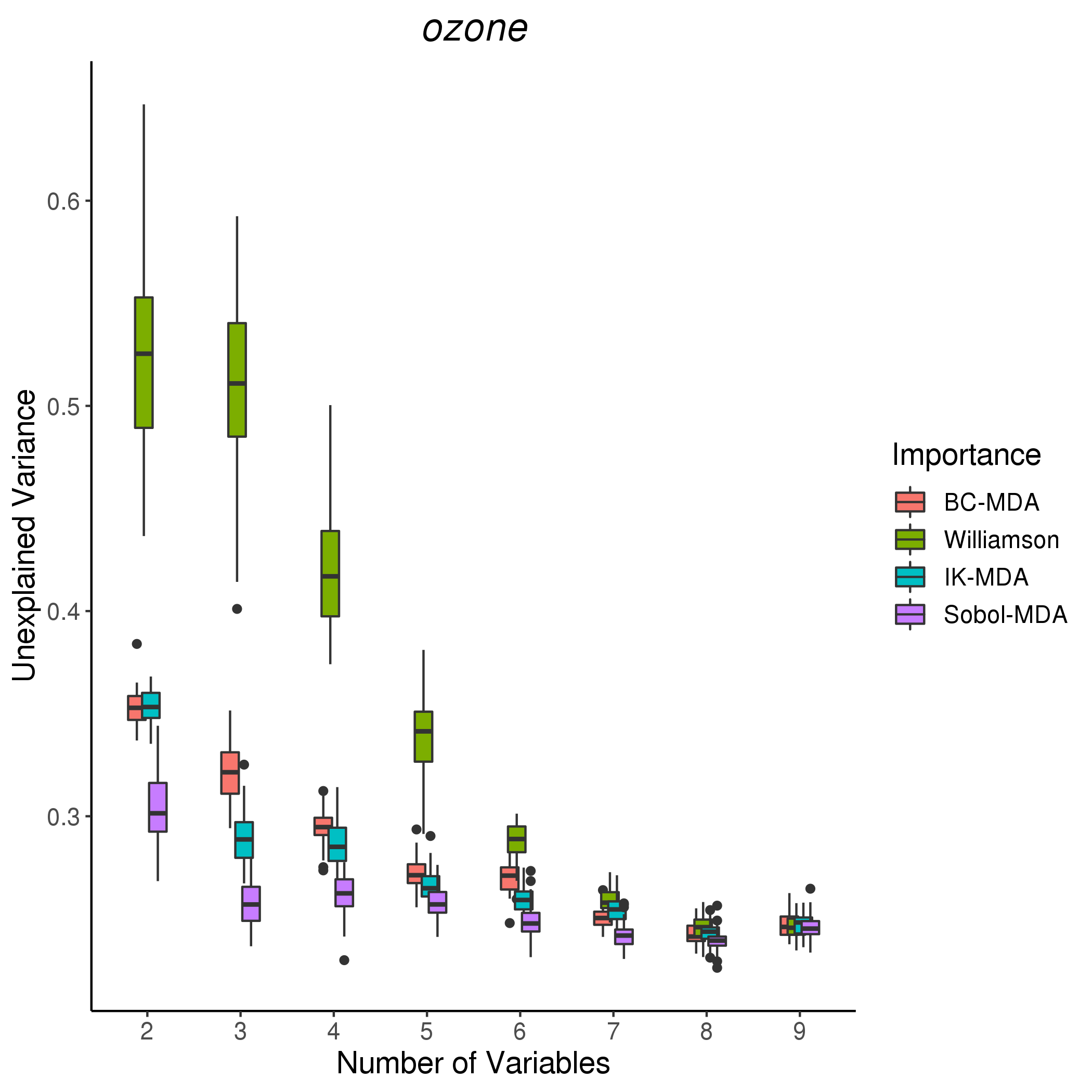}
		\caption{
		Random forest error versus the number of covariates for the ``HIV'' and ``Ozone'' datasets at each step of the recursive feature elimination algorithm, using different importance measures.
		} \label{fig_RFE_HIV}
	\end{center}
\end{figure}
The recursive feature elimination algorithm is illustrated with the ``Ozone'' data \citep{Dua:2019} and the high-dimensional dataset ``HIV'' as suggested in \citet{williamson2020unified}, and run using the original MDA, the Sobol-MDA, and \citet{williamson2020unified}.
At each step of the recursive feature elimination algorithm, the explained variance of the forest is retrieved. Following \citet{gregorutti2017correlation}, we do not use the out-of-bag error since it gives optimistically biased results, but use instead a $10$-fold cross-validation, repeated $40$ times to get uncertainties: the forest and the associated importance measure are computed with $9$ folds, and the error is estimated with the $10$-th fold.
Thus, Figure \ref{fig_RFE_HIV} highlights that the Sobol-MDA leads to a more efficient variable selection than all competitors for the ``HIV'' and ``Ozone'' datasets. We refer to the Supplementary Material for additional experiments. Notice that the Ishwaran-Kogalur MDA performs better than the Breiman-Cutler MDA, as expected from their theoretical counterparts stated in Proposition \ref{prop_MDA}. Finally the algorithm from \citet{williamson2020unified} is the worst performing approach because of the data splitting procedure, as explained in the previous subsection.

\FloatBarrier

\section*{Acknowledgement}
We thank the referees and the editors for their relevant suggestions to improve the article.

\bibliography{biblio}


\pagebreak
\vspace*{0.5cm}
\begin{center}
\textbf{\LARGE Supplementary Material for ``MDA for random forests: inconsistency, and a practical solution via the Sobol-MDA''}
\end{center}

\setcounter{section}{0}
\renewcommand*{\theHsection}{chX.\the\value{section}}
\setcounter{assumption*}{0}
\setcounter{theorem*}{0}
\setcounter{lemma*}{0}
\setcounter{proposition*}{0}
\setcounter{corollary*}{0}
\setcounter{property*}{0}

\section{Analytical Example for the MDA}
To illustrate the behavior of the MDA, we take a simple example and analytically derive the MDA limit and its three associated components $\smash{\textrm{MDA}_1^{ \star (j)}}$, $\smash{\textrm{MDA}_2^{ \star (j)}}$, and $\smash{\textrm{MDA}_3^{ \star (j)}}$. This example shows how the MDA is misleading when input variables are dependent.
We consider the Breiman-Cutler MDA, denoted by MDA to lighten notations. The TT-MDA or Ishwaran-Kogalur MDA lead to identical conclusions.

The input $\bX$ is a Gaussian vector of dimension $p = 5$. Its covariance matrix is defined by $\smash{\V[X^{(j)}] = \sigma_j^2}$ for $\smash{j \in \{1,\hdots,5\}}$, and all covariance terms are null except $\smash{\textrm{Cov}[X^{(1)},X^{(2)}] = \rho_{1,2} \sigma_1 \sigma_2}$ and $\smash{\textrm{Cov}[X^{(4)},X^{(5)}] = \rho_{4,5} \sigma_4 \sigma_5}$.
The regression function $m$ is given by
\begin{align*}
    m(\bX) = \alpha X^{(1)} X^{(2)} \mathds{1}_{X^{(3)} > 0} + \beta X^{(4)} X^{(5)} \mathds{1}_{X^{(3)} < 0}. 
\end{align*}
Notice that $m$ has a simple form to enable an easy interpretation of the importance measures, but that interaction terms are required to highlight the different behaviors of the three MDA components in a correlated setting.   
Simple calculations give the analytical expression $\smash{\textrm{MDA}^{\star (1)}}$ of the MDA limit for $\smash{X^{(1)}}$ as
\begin{align*}
    \textrm{MDA}^{\star (1)} =& 
    \underbrace{\frac{1}{2}(\alpha\sigma_{1}\sigma_2)^2(1 - \rho_{1,2}^2)}_{\textrm{MDA}_1^{\star (1)}} + \underbrace{\frac{1}{2}(\alpha\sigma_{1}\sigma_2)^2}_{\textrm{MDA}_2^{\star (1)}} + \underbrace{\frac{3}{2}\rho_{1,2}^2(\alpha\sigma_{1}\sigma_2)^2}_{\textrm{MDA}_3^{\star (1)}}.
\end{align*}
First, observe that $\smash{\textrm{MDA}_1^{\star (1)}}$ decreases with the correlation between $\smash{X^{(1)}}$ and $\smash{X^{(2)}}$. Indeed, $\smash{\textrm{MDA}_1^{\star (1)}}$ is the total Sobol index and when these two variables are strongly dependent, the additional information provided by $\smash{X^{(1)}}$ alone is small. In the extreme case, $\smash{\rho_{1,2} = 1}$ implies that $\smash{\textrm{MDA}_1^{\star (1)} = 0}$, i.e., $\smash{X^{(1)}}$ can be removed from the model without hurting the model accuracy since all its information is contained in $\smash{X^{(2)}}$.
On the other hand, $\smash{\textrm{MDA}_2^{\star (1)}}$ does not rely on the dependence between $\smash{X^{(1)}}$ and $\smash{X^{(2)}}$. Indeed, recall that in the case of $\smash{\textrm{MDA}_1^{ \star (1)}}$, contributions due to the dependence between $X^{(1)}$ and $X^{(2)}$ are excluded because of the conditioning on $X^{(2)}$. For $\smash{\textrm{MDA}_2^{ \star (1)}}$, this dependence is ignored, and therefore such removal does not take place.
Therefore, it is clear that the MDA mixes two terms with opposite meanings.
Finally, the third term $\smash{\textrm{MDA}_3^{\star (1)}}$ measures how the permutation of $\smash{X^{(1)}}$ shifts the mean value of the regression function averaged over $\smash{X^{(1)}}$, which is not a quantity of interest to rank variables. However, in a high correlation setting $\smash{\big(\rho_{1,2} > \frac{\sqrt{2}}{2}\big)}$, we have $\smash{\textrm{MDA}_3^{\star (1)} > \textrm{MDA}_1^{\star (1)}+ \textrm{MDA}_2^{\star (1)}}$, which means that the meaningless third term is the main contribution of the MDA value of variable $X^{(1)}$.
Besides, symmetrically for the other input variables, we have $\smash{\textrm{MDA}^{\star (1)} = \textrm{MDA}^{\star (2)}}$, and the same formula for $\smash{X^{(4)}}$ and $\smash{X^{(5)}}$ with the appropriate parameters. MDA formulas for variables $3, 4$, and $5$ are to be found in the last section of the Supplementary Material.

As stated in the introduction, one of the main objective of variable importance analysis is usually to select a small number of variables while maximizing the model accuracy. In our example, we show how the MDA fails for this purpose.
Let say we want to remove the less relevant input variable in a setting where the two vectors $\bX^{(1,2)}$ and $\bX^{(4,5)}$ are interchangeable ($\alpha\sigma_{1}\sigma_2 = \beta\sigma_{4}\sigma_5$), except that their dependence strengths differ and satisfy $\rho_{1,2} < \rho_{4,5}$. Since the correlation between variables $4$ and $5$ is higher than between variables $1$ and $2$, we should remove $X^{(4)}$ or $X^{(5)}$ to minimize the information loss, as suggested by the total Sobol index ranking
\begin{align*}
    ST^{(4)} = ST^{(5)}
    < ST^{(1)}  = ST^{(2)} < ST^{(3)}.
\end{align*}
However, in such setting we have 
\begin{align*}
    \textrm{MDA}^{\star (1)} = \textrm{MDA}^{\star (2)}
    < \textrm{MDA}^{\star (3)}
    < \textrm{MDA}^{\star (4)} = \textrm{MDA}^{\star (5)},
\end{align*}
that would lead to discard $X^{(1)}$ or $X^{(2)}$, which is suboptimal---see the last section of the Supplementary Material for computation details. On the other hand, using only $\smash{\textrm{MDA}_1^{ \star (j)}}$ or $\smash{\textrm{MDA}_1^{ \star (j)} + \textrm{MDA}_2^{ \star (j)}}$ as importance measures gives the accurate variable selection. The term $\smash{\textrm{MDA}_3^{ \star (j)}}$ artificially increases the MDA value because of correlation, and is thus misleading for both objectives (i) and (ii).

\section{Algorithms}

\subsection{MDA Algorithm Formulations}

Algorithms \ref{algo_BC_MDA} and \ref{algo_IK_MDA} respectively provide an algorithmic formulation of the Breiman-Cutler MDA and the Ishwaran-Kogalur MDA.

\begin{algorithm} \label{algo_BC_MDA}
\caption{Breiman-Cutler MDA}
\begin{algorithmic}[1]
\STATE \textbf{Input:} A random forest and a variable index $j \in \{1,\hdots,p\}$
\STATE for $\ell$ in $1,\hdots,M$:
\INDSTATE randomly permute the $j$-th component of the out-of-bag observations of the $\ell$-th tree
\INDSTATE for all permuted observations of the $\ell$-th tree:
\INDSTATE[2] compute the $\ell$-th tree prediction
\INDSTATE compute the quadratic error associated to these predictions
\INDSTATE subtract the original $\ell$-th tree error to the obtained quadratic error
\STATE average the error difference over all trees
\end{algorithmic}
\end{algorithm}

\begin{algorithm} \label{algo_IK_MDA}
\caption{Ishwaran-Kogalur MDA}
\begin{algorithmic}[1]
\STATE \textbf{Input:} A random forest and a variable index $j \in \{1,\hdots,p\}$
\STATE for $\ell$ in $1,\hdots,M$:
\INDSTATE randomly permute the $j$-th component of the out-of-bag observations of the $\ell$-th tree
\INDSTATE for all permuted observations of the $\ell$-th tree:
\INDSTATE[2] compute the $\ell$-th tree prediction
\STATE for $i \in 1, \hdots, n$:
\INDSTATE get the set of trees $\Lambda_{n,i}$ which do not involve $X_i$ in their construction
\INDSTATE average the tree prediction of the permuted $i$-th observation $X_{i,\pi_{j,\ell}}$ across all trees in $\Lambda_{n,i}$
\STATE compute the quadratic error associated to these averaged predictions
\STATE subtract the original forest error to the obtained quadratic error
\end{algorithmic}
\end{algorithm}

\subsection{Ishwaran-Kogalur MDA by Blocks}
The Ishwaran-Kogalur MDA is implemented in \texttt{randomForestSRC}. This package also provides the possibility to define the Ishwaran-Kogalur MDA by blocks: the trees of the forest are divided in a fixed number of blocks. The Ishwaran-Kogalur MDA is estimated for each block and then averaged. Thus, the Breiman-Cutler MDA can be seen as a specific case where the number of blocks is the number of trees $M$, and each block contains only one tree. 
On the theoretical side, if the number of blocks is fixed and Assumption \ref{A4} is satisfied, the number of trees in each block grows to infinity, and therefore Theorem \ref{thm_MDA}-(iii) still holds.

\subsection{Sobol-MDA Computational Complexity}
Recall that the computational complexity of the brute force approach of \citet{williamson2020unified}, where a forest is retrained without each input variable, is $O(M p^2 n \log^2(n))$, which is quadratic with the dimension $p$ and therefore intractable in high-dimensional settings.

On the other hand, the original MDA procedure has an average complexity of $O(M p n \log(n))$: to run a balanced tree prediction for a given data point, it is dropped down the $\log(n)$ levels of the tree, which makes a complexity of $O(n \log(n))$ for the full out-of-bag sample, repeated for the $M$ trees of the forest and the $p$ variables.
In the Sobol-MDA procedure, the complexity analysis is similar, except that when a point is dropped down the tree, it can be sent to both the left and right children nodes, generating multiple operations at a given tree level and then an additional multiplicative factor of $\log(n)$.
However, it is not necessary to run the Projected-CART algorithm for each of the $p$ covariates. Indeed, when a given observation is dropped down the tree, it meets at most $\log(n)$ different variables in the original tree path. Therefore, the Projected-CART prediction has to be computed only for $\log(n)$ covariates for each observation.
Thus, the Sobol-MDA algorithm has a computational complexity of $O(M n \log^3(n))$, which is in particular independent of the dimension $p$, and quasi-linear with the sample size $n$.

\subsection{Projected-CART}
We provide below Algorithm \ref{algo_proj_CART} for an implementation of the projected random forests.
\begin{algorithm}
\caption{Projected-CART}
\label{algo_proj_CART}
\begin{algorithmic}[1]
\STATE \textbf{Input:} A $\Theta$-random CART built with $\Dn$, and a variable index $j \in \{1,\hdots,p\}$. (Note that if a terminal leave occurs before the final tree level, it is copied at each level down the tree.)
\STATE Initialize both in-bag and OOB samples at the root node of the tree;
\STATE for all tree levels:
\INDSTATE[1] for all level nodes:
\INDSTATE[2] if the splitting variable is not $j$:
\INDSTATE[3] send each data point to the right or left children node according to the node split;
\INDSTATE[2] if the splitting variable is $j$:
\INDSTATE[3] send the node sample to both the right and left children node ignoring the split;
\INDSTATE[1] for all data points:
\INDSTATE[2] retrieve the collection of nodes where the data point falls at the current tree level;
\INDSTATE[1] for all OOB data points:
\INDSTATE[2] retrieve the set of in-bag points which fall in the same node collection;
\INDSTATE[2] if all nodes in the considered node collection are terminal:
\INDSTATE[3] compute the output average of the in-bag points;
\INDSTATE[3] set this average as the prediction for the considered OOB observation;
\INDSTATE[2] if no in-bag points fall in the same node collection:
\INDSTATE[3] retrieve the corresponding in-bag data points at the previous tree level;
\INDSTATE[3] set the output average of these in-bag points as the prediction for the considered \\ \hspace*{1.12cm} OOB observation;
\STATE return predictions;
\end{algorithmic}
\end{algorithm}

\subsection{Recursive Feature Elimination}

Figures \ref{fig_RFE_1} and \ref{fig_RFE_2} provide additional experiments to show that the Sobol-MDA leads to a more efficient variable selection than the Breiman-Cutler MDA, \citet{williamson2020unified}, and the Ishwaran-Kogalur MDA. Notice that Algorithm \ref{algo_RFE} recalls the RFE procedure. The ``Prostate'' dataset in Figure \ref{fig_RFE_2} is an example where the Sobol-MDA does not significantly improve over the original MDA. 

\begin{algorithm}
\caption{Recursive Feature Elimination}
\label{algo_RFE}
\begin{algorithmic}[1]
\STATE for $j$ in $1,\hdots,p$:
\INDSTATE train a random forest
\INDSTATE compute the MDA for all variables
\INDSTATE remove the variable with the smallest MDA
\STATE return the ordered list of removed variables
\end{algorithmic}
\end{algorithm}

\begin{figure}
	\begin{center}
		\includegraphics[height=7cm,width=7cm]{RFE_SobolMDA_ozone.png}
		\includegraphics[height=7cm,width=7cm]{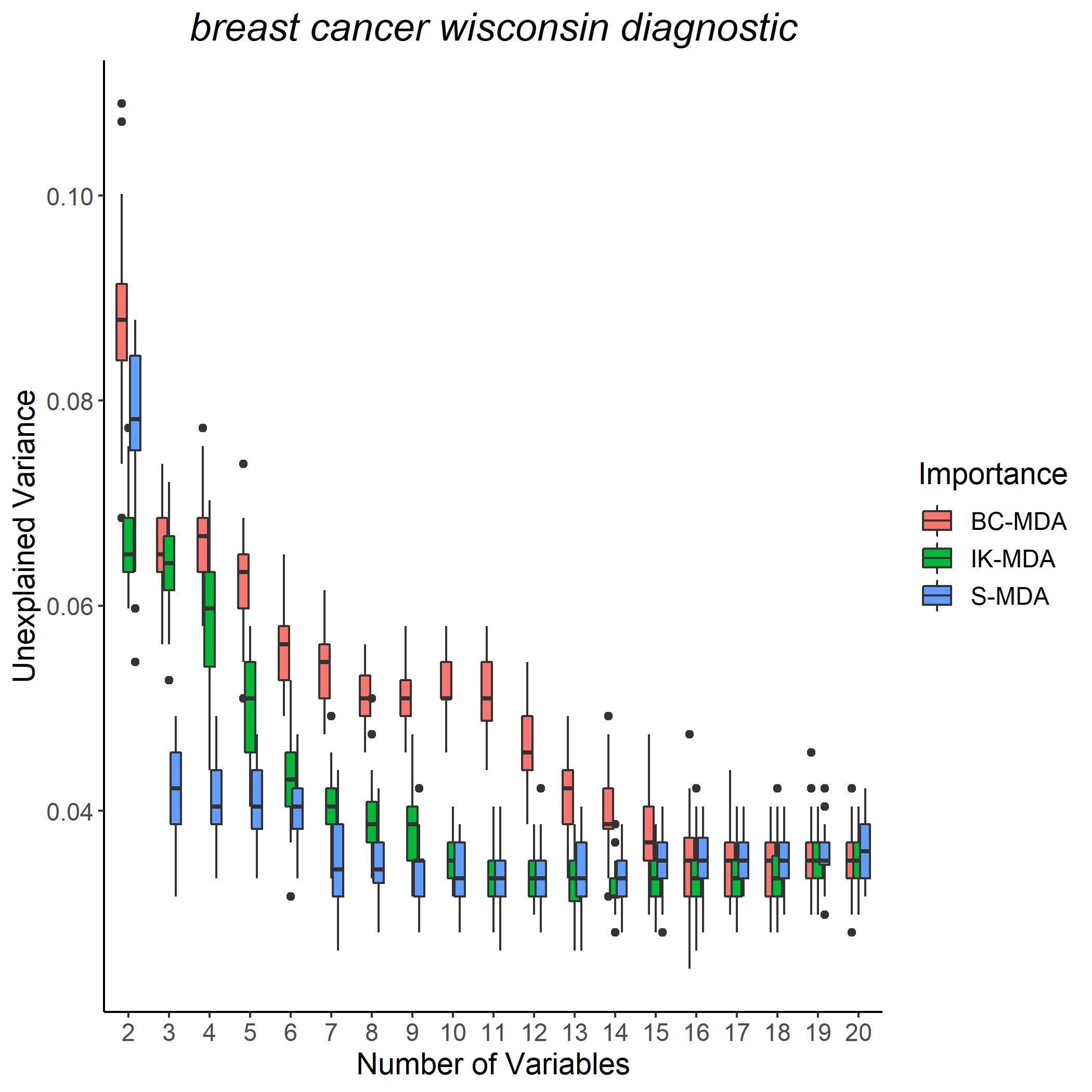}
		\caption{
		Random forest error versus the number of variables for the ``Ozone'' and ``Breast Cancer Wisconsin Diagnostic'' datasets at each step of the RFE, using different importance measures: BC-MDA, \citet{williamson2020unified}, IK-MDA, and Sobol-MDA.
		} \label{fig_RFE_1}
	\end{center}
\end{figure}
\begin{figure}
	\begin{center}
		\includegraphics[height=7cm,width=7cm]{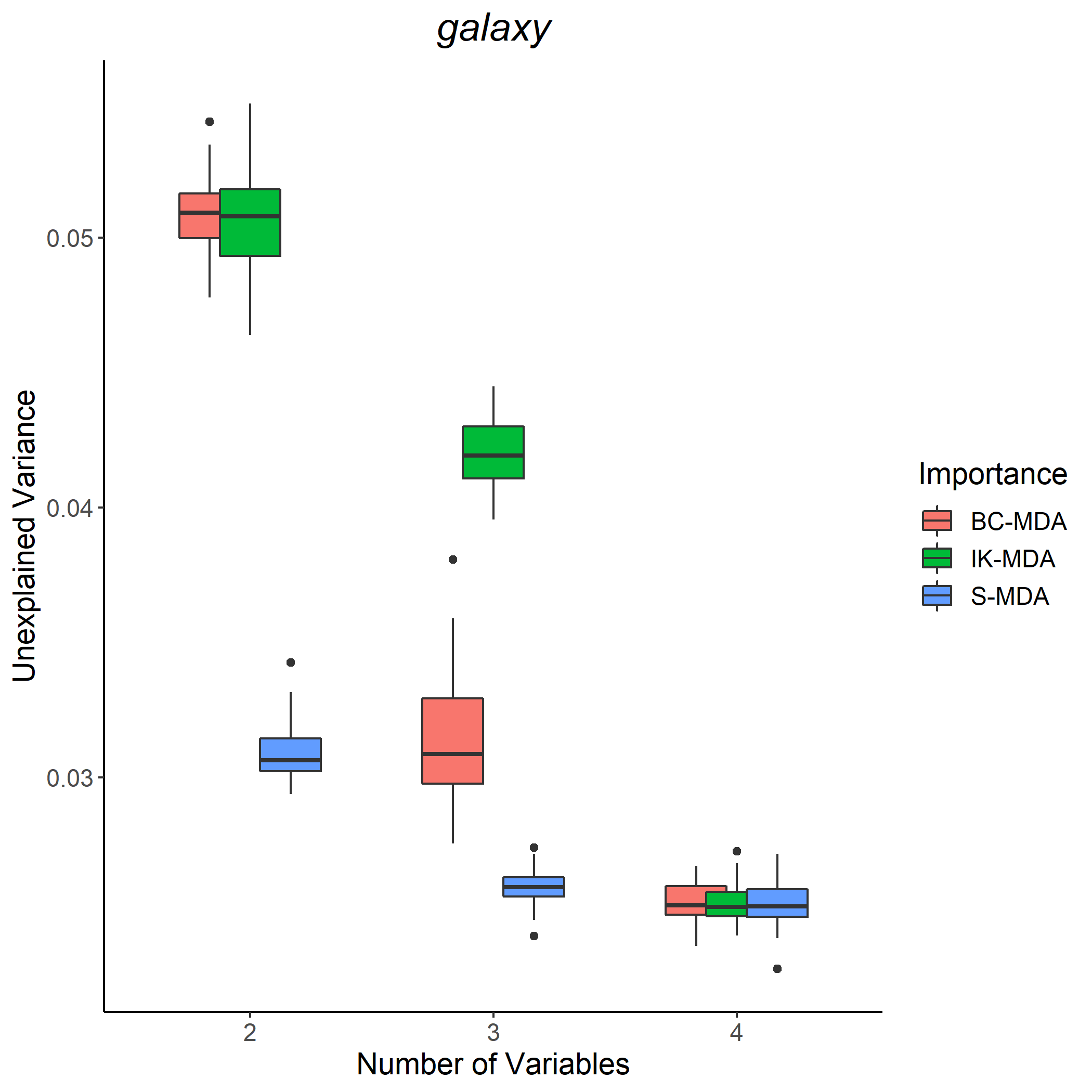}
		\includegraphics[height=7cm,width=7cm]{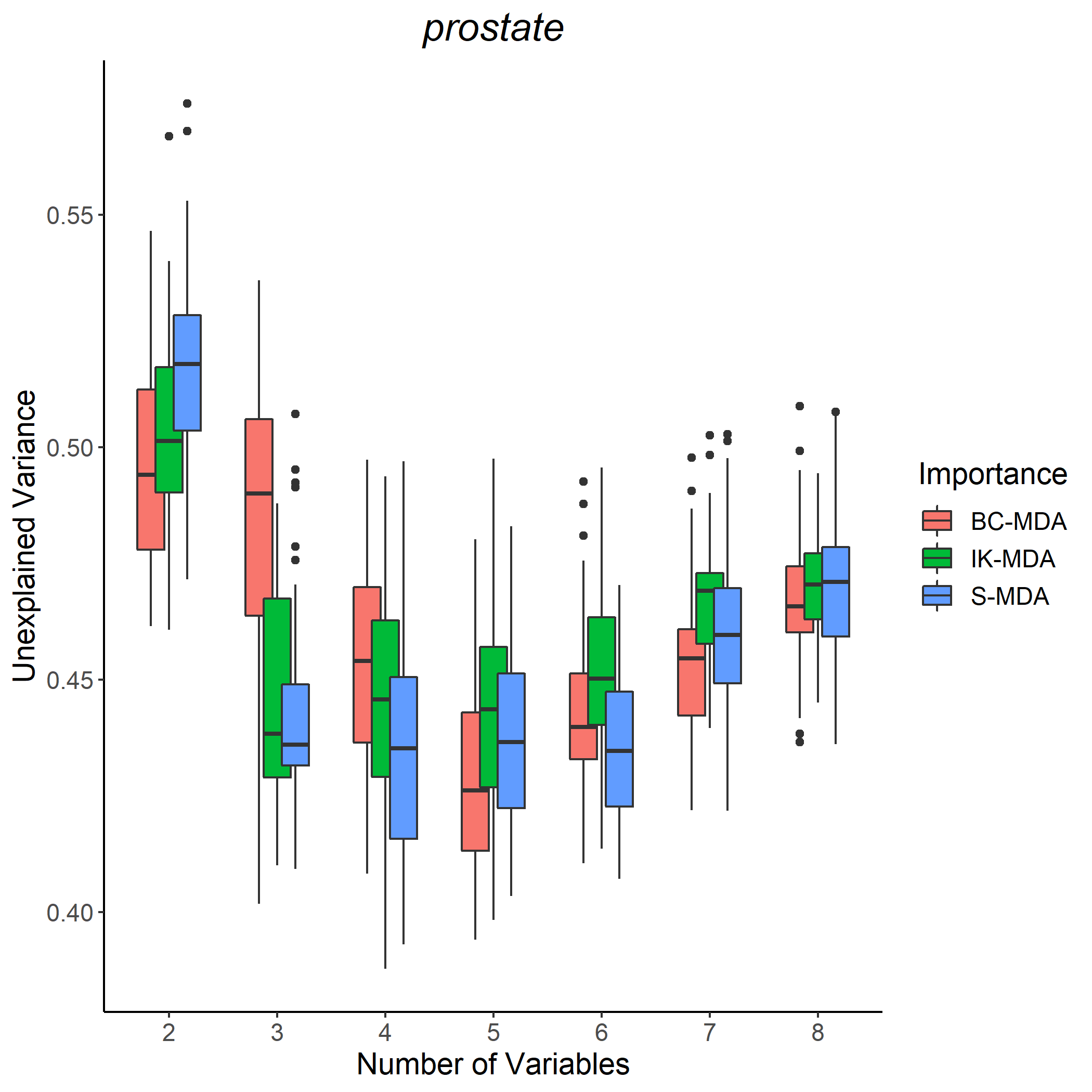}
		\caption{
		Random forest error versus the number of variables for the ``Galaxy'' and ``Prostate'' datasets at each step of the RFE, using different importance measures: BC-MDA, IK-MDA, and Sobol-MDA.
		} \label{fig_RFE_2}
	\end{center}
\end{figure}

\section{Proof of the MDA Consistency} \label{sec_proof_MDA}

\subsection{Assumptions and Theorem \ref{thm_MDA}}

We recall Assumptions \ref{A1}, \ref{A2}, \ref{A3}, \ref{A4}, Proposition \ref{prop_oob_risk}, and Theorem \ref{thm_MDA} for the sake of clarity.

\begin{assumption*} \label{A1}
The response $Y \in \R$ follows \vspace*{-2mm}
\begin{align*}
    Y = m(\bX) + \varepsilon
\end{align*}
where $\bX = (X^{(1)}, \hdots, X^{(p)}) \in [0,1]^p$ admits a density over $[0,1]^p$ bounded from above and below by strictly positive constants, $m$ is continuous, and the noise $\varepsilon$ is sub-Gaussian, independent of $\bX$, and centered.
A sample $\Dn = \{(\bX_1, Y_1), \hdots, (\bX_n, Y_n) \}$ of $n$ independent random variables distributed as $(\bX, Y)$ is available.
\end{assumption*}

\begin{assumption*} \label{A2}
The randomized theoretical CART tree built with the distribution of $(\bX, Y)$ is consistent, that is, for all $\bx \in [0,1]^p$, almost surely,
\begin{align*}
	\lim \limits_{k \to \infty} \Delta(m,  A_k^{\star}(\bx, \Theta)) = 0.
\end{align*}
\end{assumption*}

\begin{assumption*} \label{A3}
The asymptotic regime of $a_n$, the size of the subsampling without replacement, and the number of terminal leaves $t_n$ is such that $a_n \leq n-2$, $a_n/n < 1 - \kappa$ for a fixed $\kappa > 0$, $\lim \limits_{n \to \infty} a_n = \infty$, $\lim \limits_{n \to \infty} t_n = \infty$, and $\lim \limits_{n \to \infty} t_n \frac{(\log(a_n))^9}{a_n} = 0$.
\end{assumption*}

\begin{assumption*} \label{A4}
    The number of trees grows to infinity with the sample size $n$: $M \underset{n \to \infty}{\longrightarrow} \infty$.
\end{assumption*}

\begin{proposition*} \label{prop_oob_risk}
    If Assumption \ref{A1} is satisfied, for a fixed $n$ and $i \in \{1,\hdots,n\}$, we have 
    \begin{align*}
         \Big|\E\big[ \big( m_{M,a_n,n}^{(OOB)}(\bX_i, \bTheta_{M}) - m(\bX_i) \big)^2  \big] - \E\big[ \big(m_{M,a_n,n-1}(\bX, \bTheta_{M}) - m(\bX) \big)^2 \big] \Big| = O\Big(\frac{1}{M}\Big).
    \end{align*}
\end{proposition*}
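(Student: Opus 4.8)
The plan is to compare each of the two mean-squared errors to its infinite-forest ($M\to\infty$) counterpart, show that these two infinite-forest errors are in fact \emph{equal}, and then control the finite-$M$ fluctuations at the rate $1/M$. Conditionally on the data $\Dn$, write $\bar m_i = \E_\Theta[m_{a_n,n}(\bX_i,\Theta)\mid i\notin\Theta^{(S)}]$ for the infinite out-of-bag prediction at $\bX_i$, and $\bar m_{n-1}=\E_\Theta[m_{a_n,n-1}(\bX,\Theta)]$ for the infinite prediction of the forest grown on the $n-1$ points other than $i$, evaluated at a fresh test point $\bX$. Expanding the squares gives the exact decomposition
\begin{align*}
\E\big[(m_{M,a_n,n}^{(OOB)}(\bX_i,\bTheta_M)-m(\bX_i))^2\big] = \E[(\bar m_i-m(\bX_i))^2] + \E[(m_{M,a_n,n}^{(OOB)}(\bX_i,\bTheta_M)-\bar m_i)^2] + R_1,
\end{align*}
and analogously for $m_{M,a_n,n-1}(\bX,\bTheta_M)$ with a remainder $R_2$. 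The cross term $R_2$ vanishes exactly, since $\E_\Theta[m_{M,a_n,n-1}(\bX,\bTheta_M)]=\bar m_{n-1}$; the cross term $R_1$ vanishes up to the event $\{|\Lambda_{n,i}|=0\}$, whose probability is $(a_n/n)^M\le(1-\kappa)^M$ and is therefore $o(1/M)$.

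Next I would show both variance terms are $O(1/M)$. For the forest on $n-1$ points this is immediate: conditionally on the data the $\Theta_\ell$ are i.i.d., so the term equals $\frac1M\E[\V_\Theta(m_{a_n,n-1}(\bX,\Theta))]$, and since $n$ is fixed a single tree prediction is a convex combination of the $Y_k$, whence its conditional second moment is at most $\max_k Y_k^2$, which has finite (albeit $n$-dependent) expectation under the sub-Gaussian noise. For the out-of-bag term I would condition on the random set $\Lambda_{n,i}$: given $\{|\Lambda_{n,i}|=K\}$ with $K\ge1$, independence of the $\Theta_\ell$ makes the $K$ out-of-bag predictions i.i.d.\ with mean $\bar m_i$ and common variance $v_i\le\max_{k\neq i}Y_k^2$, so the conditional variance is $v_i/K$, while on $\{K=0\}$ the contribution $\bar m_i^2$ is weighted by the exponentially small $\P(|\Lambda_{n,i}|=0)$. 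It then remains to bound $\E[\mathds 1_{|\Lambda_{n,i}|\ge1}/|\Lambda_{n,i}|]$; since $|\Lambda_{n,i}|\sim\mathrm{Bin}(M,1-a_n/n)$ with success probability at least $\kappa$, the elementary inequality $1/K\le 2/(K+1)$ combined with $\E[1/(K+1)]=(1-(1-p)^{M+1})/((M+1)p)$ yields $\E[\mathds 1_{|\Lambda_{n,i}|\ge1}/|\Lambda_{n,i}|]\le 2/((M+1)\kappa)=O(1/M)$.

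The heart of the argument is then the identity $\E[(\bar m_i-m(\bX_i))^2]=\E[(\bar m_{n-1}-m(\bX))^2]$. Conditionally on $i\notin\Theta^{(S)}$, a tree subsamples $a_n$ indices uniformly from $\{1,\dots,n\}\setminus\{i\}$, so $\bar m_i$ is exactly the infinite-forest map applied to the held-out point $\bX_i$ together with the $n-1$ observations $\{(\bX_k,Y_k):k\neq i\}$. Because the sample is i.i.d., $\bX_i$ is independent of those $n-1$ observations and distributed as $\bX$, so the pair $(\bar m_i,m(\bX_i))$ has the same law as $(\bar m_{n-1},m(\bX))$, giving equality of the two infinite-forest errors. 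Combining the three steps, the difference of the two mean-squared errors equals a difference of $O(1/M)$ variance terms plus exponentially small remainders, hence is $O(1/M)$.

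I expect the main obstacle to be the out-of-bag variance term, where the number of averaged trees $|\Lambda_{n,i}|$ is itself random: justifying the conditional i.i.d.\ structure given $\Lambda_{n,i}$ (so that the conditional variance is genuinely $v_i/K$) and controlling the inverse-binomial moment $\E[\mathds 1_{|\Lambda_{n,i}|\ge1}/|\Lambda_{n,i}|]$ are the delicate points, whereas the exchangeability identity and the $n-1$ variance bound become comparatively routine once $n$ is held fixed.
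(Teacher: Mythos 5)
Your proof is correct, and it reaches Proposition \ref{prop_oob_risk} by a genuinely different organization of the argument than the paper, even though the two proofs share the same probabilistic core.

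The shared ingredients are: (a) conditionally on $\{\ell\in\Lambda_{n,i}\}$, the tree $m_n(\bX_i,\Theta_\ell)$ is distributed as a tree trained on the $n-1$ observations other than $i$, evaluated at the independent test point $\bX_i\sim\bX$ --- your exchangeability identity is exactly the paper's equation (\ref{eq_n-1}); (b) conditionally on the data and on the realized set $\Lambda_{n,i}$, the out-of-bag tree predictions are i.i.d.\ --- the paper makes the same conditional-independence observation when it factors its double sum over trees; (c) the inverse-binomial-moment bound $\E[\mathds{1}_{Z\geq 1}/Z]\leq 2/\{(M+1)(1-a_n/n)\}$, which is inequality (\ref{bound_gyorfi}) that the paper imports from \citet{gyorfi2006distribution}. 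The difference is the decomposition. The paper expands the OOB risk over pairs of trees and derives the exact identity (\ref{bound_risk_OOB_M}) with coefficients $\delta_{M,n}$ (distinct pairs) and $\gamma_{M,n}$ (diagonal terms); the proposition then follows from Lemma \ref{lemma_tech}, whose main content is showing $1-\delta_{M,n}=O(1/M)$, and controlling $\delta_{M,n}$ --- an inverse \emph{second} moment of $|\Lambda_{n,i}|$ conditional on two trees being out-of-bag --- accounts for most of that lemma's binomial algebra. You instead center each risk at its infinite-forest limit: the two bias terms are equal by (a), both variance terms are $O(1/M)$ by (b) and (c), and the cross terms are exactly zero (standard forest) and $O((a_n/n)^M)$ (OOB side, coming from the event $\{|\Lambda_{n,i}|=0\}$). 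Your route is more modular and strictly lighter: it needs only the first-order inverse moment and bypasses $\delta_{M,n}$ entirely. What the paper's heavier exact identity buys is reuse: the same $(\delta_{M,n},\gamma_{M,n})$ machinery immediately yields Lemma \ref{lemma_oob_risk} (the $2/(1-a_n/n)$ bound on the OOB risk) and its permuted analogue Lemma \ref{lemme_oob_consistency}, which are the inputs the MDA consistency theorems actually consume; under your decomposition those bounds would have to be established separately. The two points you flagged as delicate do hold: conditioning on $\Lambda_{n,i}=S$ leaves $\{\Theta_\ell\}_{\ell\in S}$ i.i.d.\ with the law of $\Theta$ given $i\notin\Theta^{(S)}$ and independent of $\Dn$, so the conditional variance is indeed $v_i/|\Lambda_{n,i}|$ with $v_i$ independent of $|\Lambda_{n,i}|$, and your inverse-moment computation is the standard one. (One cosmetic remark: like the paper's own proof, your constants invoke $a_n/n<1-\kappa$ from Assumption \ref{A3} rather than Assumption \ref{A1} alone; for fixed $n$ only $a_n\leq n-2$ is actually needed.)
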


\begin{theorem*} \label{thm_MDA}
    If Assumptions \ref{A1}, \ref{A2}, and \ref{A3} are satisfied,
	then, for all $M \in \mathbb{N}^{\star}$ and $j \in \{1,\hdots,p\}$ we have
	\begin{align*} &(i) \quad  \widehat{\textrm{MDA}}_{M,n}^{(TT)}(X^{(j)}) \overset{\mathbb{L}^1}{\longrightarrow} \E[(m(\bX) - m(\Xperm))^2] \\ &(ii) \quad \widehat{\textrm{MDA}}_{M,n}^{(BC)}(X^{(j)}) \overset{\mathbb{L}^1}{\longrightarrow} \E[(m(\bX) - m(\Xperm))^2].
	\end{align*}
	If Assumption \ref{A4} is additionally satisfied, then
	\begin{align*}
	(iii) \quad \widehat{\textrm{MDA}}_{M,n}^{(IK)}(X^{(j)}) \overset{\mathbb{L}^1}{\longrightarrow} \E[(m(\bX) - \E[m(\Xperm)|\bX^{(-j)}])^2].
	\end{align*}
\end{theorem*}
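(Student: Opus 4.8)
The plan is to prove all three limits by a single unified scheme, then isolate the one place where the Ishwaran-Kogalur version departs from the other two. The common scheme is: expand each difference of squared errors, substitute $Y = m(\bX) + \varepsilon$ so as to split each summand into a deterministic signal part and a cross term that is linear in the noise $\varepsilon$; discard the noise term using that $\varepsilon$ is centered and independent of the covariates and of the trees; replace every (tree or forest) estimate by $m$ via $\mathbb{L}^2$-consistency, which holds under Assumptions \ref{A1}, \ref{A2}, \ref{A3} by \citet{scornet2015consistency}; and finally pass to the limit with a law of large numbers over the test or out-of-bag sample. A recurring ingredient is that the permutation turns the $j$-th coordinate into an asymptotically independent draw from its marginal, so that the empirical law of the pairs $(X_i, \Xpermi)$ converges to the law of $(\bX, \Xperm)$.

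First I would treat (i). Writing each summand with $u_i = m(X_i') - m_{M,n}(\Xpermi')$ and $v_i = m(X_i') - m_{M,n}(X_i')$, the substitution $Y_i' = m(X_i') + \varepsilon_i'$ yields $u_i^2 - v_i^2 + 2\varepsilon_i'(m_{M,n}(X_i') - m_{M,n}(\Xpermi'))$. The average of the cross terms is centered with variance $O(1/n)$ (using boundedness of $m_{M,n}$ on the compact $[0,1]^p$), hence it disappears in $\mathbb{L}^1$. In the signal part, $v_i^2 \to 0$ by consistency of the forest at $X_i'$, and adding and subtracting $m(\Xpermi')$ reduces $u_i^2$ to $(m(X_i') - m(\Xpermi'))^2$, again by consistency, this time at the permuted point. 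A law of large numbers over the test sample then gives $\frac{1}{n}\sum_i (m(X_i') - m(\Xpermi'))^2 \to \E[(m(\bX) - m(\Xperm))^2]$. Transporting consistency from the law of $\bX$ to that of $\Xperm$ is legitimate because Assumption \ref{A1} forces both densities to be bounded away from $0$ and $\infty$ on $[0,1]^p$, so the two laws are mutually absolutely continuous with bounded density ratio.

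Statement (ii) follows from the very same expansion applied tree by tree, replacing the forest estimate by the single-tree estimate $m_n(\cdot, \Theta_\ell)$ and the test sample by the out-of-bag sample of tree $\ell$, which is independent of that tree's construction and therefore plays the role of an independent test set. Each inner sum converges to $\E[(m(\bX) - m(\Xperm))^2]$, and averaging over the $M$ trees preserves the limit; single-tree consistency, needed here and implicitly in (i) for fixed $M$ since the forest is then an average of $M$ trees, is again supplied by Assumptions \ref{A1}--\ref{A3} via \citet{scornet2015consistency}.

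The genuine difficulty is (iii), where the permuted predictions are averaged over trees \emph{before} squaring, and the permutation $\pi_{j\ell}$ is drawn independently for each tree. Conditionally on $\Dn$, the terms $m_n(X_{i,\pi_{j\ell}}, \Theta_\ell)$ are i.i.d.\ over $\ell \in \Lambda_{n,i}$, so by Assumption \ref{A4} a conditional law of large numbers replaces their average by $\E_{\Theta, \pi}[m_n(\Xpermi, \Theta) \mid \Dn,\, i \notin \Theta^{(S)}]$; since the permutation sends $X_i^{(j)}$ to an essentially uniform draw from the empirical $j$-th marginal, tree consistency makes this conditional mean converge, as $n \to \infty$, to $\E[m(\Xperm) \mid \bX^{(-j)} = X_i^{(-j)}]$. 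The unpermuted out-of-bag estimate converges to $m(X_i)$ by Proposition \ref{prop_oob_risk} combined with forest consistency. Substituting both limits into the expanded Ishwaran-Kogalur sum and again discarding the centered-noise term delivers $\E[(m(\bX) - \E[m(\Xperm) \mid \bX^{(-j)}])^2]$. I expect the main obstacle to be that the two limits $M \to \infty$ and $n \to \infty$ must be handled jointly with $M = M_n \to \infty$: I would decompose the permuted out-of-bag estimate into its conditional mean plus a remainder whose $\mathbb{L}^2$-norm is $O(1/\sqrt{M})$, consistent with the $O(1/M)$ rate of Proposition \ref{prop_oob_risk}, so that the tree-averaging error is negligible uniformly before the sample-size limit is taken, using sub-Gaussianity of $\varepsilon$ and boundedness of $m$ on the compact $[0,1]^p$ to secure the uniform integrability that upgrades convergence to $\mathbb{L}^1$.
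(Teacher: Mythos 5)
Your plan follows essentially the same route as the paper's proof: the same signal/noise expansion of the squared-error differences, the same extension of the $\mathbb{L}^2$-consistency of \citet{scornet2015consistency} to the permuted query point via the bounded density ratio under Assumption \ref{A1}, the same use of the out-of-bag sample as an independent test set of size $n-a_n$ for the tree-wise (BC) version, and, for the IK version, the same two-scale argument (concentration of the tree average around its conditional mean, which is where Assumption \ref{A4} enters, plus convergence of that conditional mean to $\E[m(\Xperm)\mid \bX^{(-j)}]$ by tree consistency). The only points where the paper is more careful than your sketch are technical rather than structural: the permuted summands are not independent, so the law of large numbers is replaced by a direct variance count (each term is correlated with at most two others, giving variance $O(1/n)$); the noise cross terms are bounded using the boundedness of $m$ and $\mathbb{L}^1$-consistency rather than ``boundedness of $m_{M,n}$'' (which fails under sub-Gaussian noise); and the random weights $1/|\Lambda_{n,i}|$ in the out-of-bag average are controlled by explicit binomial moment bounds (the paper's technical lemma) rather than implicitly through the $O(1/M)$ rate of Proposition \ref{prop_oob_risk}, whose constant is not claimed to be uniform in $n$.
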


\subsection{Proof of Theorem \ref{thm_MDA}-(i)} \label{sec_proof_TT}

Assumptions \ref{A1}, \ref{A2} and \ref{A3} are sufficient to slightly extend the $\mathbb{L}^2$-consistency of random forests from \citet[Theorem 1]{scornet2015consistency} to the case where inputs are dependent, and also when the prediction is performed for the permuted sample (i.e, for a query point with a different distribution than the training data). Then, the TT-MDA consistency follows using a standard asymptotic analysis.
\begin{lemma*} \label{lemme_rf_consistency}
    If Assumptions \ref{A1}, \ref{A2}, and \ref{A3} are satisfied, for $M \in \mathbb{N}^{\star}$ we have
    \begin{align*}
    \lim \limits_{n \to \infty} \E[(m_{M,n}(\bX, \bTheta_{M}) - m(\bX))^2] = 0, 
    \end{align*}
    and for all $j \in \{1,\hdots,p\}$
    \begin{align*}
        \lim \limits_{n \to \infty} \E[(m_{M,n}(\Xperm, \bTheta_{M}) - m(\Xperm))^2] = 0.
    \end{align*}
\end{lemma*}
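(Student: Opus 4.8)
The plan is to prove both limits by reducing everything to the $\mathbb{L}^2$-consistency of a \emph{single} randomized tree with respect to the distribution of $\bX$, and then to adapt the analysis of \citet[Theorem 1]{scornet2015consistency} to the present assumptions. First I would dispose of the aggregation and of the number of trees simultaneously. Since $t \mapsto t^2$ is convex and $m_{M,n}(\cdot,\bTheta_{M})$ is the average of the $M$ identically distributed tree estimates $m_n(\cdot,\Theta_\ell)$, Jensen's inequality gives
\begin{align*}
\E[(m_{M,n}(\bX,\bTheta_{M}) - m(\bX))^2] \leq \E[(m_n(\bX,\Theta) - m(\bX))^2],
\end{align*}
so it is enough to show that $\E[(m_n(\bX,\Theta) - m(\bX))^2] \to 0$, a bound that is automatically uniform in $M$.

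Next I would reduce the permuted statement to the unpermuted one through a change of measure. Under Assumption \ref{A1} the density $f$ of $\bX$ satisfies $0 < \underline{c} \leq f \leq \overline{c}$ on $[0,1]^p$; integrating out coordinates shows the marginal densities of $X^{(j)}$ and $\bX^{(-j)}$ lie in $[\underline{c},\overline{c}]$ as well, so the density of $\Xperm$, being the product of these two marginals, is bounded above by $\overline{c}^2$ on the same hypercube. Because $\Xperm$ is independent of both $\Dn$ and $\Theta$, conditioning on $(\Dn,\Theta)$ and comparing the two densities pointwise yields
\begin{align*}
\E[(m_n(\Xperm,\Theta) - m(\Xperm))^2] \leq \frac{\overline{c}^2}{\underline{c}}\, \E[(m_n(\bX,\Theta) - m(\bX))^2].
\end{align*}
Here the hypercube support in Assumption \ref{A1} is essential (see the Remark on the distribution support): it guarantees that $\Xperm$ takes values in the very same set $[0,1]^p$ on which the tree is consistent, so no evaluation ever occurs on a region void of training data. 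Combined with the Jensen step, both displayed limits of the lemma follow from the single statement $\E[(m_n(\bX,\Theta) - m(\bX))^2] \to 0$.

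It remains to establish single-tree consistency, which is where \citet{scornet2015consistency} enters. I would split the error by introducing the pseudo-estimate $\tilde m_n(\bX,\Theta)$ that averages the noiseless values $m(\bX_i)$ over the cell $A_n(\bX,\Theta)$. The approximation term is then dominated by $\E[\Delta(m, A_n(\bX,\Theta))^2]$; since $m$ is continuous on the compact $[0,1]^p$ this quantity is bounded, and it vanishes once one knows that the empirical cell tracks the theoretical cell, so that $\Delta(m, A_n(\bX,\Theta))$ inherits the almost sure vanishing of $\Delta(m, A_k^{\star}(\bX,\Theta))$ supplied by Assumption \ref{A2}, whence dominated convergence applies. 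The estimation term, which isolates the sub-Gaussian noise $\varepsilon$ and the sampling fluctuation of the $m(\bX_i)$, is controlled by a concentration argument resting on the leaf-count regime $t_n (\log a_n)^9 / a_n \to 0$ of Assumption \ref{A3}, which keeps enough observations in each cell.

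The main obstacle is that \citet{scornet2015consistency} state their result for independent (indeed uniform) covariates, whereas here the covariates may be dependent. I would therefore revisit each step where their argument invokes the Lebesgue law and replace it by the two-sided density bound of Assumption \ref{A1}: this bound makes $\mu_{\bX}$ equivalent to Lebesgue measure, so the volume estimates, the number-of-points-per-cell bounds, and the concentration inequalities all transfer up to the constants $\underline{c}$ and $\overline{c}$. The delicate point to verify is the empirical-to-theoretical cell comparison in the dependent setting, since the CART splits are selected from the dependent empirical distribution; but because the density stays bounded away from $0$ and $\infty$, the empirical impurity decreases and the resulting cuts remain comparable to their Lebesgue counterparts, while Assumption \ref{A2} already furnishes the vanishing variation for the \emph{theoretical} tree directly. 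Once this comparison lemma is secured, the rest is the routine bias/variance bookkeeping of \citet{scornet2015consistency}.
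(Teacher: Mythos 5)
Your proposal is correct, and its two central reductions are exactly the paper's: the permuted statement is deduced from the unpermuted one by the same change of measure (the density of $\Xperm$ is the product of the two marginals, hence bounded above by $c_2^2$, while the density of $\bX$ is bounded below by $c_1$, giving the ratio bound $c_2^2/c_1$; the hypercube support is indeed what makes this work), and the tree-level consistency is obtained, as in the paper, by adapting \citet[Theorem 1]{scornet2015consistency}, with the two-sided density bound of Assumption 1 replacing uniformity and Assumption 2 standing in for additivity, which Scornet et al.\ only use to prove consistency of the theoretical CART. Where you genuinely diverge is the treatment of the average over $M$ trees: you invoke Jensen's inequality to bound the finite-forest risk by the single-tree risk, uniformly in $M$, whereas the paper proves the exact identity
\begin{align*}
\E[(m_{M,n}(\bX, \bTheta_{M}) - m(\bX))^2] = \frac{1}{M}\,\E[(m_n(\bX,\Theta) - m(\bX))^2] + \Big(1-\frac{1}{M}\Big)\E[(m_n(\bX) - m(\bX))^2],
\end{align*}
where $m_n(\bx) = \E_{\Theta}[m_n(\bx,\Theta)]$ is the infinite forest, and must therefore also establish consistency of the infinite forest as a separate ingredient. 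Your route is shorter and bypasses the infinite-forest intermediary entirely, which is a real simplification for this lemma taken in isolation. What the paper's exact decomposition buys is reuse downstream: that identity is invoked again in the proofs of the out-of-bag results (Lemma 2 and Proposition 1), where one needs the \emph{reverse} comparison $\frac{1}{M}\E[(m_{a_n,n-1}(\bX,\Theta)-m(\bX))^2] \leq \E[(m_{M,a_n,n-1}(\bX,\bTheta_{M})-m(\bX))^2]$ and the precise $O(1/M)$ bookkeeping, neither of which follows from a one-sided Jensen bound.
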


\begin{proof}[Proof of Theorem \ref{thm_MDA}-(i)]
    We assume that \ref{A1}, \ref{A2}, and \ref{A3} are satisfied, and fix $j \in \{1,\hdots,p\}$ and $M \in \mathbb{N}^{\star}$. \\
    Firstly, according to Lemma \ref{lemme_rf_consistency}, we have
    \begin{align} \label{X_consistency}
        \lim \limits_{n \to \infty} \E[(m_{M,n}(\bX, \bTheta_{M}) - m(\bX))^2] = 0, 
    \end{align}
    and
    \begin{align} \label{Z_consistency}
        \lim \limits_{n \to \infty} \E[(m_{M,n}(\Xperm, \bTheta_{M}) - m(\Xperm))^2] = 0.
    \end{align}
    
    Next, we can break down the Train/Test-MDA as follows
    \begin{align*}
    \widehat{\textrm{MDA}}_{M,n}^{(TT)}(X^{(j)}) = \frac{1}{n} \sum_{i = 1}^{n}&  \big(Y'_i - m_{M,n}(\Xpermi', \bTheta_{M})\big)^2 - \big(Y'_i - m_{M,n}(\bX'_i, \bTheta_{M})\big)^2 \\
    = \frac{1}{n} \sum_{i = 1}^{n}&  \big(m(\bX'_i) + \varepsilon'_i - m_{M,n}(\Xpermi', \bTheta_{M})\big)^2 - \big(m(\bX'_i) + \varepsilon'_i - m_{M,n}(\bX'_i, \bTheta_{M})\big)^2 \\
     = \frac{1}{n} \sum_{i = 1}^{n}&  \big([m(\bX'_i) - m(\Xpermi')] + [m(\Xpermi') - m_{M,n}(\Xpermi', \bTheta_{M})] + \varepsilon'_i \big)^2 \\ & - \big(m(\bX'_i) - m_{M,n}(\bX'_i, \bTheta_{M}) + \varepsilon'_i\big)^2 \\
     =  \frac{1}{n} \sum_{i = 1}^{n}&  [m(\bX'_i) - m(\Xpermi')]^2 
      + [m(\Xpermi') - m_{M,n}(\Xpermi', \bTheta_{M})]^2 + \varepsilon'^2_i \\
     & + 2[m(\bX'_i) - m(\Xpermi')][m(\Xpermi') - m_{M,n}(\Xpermi', \bTheta_{M})] \\[1em]
     & + 2\varepsilon'_i[m(\bX'_i) - m(\Xpermi')] + 2\varepsilon'_i[m(\Xpermi') - m_{M,n}(\Xpermi', \bTheta_{M})] \\[1em]
     & - [m(\bX'_i) - m_{M,n}(\bX'_i, \bTheta_{M})]^2 - \varepsilon'^2_i - 2\varepsilon'_i[m(\bX'_i) - m_{M,n}(\bX'_i, \bTheta_{M})].
    \end{align*}
    Then, we use the triangle inequality and the previous expression to get the following bound
    \begin{align}
    \E\big[\big|\widehat{\textrm{MDA}}_{M,n}^{(TT)}(X^{(j)}) & - \E[(m(\bX) -   m(\Xperm))^2]\big|\big] \nonumber\\
     &\leq \E\big[\big|\frac{1}{n} \sum_{i = 1}^{n}  [m(\bX'_i) - m(\Xpermi')]^2 - \E[(m(\bX) - m(\Xperm))^2]\big|\big] \label{term_1}\\
     &+ \E\big[\frac{1}{n}\sum_{i = 1}^{n}[m(\Xpermi') - m_{M,n}(\Xpermi', \bTheta_{M})]^2\big] \label{term_2}\\
     & + \E\big[\big|\frac{2}{n}\sum_{i = 1}^{n}[m(\bX'_i) - m(\Xpermi')][m(\Xpermi') - m_{M,n}(\Xpermi', \bTheta_{M})]\big|\big]
     \label{term_3}\\ & + \E\big[\big|\frac{2}{n}\sum_{i = 1}^{n}\varepsilon'_i[m(\bX'_i) - m(\Xpermi')]\big|\big]
     \label{term_4}\\ & + \E\big[\big|\frac{2}{n}\sum_{i = 1}^{n}\varepsilon'_i[m(\Xpermi') - m_{M,n}(\Xpermi', \bTheta_{M})]\big|\big]
     \label{term_5}\\ & + \E\big[\frac{1}{n}\sum_{i = 1}^{n}[m(\bX'_i) - m_{M,n}(\bX'_i, \bTheta_{M})]^2\big] \label{term_6}\\ & + \E\big[\big|\frac{2}{n}\sum_{i = 1}^{n}\varepsilon'_i[m(\bX'_i) - m_{M,n}(\bX'_i, \bTheta_{M})]\big|\big]. \label{term_7}
    \end{align}
    Now, let us consider all the terms on the right hand side one by one.
    
    The first and fourth terms (\ref{term_1}) and (\ref{term_4}) do not depend on the forest estimate, but it is not possible to simply apply the law of large numbers since the permutation introduces dependence within samples. For both terms, we prove $\mathbb{L}^2$-convergence, which implies the $\mathbb{L}^1$-convergence we are looking for.
    For the first term (\ref{term_1}), we define $\Delta_{n,1}$ as
    \begin{align*}
        \Delta_{n,1} = \frac{1}{n} \sum_{i = 1}^{n}  [m(\bX'_i) - m(\Xpermi')]^2 - \E[(m(\bX) - m(\Xperm))^2].
    \end{align*}
    Clearly, we have $\E[\Delta_{n,1}] = 0$. Its variance writes
    \begin{align*}
        \V[\Delta_{n,1}] = \frac{1}{n^2} \E\big[ \sum_{i,k = 1}^{n}  &([m(\bX_i) - m(\Xpermi)]^2 - \E[(m(\bX) - m(\Xperm))^2]) \\ &\times([m(\bX_k) - m(\Xpermk)]^2 - \E[(m(\bX) - m(\Xperm))^2]) \big].
    \end{align*}
    Because of the permutation, each element of the sum is dependent on only two other terms. Therefore, only $3n$ terms of the double sum are not null, and because $m$ is bounded (continuous on a compact), we get
    \begin{align*}
        \V[\Delta_{n,1}] \leq \frac{3}{n} \times 64 ||m||_{\infty}^4.
    \end{align*}
    Thus, $\lim_{n \to \infty} \V[\Delta_{n,1}] = 0$, which proves $\mathbb{L}^2$-convergence of $\Delta_{n,1}$ towards $\E[\Delta_{n,1}] = 0$.
    We can handle the fourth term (\ref{term_4}) in the same way. 
    For the second term (\ref{term_2}), by symmetry,
    \begin{align*}
        \E\big[\frac{1}{n}\sum_{i = 1}^{n}[m(\Xpermi') - m_{M,n}(\Xpermi', \bTheta_{M})]^2\big]
        = \E[ (m(\Xperm) - m_{M,n}(\Xperm, \bTheta_{M}))^2],
    \end{align*}
    which tends to zero according to (\ref{Z_consistency}). The sixth term (\ref{term_6}) is handled similarly using (\ref{X_consistency}).
    Since $m$ is bounded, we can bound the third term (\ref{term_3})
    \begin{align*}
        \E\big[\big|\frac{2}{n}\sum_{i = 1}^{n}[m(\bX'_i) - m(\Xpermi')]&[m(\Xpermi') - m_{M,n}(\Xpermi', \bTheta_{M})]\big|\big] \\
        &\leq 4\|m\|_{\infty} \E[|m(\Xperm) - m_{M,n}(\Xperm, \bTheta_{M})|],
    \end{align*}
    and since $\mathbb{L}^2$ convergence implies $\mathbb{L}^1$ convergence, we use (\ref{Z_consistency}) to obtain the convergence towards $0$ of this third term (\ref{term_3}).
    For the fifth term (\ref{term_5}) we first apply the triangle inequality, and by symmetry we get
    \begin{align*}
        \E\big[\big|\frac{2}{n}\sum_{i = 1}^{n}\varepsilon'_i[m(\Xpermi') - m_{M,n}(\Xpermi', \bTheta_{M})]\big|\big] 
        &\leq 2\E[|\varepsilon'(m(\Xperm) - m_{M,n}(\Xperm, \bTheta_{M}))|] \\
        &\leq 2\E[|\varepsilon'|]\E[|m(\Xperm) - m_{M,n}(\Xperm, \bTheta_{M})|],
    \end{align*}
    which tends to zero according to (\ref{Z_consistency}). Similarly, the last term (\ref{term_7}) is handled with (\ref{X_consistency}).
    Gathering all previous convergence results on \eqref{term_1}-\eqref{term_7}, we have for all $M$, for all $j \in \{1, \hdots, p\}$, 
    \begin{align*}
	     \widehat{\textrm{MDA}}_{M,n}^{(TT)}(X^{(j)}) \overset{\mathbb{L}^1}{\longrightarrow} \E[(m(\bX) - m(\Xperm))^2].
	\end{align*}
\end{proof}

\begin{proof}[Proof of Lemma \ref{lemme_rf_consistency}]
We assume that Assumptions \ref{A1}, \ref{A2}, and \ref{A3} are satisfied, and fix $j \in \{1,\hdots,p\}$ and $M \in \mathbb{N}^{\star}$.
We first introduce the infinite forest estimate $m_n(\bx)$ defined as $m_n(\bx) = \E_{\Theta}[m_{n}(\bx, \Theta)]$ where $m_{n}(\bx, \Theta)$ is the randomized CART estimate.

Theorem 1 from \citet{scornet2015consistency} states the $\mathbb{L}^2$-consistency of infinite random forests. It relies on Assumption \ref{A3} for the asymptotic regime of $a_n$ and $t_n$, and on a modified version of \ref{A1}, where the regression function is additive and $\bX$ is uniformly distributed over $[0,1]^p$. 
Here, we extend this result to any continuous regression function and any positive distribution for $\bX$ with support on the unit cube.
First, the extension to the case where $\bX$ has any distribution bounded from above and below by positive constants can be easily obtained by several technical adaptations as already highlighted in \citet{scornet2020trees}. Secondly, notice that the additive structure of the regression function is only required in \citet{scornet2015consistency} to show the consistency of a theoretical randomized CART.
Therefore we can drop the additivity assumption and replace it by Assumption \ref{A2}.
Overall, we can extend Theorem 1 from \citet{scornet2015consistency}: provided that Assumptions \ref{A1}, \ref{A2}, and \ref{A3} are satisfied, we have
\begin{align} \label{eq_X}
    \lim \limits_{n \to \infty} \E[(m_n(\bX) - m(\bX))^2] = 0.
\end{align}
Next, this result needs to be extended when the query point $\bX$  is replaced by $\Xperm$.
From Assumption \ref{A1}, $\bX$ admits a density $f_X$ over $[0,1]^p$. By construction, the random vector $\Xperm$ is the vector $\bX$ where the $j$-th component is replaced by an independent copy of $X^{(j)}$. Therefore $\Xperm$ admits a density $f_{\pi_j}$, which is the product of the densities of $X^{(j)}$ and $\bX^{(-j)}$, i.e., for $\bx \in [0,1]^p$,
\begin{align} \label{eq_fZ}
    f_{\pi_j}(\bx) = \int_{[0,1]^{p-1}} f_X(\bx)d\bx^{(-j)} \times \int_{[0,1]} f_X(\bx)d\bx^{(j)}.
\end{align}
From Assumption \ref{A1}, $f_X$ is bounded from above and below by positive constants. Thus, it exists $c_1, c_2 > 0$ such that for all $\bx \in [0,1]^p$,
\begin{align} \label{eq_fX}
    c_1 \leq f_X(\bx) \leq c_2.
\end{align}
Combining (\ref{eq_fX}) and (\ref{eq_fZ}), we obtain that for all $\bx \in [0,1]^p$, $
    c_1^2 \leq f_{\pi_j}(\bx) \leq c_2^2$, and consequently, 
\begin{align*}
    \underset{\bx \in [0,1]^p}{\sup}\frac{f_{\pi_j}(\bx)}{f_X(\bx)} \leq \frac{c_2^2}{c_1}.
\end{align*}
Now, we write
\begin{align*}
    \E[(m_n(\Xperm) - m(\Xperm))^2|\Dn] &= 
    \int_{[0,1]^p} (m_n(\bx) - m(\bx))^2 f_{\pi_j}(\bx) d\bx \\
    &= \int_{[0,1]^p} (m_n(\bx) - m(\bx))^2 f_X(\bx) \frac{f_{\pi_j}(\bx)}{f_X(\bx)} d\bx \\
    &\leq \frac{c_2^2}{c_1} \int_{[0,1]^p} (m_n(\bx) - m(\bx))^2 f_X(\bx) d\bx \\
    &\leq \frac{c_2^2}{c_1}\E[(m_n(\bX) - m(\bX))^2|\Dn].
\end{align*}
Taking expectations on both sides and using (\ref{eq_X}), we finally obtain
\begin{align} \label{eq_Z}
        \lim \limits_{n \to \infty} \E[(m_n(\Xperm) - m(\Xperm))^2] = 0.
\end{align}

Equations (\ref{eq_X}) and (\ref{eq_Z}) state that infinite forests evaluated at $\bX$ or $\Xperm$ are $\mathbb{L}^2$ consistent.
The first of these two results can be extended to get the consistency of a single randomized CART $m_n(\bX, \Theta)$, as shown in \citet{scornet2015consistency} by an easy adaptation of the infinite forest case. Formally, we obtain
\begin{align} \label{eq_X_CART}
    \lim \limits_{n \to \infty}
    \E[(m_n(\bX, \Theta) - m(\bX))^2] = 0.
\end{align}
The exact same reasoning as for the infinite forest above applies to get the extension to $\Xperm$, and thus, we have
\begin{align} \label{eq_Xperm_CART}
    \lim \limits_{n \to \infty}
    \E[(m_n(\Xperm, \Theta) - m(\Xperm))^2] = 0.
\end{align}

Now, we expand the final quantity of interest $\E[(m_{M,n}(\bX, \bTheta_{M}) - m(\bX))^2]$ (and its counterpart for $\Xperm$):
\begin{align*}
    \E[(m_{M,n}(\bX,& \bTheta_{M}) - m(\bX))^2] \\ = &\E\big[\big(\frac{1}{M} \sum_{\ell = 1}^{M} m_n(\bX, \bTheta_{\ell}) - m(\bX)\big)^2\big] \\
    = &\E\big[\E\big[\big(\frac{1}{M} \sum_{\ell = 1}^{M} m_n(\bX, \bTheta_{\ell}) - m(\bX)\big)^2\big|\bX,\Dn\big]\big] \\
    = &\frac{1}{M^2} \E\big[\E\big[\sum_{\ell,\ell' = 1}^{M} [m_n(\bX, \bTheta_{\ell}) - m(\bX)][m_n(\bX, \bTheta_{\ell'}) - m(\bX)] \big|\bX,\Dn\big]\big] \\
     = &\frac{1}{M^2} \E\big[\E\big[ \sum_{\ell = 1}^{M} \big(m_n(\bX, \bTheta) - m(\bX)\big)^2 \big|\bX,\Dn\big]\big] \\
      &+ \frac{1}{M^2} \E\big[\E\big[\sum_{\ell \neq \ell'} [m_n(\bX, \bTheta_{\ell}) - m(\bX)][m_n(\bX, \bTheta_{\ell'}) - m(\bX)] \big|\bX,\Dn\big]\big].
\end{align*}
Conditional on $(\bX, \Dn)$, the random variables $m_n(\bX, \bTheta_{\ell})$ for $\ell = 1, \hdots, M$ are iid. Hence
\begin{align} \label{risk_forest_cart}
    \E[(m_{M,n}(\bX,& \bTheta_{M}) - m(\bX))^2] \nonumber\\
     = &\frac{1}{M} \E\big[\E\big[ \big(m_n(\bX, \bTheta) - m(\bX)\big)^2 \big|\bX,\Dn\big]\big] \nonumber\\
      &+ \frac{1}{M^2} \E\big[\sum_{\ell \neq \ell'} \big(\E[m_n(\bX, \bTheta_{\ell})\big|\bX,\Dn] - m(\bX)\big) \big(\E[m_n(\bX, \bTheta_{\ell'})\big|\bX,\Dn\big] - m(\bX) \big) \big] \nonumber\\
      = &\frac{1}{M} \E\big[\big(m_n(\bX, \bTheta) - m(\bX)\big)^2 \big] + \big(1 - \frac{1}{M}\big) \E\big[\big(m_n(\bX) - m(\bX)\big)^2 \big].
\end{align}
Using (\ref{eq_X}) and (\ref{eq_X_CART}), we obtain the final result
\begin{align*}
    \lim \limits_{n \to \infty} \E[(m_{M,n}(\bX, \bTheta_{M}) - m(\bX))^2] = 0,
\end{align*}
which also holds for $\Xperm$, using (\ref{eq_Z}) and (\ref{eq_Xperm_CART}): 
\begin{align*}
    \lim \limits_{n \to \infty} \E[(m_{M,n}(\Xperm, \bTheta_{M}) - m(\Xperm))^2] = 0.
\end{align*}

\end{proof}

\subsection{Proof of Theorem \ref{thm_MDA}-(ii)} \label{sec_proof_BC}

Theorem \ref{thm_MDA}-(i) can be quite easily adapted to the BC-MDA (ii).

\begin{proof}[Proof of Theorem \ref{thm_MDA}-(ii)]
We assume that Assumptions \ref{A1}-\ref{A3} are satisfied, and fix $j \in \{1,\hdots,p\}$ and $M \in \mathbb{N}^{\star}$. 
Recall that the Breiman-Cutler MDA is formally defined by
\begin{align*}
    \widehat{\textrm{MDA}}_{M,n}^{(BC)}(X^{(j)}) =
    \frac{1}{M} \sum_{\ell = 1}^{M}
    \frac{1}{N_{n,\ell}} \sum_{i = 1}^{n}  \big[(Y_i - m_{n}(\bX_{i,\pi_{j\ell}}, \Theta_{\ell}))^2 - (Y_i - m_{n}(\bX_i, \Theta_{\ell}))^2\big]
    \mathds{1}_{i \notin \Theta_{\ell}^{(S)}},
\end{align*}
where $N_{n,\ell} = \sum_{i = 1}^{n}  \mathds{1}_{i \notin \Theta_{\ell}^{(S)}}$ is the size of the out-of-bag sample of the $\ell$-th tree.

Since $a_n$ observations are subsampled without replacement prior to the construction of each tree, all out-of-bag samples have the same constant size of $N_{n,\ell} = n - a_n$. Using the triangle inequality, we have
\begin{align*}
    \E\big[\big|\widehat{\textrm{MDA}}_{M,n}^{(BC)}(X^{(j)}) - \E[(m(\bX) - &m(\Xperm))^2]\big|\big] \\
    \leq 
    \frac{1}{M} \sum_{\ell = 1}^{M}
    \frac{1}{n - a_n}  \E\big[&\big|\sum_{i = 1}^{n} [(Y_i - m_{n}(\bX_{i,\pi_{j\ell}}, \Theta_{\ell}))^2 - (Y_i - m_{n}(\bX_i, \Theta_{\ell}))^2 \\
    & - \E[(m(\bX) - m(\Xperm))^2]] \mathds{1}_{i \notin \Theta_{\ell}^{(S)}} \big| \big] ,
\end{align*}
and by symmetry, this boils down to 
\begin{align*}
    \E\big[\big|\widehat{\textrm{MDA}}_{M,n}^{(BC)}(X^{(j)}) - \E[(m(\bX) - &m(\Xperm))^2]\big|\big] \\
    \leq 
    \frac{1}{n - a_n} \E\big[&\big|\sum_{i = 1}^{n} [(Y_i - m_{n}(\bX_{i,\pi_{j1}}, \Theta_{1}))^2 - (Y_i - m_{n}(\bX_i, \Theta_{1}))^2 \\
    & - \E[(m(\bX) - m(\Xperm))^2]]
     \mathds{1}_{i \notin \Theta_{1}^{(S)}}\big|\big].
\end{align*}
Next, we expand the sum in the right hand side and obtain a similar decomposition as the one in the proof of Theorem~\ref{thm_MDA}-(i),
\begin{align*}
    \frac{1}{n - a_n} \sum_{i = 1}^{n} [(Y_i - m_{n}(&\bX_{i,\pi_{j1}}, \Theta_{1}))^2 - (Y_i - m_{n}(\bX_i, \Theta_{1}))^2]\mathds{1}_{i \notin \Theta_{1}^{(S)}}\\[-0.5em]
    = \frac{1}{n - a_n} \sum_{i = 1}^{n} & [([m(\bX_i) - m(\bX_{i,\pi_{j1}})] + [m(\bX_{i,\pi_{j1}}) - m_{n}(\bX_{i,\pi_{j1}}, \Theta_{1})] + \varepsilon_i)^2 \\[-0.5em]
    &- ([m(\bX_i) - m_{n}(\bX_i, \Theta_{1})] + \varepsilon_i)^2] \mathds{1}_{i \notin \Theta_{1}^{(S)}}\\[-0.5em]
     = \frac{1}{n - a_n} \sum_{i = 1}^{n} &  [m(\bX_i) - m(\bX_{i,\pi_{j1}})]^2 \mathds{1}_{i \notin \Theta_{1}^{(S)}} \\[-0.5em]
     &+ [m(\bX_{i,\pi_{j1}}) - m_{n}(\bX_{i,\pi_{j1}}, \Theta_{1})]^2\mathds{1}_{i \notin \Theta_{1}^{(S)}} + \varepsilon_i^2\mathds{1}_{i \notin \Theta_{1}^{(S)}} \\
     & + 2[m(\bX_i) - m(\bX_{i,\pi_{j1}})][m(\bX_{i,\pi_{j1}}) - m_{n}(\bX_{i,\pi_{j1}}, \Theta_{1})]\mathds{1}_{i \notin \Theta_{1}^{(S)}}
     \\ & + 2 \varepsilon_i[m(\bX_i) - m(\bX_{i,\pi_{j1}})]\mathds{1}_{i \notin \Theta_{1}^{(S)}}
     \\ & + 2 \varepsilon_i[m(\bX_{i,\pi_{j1}}) - m_{n}(\bX_{i,\pi_{j1}}, \Theta_{1})]\mathds{1}_{i \notin \Theta_{1}^{(S)}}
     \\ & - [m(\bX_i) - m_{n}(\bX_i, \Theta_{1})]^2\mathds{1}_{i \notin \Theta_{1}^{(S)}} - \varepsilon_i^2\mathds{1}_{i \notin \Theta_{1}^{(S)}}\\ & -  2 \varepsilon_i[m(\bX_i) - m_{n}(\bX_i, \Theta_{1})]\mathds{1}_{i \notin \Theta_{1}^{(S)}}.
    \end{align*}
    Thus, we have the following bound
    \begin{align}
    \E\big[\big|&\widehat{\textrm{MDA}}_{M,n}^{(BC)}(X^{(j)}) - \E[(m(\bX) -   m(\Xperm))^2]\big|\big] \nonumber \\
     &\leq \E\big[\big|\frac{1}{n - a_n} \sum_{i = 1}^{n}  ([m(\bX_i) - m(\bX_{i,\pi_{j1}})]^2 - \E[(m(\bX) - m(\Xperm))^2]) \mathds{1}_{i \notin \Theta_{1}^{(S)}} \big|\big] \label{term_BC_1}\\
     &+ \E\big[\frac{1}{n - a_n}\sum_{i = 1}^{n}[m(\bX_{i,\pi_{j1}}) - m_{n}(\bX_{i,\pi_{j1}}, \Theta_{1})]^2\mathds{1}_{i \notin \Theta_{1}^{(S)}}\big] \label{term_BC_2}\\
     & + \E\big[\big|\frac{2}{n - a_n}\sum_{i = 1}^{n}[m(\bX_i) - m(\bX_{i,\pi_{j1}})][m(\bX_{i,\pi_{j1}}) - m_{n}(\bX_{i,\pi_{j1}}, \Theta_{1})]\mathds{1}_{i \notin \Theta_{1}^{(S)}}\big|\big]
     \label{term_BC_3}\\ & + \E\big[\big|\frac{2}{n - a_n}\sum_{i = 1}^{n}\varepsilon_i[m(\bX_i) - m(\bX_{i,\pi_{j1}})]\mathds{1}_{i \notin \Theta_{1}^{(S)}}\big|\big]
     \label{term_BC_4}\\ & + \E\big[\big|\frac{2}{n - a_n}\sum_{i = 1}^{n}\varepsilon_i[m(\bX_{i,\pi_{j1}}) - m_{n}(\bX_{i,\pi_{j1}}, \Theta_{1})]\mathds{1}_{i \notin \Theta_{1}^{(S)}}\big|\big]
     \label{term_BC_5}\\ & + \E\big[\frac{1}{n - a_n}\sum_{i = 1}^{n}[m(\bX_i) - m_{n}(\bX_i, \Theta_{1})]^2\mathds{1}_{i \notin \Theta_{1}^{(S)}}\big] \label{term_BC_6}\\ & +  \E\big[\big|\frac{2}{n - a_n}\sum_{i = 1}^{n}\varepsilon_i[m(\bX_i) - m_{n}(\bX_i, \Theta_{1})]\mathds{1}_{i \notin \Theta_{1}^{(S)}}\big|\big]. \label{term_BC_7}
    \end{align}

    Now, let us consider all the terms on the right hand side one by one.
    
    For the first term (\ref{term_BC_1}), we define $\Delta_{n,1}$ as
    \begin{align*}
        \Delta_{n,1} = \sum_{i = 1}^{n}  \frac{1}{n - a_n} ([m(\bX_i) - m(\bX_{i,\pi_{j1}})]^2 - \E[(m(\bX) - m(\Xperm))^2]) \mathds{1}_{i \notin \Theta_{1}^{(S)}}.
    \end{align*}
    Its expectation is
     \begin{align*}
        \E[\Delta_{n,1}] = &\E[\frac{n}{n - a_n} ([m(\bX_1) - m(\bX_{1,\pi_{j1}})]^2 - \E[(m(\bX) - m(\Xperm))^2]) \mathds{1}_{1 \notin \Theta_{1}^{(S)}}] \\
        &= \frac{n}{n - a_n} \E[(m(\bX_1) - m(\bX_{1,\pi_{j1}}))^2 - \E[(m(\bX) - m(\Xperm))^2]] \P(1 \notin \Theta_{1}^{(S)}) \\&=0.
    \end{align*}
    Next, observe that each term of the sum in $\Delta_{n,1}$ is dependent on two other terms because of the permutation of the $j$-th component, then we have $\V[\Delta_{n,1}] = O(1/(n - a_n))$. By Assumption \ref{A3}, $a_n/n < 1 - \kappa$ with a fixed $\kappa > 0$, thus $\V[\Delta_{n,1}] = O(1/n)$. Since $ \E[\Delta_{n,1}] = 0$ and $\lim_{n\to\infty} \V[\Delta_{n,1}] = 0$, $\Delta_{n,1}$ converges towards $0$ in $\mathbb{L}^2$, which implies $\mathbb{L}^1$-convergence.
    We can handle the fourth term (\ref{term_BC_4}) in the same way.
    For the second term (\ref{term_BC_2}),
    \begin{align*}
        \E\big[\frac{1}{n - a_n}&\sum_{i = 1}^{n}[m(\bX_{i,\pi_{j1}}) - m_{n}(\bX_{i,\pi_{j1}}, \Theta_{1})]^2\mathds{1}_{i \notin \Theta_{1}^{(S)}}\big]\\
        = &\sum_{i = 1}^{n}\E\big[[m(\bX_{i,\pi_{j1}}) - m_{n}(\bX_{i,\pi_{j1}}, \Theta_{1})]^2\big|i \notin \Theta_{1}^{(S)}\big] \frac{\P(i \notin \Theta_{1}^{(S)})}{n - a_n} \\
        = &\frac{1}{n} \sum_{i = 1}^{n}\E\big[[m(\bX_{i,\pi_{j1}}) - m_{n}(\bX_{i,\pi_{j1}}, \Theta_{1})]^2\big|i \notin \Theta_{1}^{(S)}\big]
    \end{align*}
    where the last equality results from $\P(i \notin \Theta_{1}^{(S)}) = (n-a_n)/n$.
    The conditioning event $\{i \notin \Theta_{1}^{(S)}\}$ means that the observation of index $i$ belongs to the out-of-bag sample. Thus, it is strictly equivalent to consider the tree trained with the sample $\Dn \setminus (\bX_i, Y_i)$ of size $n - 1$ with a subsampling size $a_n$. Furthermore, we can replace the query point $\bX_{i,\pi_{j1}}$ by $\Xperm$ because these two random vectors are iid and both independent of the training data of $m_{a_n,n-1}$. Then,
    \begin{align*}
        \E\big[\frac{1}{n - a_n}&\sum_{i = 1}^{n}[m(\bX_{i,\pi_{j1}}) - m_{n}(\bX_{i,\pi_{j1}}, \Theta_{1})]^2\mathds{1}_{i \notin \Theta_{1}^{(S)}}\big]\\
        = &\frac{1}{n} \sum_{i = 1}^{n}\E\big[[m(\Xperm) - m_{a_n,n-1}(\Xperm, \Theta)]^2\big] \\
        =& \E[ (m(\Xperm) - m_{a_n,n-1}(\Xperm, \Theta))^2],
    \end{align*}
    which tends to zero according to the second statement in Lemma \ref{lemme_rf_consistency} for $M=1$. The sixth term (\ref{term_BC_6}) is handled similarly using the first part of Lemma \ref{lemme_rf_consistency}.
    Since $m$ is bounded, we can bound the third term (\ref{term_BC_3})
    \begin{align*}
        \E\big[\big|\frac{2}{n - a_n}\sum_{i = 1}^{n}&[m(\bX_i) - m(\bX_{i,\pi_{j1}})][m(\bX_{i,\pi_{j1}}) - m_{n}(\bX_{i,\pi_{j1}}, \Theta_{1})]\mathds{1}_{i \notin \Theta_{1}^{(S)}}\big|\big] \\
        \leq & \frac{4||m||_{\infty}}{n - a_n} \E\big[\sum_{i = 1}^{n}\big|m(\bX_{i,\pi_{j1}}) - m_{n}(\bX_{i,\pi_{j1}}, \Theta_{1})\big|\times\mathds{1}_{i \notin \Theta_{1}^{(S)}}\big] \\
        \leq & \frac{4||m||_{\infty}}{n} \sum_{i = 1}^{n} \E\big[\big|m(\bX_{i,\pi_{j1}}) - m_{a_n,n-1}(\bX_{i,\pi_{j1}}, \Theta_{1})\big|\big] \\
        \leq & 4||m||_{\infty} \E\big[\big|m(\Xperm) - m_{a_n,n-1}(\Xperm, \Theta)\big|\big],
    \end{align*}
    which tends to zero according to Lemma~\ref{lemme_rf_consistency} (with $M = 1$).
    Similarly, for the fifth term (\ref{term_BC_5}), we have
    \begin{align*}
        \E\big[\big|\frac{2}{n - a_n}\sum_{i = 1}^{n}&\varepsilon_i[m(\bX_{i,\pi_{j1}}) - m_{n}(\bX_{i,\pi_{j1}}, \Theta_{1})]\mathds{1}_{i \notin \Theta_{1}^{(S)}}\big|\big] \\
        &\leq 2\E[|\varepsilon|] \E\big[\big|m(\Xperm) - m_{a_n,n-1}(\Xperm, \Theta)\big|\big],
    \end{align*}
    and the convergence towards $0$ is again given by Lemma \ref{lemme_rf_consistency}. The last term (\ref{term_BC_7}) is handled in the same way.
    Gathering all previous convergence results on \eqref{term_BC_1}-\eqref{term_BC_7}, we have for all $M$, for all $j \in \{1, \hdots, p\}$, 
    \begin{align*}
	     \widehat{\textrm{MDA}}_{M,n}^{(BC)}(X^{(j)}) \overset{\mathbb{L}^1}{\longrightarrow} \E[(m(\bX) - m(\Xperm))^2].
	\end{align*}

\end{proof}

\subsection{Proof of Theorems \ref{thm_MDA}-(iii) and Proposition \ref{prop_oob_risk}} \label{sec_proof_IK}

The obstacle in the asymptotic analysis of the IK-MDA arises from the randomness of $\Lambda_{n,i}$, which can even be empty. However, the quadratic risk of the OOB estimate can be bounded using the risk of the standard forest, as stated in the following Lemma.
\begin{lemma*} \label{lemma_oob_risk}
    If Assumption \ref{A1} is satisfied, for all $M \in \mathbb{N}^{\star}$ and $i \in \{1,\hdots,n\}$, we have
    \begin{align*}
        \E\big[ \big( m_{M,a_n,n}^{(OOB)}(\bX_i, \bTheta_{M}) - m(\bX_i) \big)^2 \mathds{1}_{|\Lambda_{n,i}| > 0} \big]
        \leq \frac{2}{1 - a_n/n}& \E\big[ \big(m_{M,a_n,n-1}(\bX, \bTheta_{M}) - m(\bX) \big)^2 \big].
    \end{align*}
\end{lemma*}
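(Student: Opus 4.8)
The plan is to condition on the random set $\Lambda_{n,i}$ of out-of-bag trees, so that the out-of-bag estimate becomes an ordinary forest estimate built on $n-1$ observations, and then to control the random number of such trees, $N_i := |\Lambda_{n,i}|$, which is the only genuinely new difficulty compared with the earlier consistency results. Write $p_n := \P(i \notin \Theta_1^{(S)}) = (n-a_n)/n = 1 - a_n/n$, so that $N_i$ follows a binomial distribution with parameters $M$ and $p_n$.

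First I would establish a distributional identification. Fix a set $S \subseteq \{1,\dots,M\}$ with $|S| = k \geq 1$ and condition on $\{\Lambda_{n,i} = S\}$. On this event the trees $(\Theta_\ell)_{\ell \in S}$ are i.i.d.\ with the law of $\Theta$ conditioned on $\{i \notin \Theta^{(S)}\}$, each tree $m_n(\cdot, \Theta_\ell)$ uses a subsample drawn uniformly without replacement from $\Dn \setminus \{(\bX_i, Y_i)\}$, and the trees outside $S$ do not enter the estimate. Since $\bX_i$ is independent of $\{(\bX_{i'}, Y_{i'}) : i' \neq i\}$ and has the distribution of $\bX$, the pair consisting of the out-of-bag estimate and the query point has, conditionally on $\{\Lambda_{n,i} = S\}$, the same law as $(m_{k, a_n, n-1}(\bX, \bTheta_k), \bX)$, a forest of $k$ trees trained on $n-1$ i.i.d.\ points and evaluated at an independent test point. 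Consequently, writing $R_k := \E[(m_{k, a_n, n-1}(\bX, \bTheta_k) - m(\bX))^2]$, we obtain
\begin{align*}
\E\big[(m_{M,n}^{(OOB)}(\bX_i, \bTheta_M) - m(\bX_i))^2 \mathds{1}_{N_i > 0}\big] = \E[R_{N_i} \mathds{1}_{N_i > 0}].
\end{align*}

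Next I would exploit the forest risk decomposition \eqref{risk_forest_cart}, which by the same computation applied to the $n-1$ sample forest with $k$ trees gives $R_k = F_\infty + \frac{1}{k}(T - F_\infty)$ for every $k \geq 1$, where $T := \E[(m_{a_n,n-1}(\bX, \Theta) - m(\bX))^2]$ is the single-tree risk, $F_\infty := \E[(m_{a_n,n-1}(\bX) - m(\bX))^2]$ is the infinite-forest risk with $m_{a_n,n-1}(\bX) = \E_\Theta[m_{a_n,n-1}(\bX,\Theta)]$, and $T \geq F_\infty \geq 0$ by Jensen's inequality. Both coefficients being nonnegative,
\begin{align*}
\E[R_{N_i} \mathds{1}_{N_i > 0}] = F_\infty \, \P(N_i > 0) + (T - F_\infty)\, \E\big[\tfrac{1}{N_i}\mathds{1}_{N_i > 0}\big] \leq F_\infty + (T - F_\infty)\, \E\big[\tfrac{1}{N_i}\mathds{1}_{N_i > 0}\big].
\end{align*}
The remaining step is the inverse-binomial moment: using $\frac{1}{N_i} \leq \frac{2}{N_i + 1}$ on $\{N_i \geq 1\}$ together with the identity $\frac{1}{k+1}\binom{M}{k} = \frac{1}{M+1}\binom{M+1}{k+1}$, one finds $\E[\frac{1}{N_i+1}] = \frac{1 - (1-p_n)^{M+1}}{(M+1)p_n} \leq \frac{1}{M p_n}$, hence $\E[\frac{1}{N_i}\mathds{1}_{N_i>0}] \leq \frac{2}{M p_n}$. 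Comparing the resulting bound with $\frac{2}{p_n} R_M = \frac{2}{p_n}\big(F_\infty + \frac{1}{M}(T - F_\infty)\big)$, the two $\frac{1}{M}(T-F_\infty)$ contributions match, while $F_\infty \leq \frac{2}{p_n} F_\infty$ because $p_n \leq 1$, which delivers the claimed inequality.

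The main obstacle is the honest treatment of the random, possibly small, denominator $N_i$: a naive pointwise bound fails because $N_i \leq M$ forces $R_{N_i} \geq R_M$, i.e.\ the out-of-bag forest aggregates fewer trees and is therefore less accurate than the full $n-1$ sample forest. The decomposition $R_k = F_\infty + \frac{1}{k}(T-F_\infty)$ is precisely what isolates the $\frac{1}{k}$ dependence so that the extra factor $\frac{2}{p_n} > 1$ can absorb it, and the inverse-binomial moment bound is the technical heart that converts the $1/N_i$ term into a $1/(Mp_n)$ rate; I would expect these two steps to require the most care.
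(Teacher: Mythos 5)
Your proposal is correct, and while it proves exactly the same inequality, its mechanics differ from the paper's in a way worth noting. The paper expands the squared out-of-bag error as a double sum over pairs of trees, conditions pairwise on $\{\ell,\ell' \in \Lambda_{n,i}\}$ to replace each tree by one trained on $\Dn \setminus \{(\bX_i,Y_i)\}$, and is thereby led to the second-inverse-moment quantities $\delta_{M,n} = M^2\,\E\big[|\Lambda_{n,i}|^{-2} \mid 1,2 \in \Lambda_{n,i}\big]\P(1,2\in\Lambda_{n,i})$ and $\gamma_{M,n}$, whose properties ($\delta_{M,n}\leq\gamma_{M,n}\leq 2/(1-a_n/n)$) require a separate technical lemma with explicit binomial manipulations. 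You instead condition on the full set $\Lambda_{n,i}=S$ at once, which yields the clean distributional identification of the OOB estimate with a $k$-tree forest trained on $n-1$ points, hence $\E\big[(m_{M,n}^{(OOB)}(\bX_i,\bTheta_{(M)})-m(\bX_i))^2\mathds{1}_{|\Lambda_{n,i}|>0}\big]=\E[R_{N_i}\mathds{1}_{N_i>0}]$, and then only the first inverse moment $\E[N_i^{-1}\mathds{1}_{N_i>0}]$ is needed. The two routes in fact produce the identical intermediate identity: using the paper's own computations one can check that $\delta_{M,n}=\tfrac{M}{M-1}\big(\P(N_i\geq 1)-\E[N_i^{-1}\mathds{1}_{N_i\geq1}]\big)$ and $\gamma_{M,n}=M\,\E[N_i^{-1}\mathds{1}_{N_i\geq1}]$, so the paper's decomposition $\delta_{M,n}R_M+(\gamma_{M,n}-\delta_{M,n})T/M$ coincides algebraically with your $F_\infty\P(N_i>0)+(T-F_\infty)\E[N_i^{-1}\mathds{1}_{N_i>0}]$. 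What your version buys is lighter bookkeeping: the pairwise conditional-independence argument and all manipulations of $\E[N_i^{-2}\mid\cdot]$ disappear, and the key probabilistic input reduces to the standard bound $\E[N_i^{-1}\mathds{1}_{N_i\geq1}]\leq 2/((M+1)p_n)$ (the same Lemma 4.1 of Gy\"orfi et al.\ that the paper invokes, which you rederive from the binomial identity). The price is that the set-level conditioning claim must be stated carefully—the per-tree events $\{i\notin\Theta_\ell^{(S)}\}$ factorize across $\ell$, and $\Lambda_{n,i}$ depends only on the resampling, which is independent of the data—and you do address this. Both proofs ultimately rest on the same two pillars: identification of OOB trees with trees built on an $(n-1)$-sample, and the forest-risk decomposition that isolates the $1/k$ dependence on the number of aggregated trees.
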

We can draw interesting insights from Lemma \ref{lemma_oob_risk}. First by construction, the OOB estimate aggregates a smaller number of trees than in the standard forest: $\E[|\Lambda_{n,i}|] = (1 - a_n/n) M$ trees in average. Therefore the risk of the standard forest is inflated by the coefficient $2/(1 - a_n/n) > 2$ to bound the OOB risk. Since the risk of the OOB estimate is bounded by the risk of the standard forest, the $\mathbb{L}^2$-consistency of random forests can be extended to the OOB estimate.
\begin{lemma*} \label{lemme_oob_consistency}
    If Assumptions \ref{A1}, \ref{A2}, and \ref{A3} are satisfied, for all $i \in \{1,\hdots,n\}$ and $M \in \mathbb{N}^{\star}$ we have
    \begin{align*}
    \lim \limits_{n \to \infty} \E[(m_{M,n}^{(OOB)}(\bX_i, \bTheta_{M}) - m(\bX_i))^2\mathds{1}_{|\Lambda_{n,i}| > 0} ] = 0, 
    \end{align*}
    and if Assumption \ref{A4} is additionally satisfied, for all $j \in \{1,\hdots,p\}$
    \begin{align*}
        \lim \limits_{n \to \infty} \E[(m_{M,n,\pi_{j}}^{(OOB)}(\bX_i, \bTheta_{M}) - \E[m(\Xpermi)|\bX_i^{(-j)}])^2\mathds{1}_{|\Lambda_{n,i}| > 0} ] = 0.
    \end{align*}
\end{lemma*}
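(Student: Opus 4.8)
The plan is to reduce the out-of-bag risk to the ordinary forest risk and then invoke the consistency already established. Lemma~\ref{lemma_oob_risk} directly gives
\[
\E\big[\big(m_{M,n}^{(OOB)}(\bX_i,\bTheta_{M}) - m(\bX_i)\big)^2 \mathds{1}_{|\Lambda_{n,i}|>0}\big] \leq \frac{2}{1-a_n/n}\, \E\big[\big(m_{M,a_n,n-1}(\bX,\bTheta_{M}) - m(\bX)\big)^2\big].
\]
By Assumption~\ref{A3} we have $a_n/n < 1-\kappa$, so the prefactor is bounded by the constant $2/\kappa$. Since $n-1\to\infty$ and the regime of Assumption~\ref{A3} constrains $a_n$ alone (hence still holds when $n$ is replaced by $n-1$), Lemma~\ref{lemme_rf_consistency} applies to the $(n-1)$-sample forest and forces the right-hand side to $0$. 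This settles the first limit for every fixed $M$.

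\textbf{Decomposition for the permuted case.} Writing $\mu^{\star} = \E[m(\Xpermi)\mid \bX_i^{(-j)}]$, I would split the error as $m_{M,n,\pi_j}^{(OOB)}(\bX_i,\bTheta_{M}) - \mu^{\star} = A + B$, where
\[
A = \frac{1}{|\Lambda_{n,i}|}\sum_{\ell\in\Lambda_{n,i}} \big[m_n(\bX_{i,\pi_{j\ell}},\Theta_\ell) - m(\bX_{i,\pi_{j\ell}})\big], \qquad B = \frac{1}{|\Lambda_{n,i}|}\sum_{\ell\in\Lambda_{n,i}} m(\bX_{i,\pi_{j\ell}}) - \mu^{\star}.
\]
This is precisely where Assumption~\ref{A4} becomes essential: each tree evaluates $m$ at its own independently permuted query $\bX_{i,\pi_{j\ell}}$, so only by averaging over a number of trees growing to infinity does the estimate recover the conditional mean $\mu^{\star}$ rather than a single random permutation value.

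\textbf{Controlling $A$ and $B$.} For $A$, I would apply Jensen's inequality to move the square inside the average, and then reproduce the scheme of Lemma~\ref{lemma_oob_risk} together with the treatment of term~\eqref{term_BC_2} in the Breiman--Cutler proof: conditioning on $\{i\notin\Theta_\ell^{(S)}\}$ turns each tree into an $(n-1)$-sample tree whose training data is independent of $\bX_{i,\pi_{j\ell}}$, and since $\bX_{i,\pi_{j\ell}}\overset{d}{=}\Xperm$ (its $j$-th coordinate is taken from a distinct, hence independent, observation), consistency at $\Xperm$ from Lemma~\ref{lemme_rf_consistency} gives $\E[A^2\mathds{1}_{|\Lambda_{n,i}|>0}]\to 0$. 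For $B$, I would condition on $\Dn$ and on the subsampling, so that $\Lambda_{n,i}$ and the out-of-bag samples are frozen and the permutations $\pi_{j\ell}$ are independent across $\ell$, and write $B=B_1+B_2$ with $B_1 = \frac{1}{|\Lambda_{n,i}|}\sum_{\ell}[m(\bX_{i,\pi_{j\ell}})-\mu_\ell]$ and $B_2 = \frac{1}{|\Lambda_{n,i}|}\sum_{\ell}\mu_\ell - \mu^{\star}$, where $\mu_\ell$ is the conditional mean of $m(\bX_{i,\pi_{j\ell}})$ over the permutation, i.e.\ the empirical average of $m(\bX_i^{(-j)},X_k^{(j)})$ over the out-of-bag indices $k$. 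Since $m$ is bounded, each summand of $B_1$ has conditional variance at most $\|m\|_\infty^2$, so $\E[B_1^2\mathds{1}_{|\Lambda_{n,i}|>0}]\leq \|m\|_\infty^2\,\E[\mathds{1}_{|\Lambda_{n,i}|>0}/|\Lambda_{n,i}|]$; as $|\Lambda_{n,i}|$ is $\mathrm{Binomial}(M,1-a_n/n)$ with mean at least $\kappa M$, this expectation vanishes as $M\to\infty$. For $B_2$, the $X_k^{(j)}$ are i.i.d.\ draws from the marginal of $X^{(j)}$, independent of $\bX_i^{(-j)}$, so by the law of large numbers each $\mu_\ell\to\mu^{\star}$ as the out-of-bag size $n-a_n\geq\kappa n\to\infty$, whence $B_2\to 0$.

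\textbf{Main obstacle.} The delicate part is $B_1$: I must run a law of large numbers over the independent permutations while the averaging index set $\Lambda_{n,i}$ is itself random and possibly small, and I must then fuse the two distinct limiting mechanisms — the $M\to\infty$ averaging that kills $B_1$ and the $n\to\infty$ convergence of the empirical marginal that kills $B_2$ — into the single coupled regime $M=M_n\to\infty$ supplied by Assumption~\ref{A4}. Making the conditioning rigorous so the permutations genuinely become independent, and controlling $\E[\mathds{1}_{|\Lambda_{n,i}|>0}/|\Lambda_{n,i}|]$, are the technical crux of the argument.
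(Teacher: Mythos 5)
Your proof of the first limit coincides with the paper's: Lemma~\ref{lemma_oob_risk} plus the $(n-1)$-sample forest consistency of Lemma~\ref{lemme_rf_consistency}, the prefactor being bounded by $2/\kappa$ thanks to Assumption~\ref{A3}. For the permuted limit your route is correct but genuinely different in organization. The paper expands the second moment directly over pairs of trees (via the quantities $\delta_{M,n}$, $\gamma_{M,n}$ of Lemma~\ref{lemma_tech}), centers all fluctuations at the \emph{true} conditional mean $\E[m(\Xpermi)\mid \bX_i^{(-j)}]$, splits the cross terms over the coincidence event $\{\pi_{j1}=\pi_{j2}\}$ (probability at most $1/(n-a_n)$), kills the mixed products by conditional independence given $\bX_i^{(-j)}$, and is left with products of tree estimation errors, handled by Cauchy--Schwarz and tree consistency at $\Xperm$, plus a diagonal block of order $1/M$ which is merely bounded and vanishes under Assumption~\ref{A4}. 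You instead center the permutation fluctuation at the \emph{empirical} out-of-bag mean $\mu_\ell$: this makes the summands of your $B_1$ exactly centered and independent across trees conditionally on the data and the subsampling, so $\E[B_1^2\mathds{1}_{|\Lambda_{n,i}|>0}]\le 4||m||_{\infty}^2\,\E[\mathds{1}_{|\Lambda_{n,i}|>0}/|\Lambda_{n,i}|]=O(1/M)$ by the very binomial inequality (Lemma 4.1 of Gy\"orfi et al.) that the paper invokes inside Lemma~\ref{lemma_tech}; this spares you the coincident-permutation case analysis entirely, at the price of the extra term $B_2$, which you correctly dispatch by a law of large numbers over the out-of-bag sample of size $n-a_n\ge\kappa n$. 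Likewise, Jensen applied to $A$ avoids cross-tree products, where the paper needs a conditional-independence argument to cancel them. Both proofs ultimately rest on the same two ingredients---tree-level consistency at $\Xperm$ (Assumptions~\ref{A1}--\ref{A3}) and binomial control of $1/|\Lambda_{n,i}|$ with $M\to\infty$ (Assumption~\ref{A4})---so your decomposition is an equally valid and arguably more modular argument. When writing it in full, make two points explicit: (i) the random weight $1/|\Lambda_{n,i}|$ must be decoupled from the per-tree quantities in $A$ and $B_2$; this works because, conditionally on $\ell\in\Lambda_{n,i}$, $|\Lambda_{n,i}|$ depends only on the other trees' subsamples, and $M\,\E[\,|\Lambda_{n,i}|^{-1}\mid \ell\in\Lambda_{n,i}\,]\,\P(\ell\in\Lambda_{n,i})\le 1$; (ii) the permutation fixes $i$ with probability $1/(n-a_n)$, in which case $\bX_{i,\pi_{j\ell}}=\bX_i$ is not distributed as $\Xperm$ and $\mu_\ell$ carries an $O(1/(n-a_n))$ bias term $m(\bX_i)/N_{n,\ell}$---both effects are negligible, and the paper's own proof glosses the same point.
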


To prove Lemma \ref{lemma_oob_risk} and \ref{lemme_oob_consistency}, we need the following Lemma \ref{lemma_tech}, proved at the end of the section.
\begin{lemma*} \label{lemma_tech}
    If $\delta_{M,n}$ and $\gamma_{M,n}$ are defined as
    \begin{align*}
        \delta_{M,n} = M^2 \E \Big[ \frac{1}{|\Lambda_{n,i}|^2} \big| 1,2 \in \Lambda_{n,i} \Big] \P(1,2 \in \Lambda_{n,i} ) \\
        \gamma_{M,n} = M^2 \E \Big[ \frac{1}{|\Lambda_{n,i}|^2} \big| 1 \in \Lambda_{n,i} \Big] \P(1 \in \Lambda_{n,i} ),
    \end{align*}
    for all $M \in \mathbb{N}\setminus\{0,1\}$, we have
    \begin{align*}
        \delta_{M,n} &\leq 1 \\
        \delta_{M,n} \leq \gamma_{M,n} &\leq \frac{2}{1 - \frac{a_n}{n}},
    \end{align*}
    and for a fixed sample size $n$,
    \begin{align*}
        1 - \delta_{M,n} = O\Big(\frac{1}{M}\Big).
    \end{align*}
\end{lemma*}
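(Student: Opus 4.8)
The plan is to reduce the entire statement to elementary computations about a binomial random variable. First I would introduce the Bernoulli indicators $B_\ell = \mathds{1}_{\ell \in \Lambda_{n,i}}$, which are i.i.d.\ with $\P(B_\ell = 1) = 1 - a_n/n =: p$, since each tree subsamples $a_n$ points out of $n$ independently and without replacement, so that $\P(i \notin \Theta_\ell^{(S)}) = 1 - a_n/n$. Writing $S = |\Lambda_{n,i}| = \sum_{\ell=1}^M B_\ell \sim \mathrm{Binomial}(M,p)$, and noting that the conditioning events $\{1,2\in\Lambda_{n,i}\}$ and $\{1\in\Lambda_{n,i}\}$ force $S\ge1$ so that no $0/0$ ever arises, the two estimands become
\begin{align*}
\delta_{M,n} = M^2\,\E\Big[\frac{B_1 B_2}{S^2}\Big], \qquad \gamma_{M,n} = M^2\,\E\Big[\frac{B_1}{S^2}\Big],
\end{align*}
where each summand is set to $0$ on $\{B_1 = 0\}$ (resp.\ $\{B_1 B_2 = 0\}$).

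The engine of the argument is a symmetrization over the tree indices. Because the $B_\ell$ are exchangeable and $B_\ell^2 = B_\ell$, summing over ordered pairs and using $\sum_{k\neq\ell} B_k B_\ell = S^2 - S$ and $\sum_\ell B_\ell = S$ yields
\begin{align*}
\delta_{M,n} = \frac{M}{M-1}\,\E\Big[\Big(1 - \tfrac{1}{S}\Big)\mathds{1}_{S\ge1}\Big], \qquad \gamma_{M,n} = M\,\E\Big[\tfrac{1}{S}\mathds{1}_{S\ge1}\Big].
\end{align*}
From the first identity, the bound $\delta_{M,n}\le1$ follows from the deterministic fact $S\le M$: on $\{S\ge1\}$ one has $1 - 1/S \le 1 - 1/M$, hence $\E[(1-1/S)\mathds{1}_{S\ge1}]\le (M-1)/M$, which is exactly what makes the constant equal to $1$. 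The inequality $\delta_{M,n}\le\gamma_{M,n}$ is immediate from the pointwise domination $B_1 B_2 \le B_1$, since both fractions share the denominator $S^2$.

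For $\gamma_{M,n}\le 2/(1-a_n/n)$ I would evaluate $\E[\tfrac1S\mathds{1}_{S\ge1}]$ via the classical identity $\E[1/(S+1)] = (1-(1-p)^{M+1})/((M+1)p)$, which comes from $\binom{M}{s}/(s+1) = \binom{M+1}{s+1}/(M+1)$. Using $1/S \le 2/(S+1)$ for $S\ge1$ then gives $\E[\tfrac1S\mathds{1}_{S\ge1}]\le 2/((M+1)p)$, so $\gamma_{M,n} = M\,\E[\tfrac1S\mathds{1}_{S\ge1}] \le 2M/((M+1)p)\le 2/p$. Finally, for the rate $1-\delta_{M,n} = O(1/M)$ at fixed $n$ (where $p\ge\kappa$ is then a fixed constant), I would rewrite
\begin{align*}
1 - \delta_{M,n} = \frac{M(1-p)^M + M\,\E[\tfrac1S\mathds{1}_{S\ge1}] - 1}{M-1}
\end{align*}
and bound the numerator by a constant: $M(1-p)^M\to0$ since $p$ is fixed, while $M\,\E[\tfrac1S\mathds{1}_{S\ge1}]=\gamma_{M,n}\le 2/p$; combined with $\delta_{M,n}\le1$ (which keeps the numerator nonnegative), this delivers the $O(1/M)$ rate.

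The symmetrization identities and the binomial moment computation are routine; the two places demanding care are the sharp comparison $1-1/S\le1-1/M$ that pins the constant in $\delta_{M,n}\le1$ to exactly $1$, and the $1/S$-versus-$1/(S+1)$ device needed for the negative-moment evaluation. I expect the main obstacle to be keeping the empty-set and division-by-zero bookkeeping consistent across all three estimands, since $S=0$ occurs with positive probability and the indicators $\mathds{1}_{S\ge1}$ must be carried through every manipulation.
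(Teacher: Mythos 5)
Your proof is correct and arrives at exactly the same two master identities as the paper's proof, namely $\delta_{M,n} = \frac{M}{M-1}\E\big[(1-\tfrac{1}{S})\mathds{1}_{S\ge 1}\big]$ and $\gamma_{M,n} = M\,\E\big[\tfrac{1}{S}\mathds{1}_{S\ge 1}\big]$ with $S\sim\mathrm{Binomial}(M,1-a_n/n)$, but by a genuinely different and leaner route. The paper conditions on $\{M-1,M\in\Lambda_{n,i}\}$, evaluates the conditional expectation as an explicit binomial sum, and reaches these identities through binomial-coefficient manipulations and reindexing; your symmetrization over the exchangeable indicators, via $\sum_{k\ne\ell}B_kB_\ell = S^2-S$ and $\sum_{\ell}B_\ell = S$, dispenses with that combinatorial algebra entirely. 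Two further substantive differences: first, for $\delta_{M,n}\le\gamma_{M,n}$ you use the pointwise domination $B_1B_2\le B_1$ under the common denominator $S^2$, which is immediate, whereas the paper proves it indirectly by sandwiching, showing $\gamma_{M,n}\ge 1-(a_n/n)^M\ge\delta_{M,n}$; second, where the paper cites Lemma 4.1 of \citet{gyorfi2006distribution} for the negative-moment bound $\E[\tfrac{1}{S}\mathds{1}_{S\ge1}]\le \frac{2}{(M+1)(1-a_n/n)}$, you rederive it from the exact identity for $\E[1/(S+1)]$ combined with $1/S\le 2/(S+1)$ on $\{S\ge1\}$, making the argument self-contained. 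The remaining steps coincide with the paper's: the bound $1-1/S\le 1-1/M$ pins $\delta_{M,n}\le 1$ with the exact constant, and your rearranged fraction for $1-\delta_{M,n}$, whose numerator is nonnegative (by $\delta_{M,n}\le 1$) and bounded by a constant for fixed $n$ (since $M(1-p)^M$ is bounded and $M\,\E[\tfrac{1}{S}\mathds{1}_{S\ge1}]=\gamma_{M,n}\le 2/p$), is the paper's final inequality in a slightly different dress. Your route buys brevity and self-containedness at no loss of precision; the only thing forgone is the explicit closed-form sums, which the paper never exploits beyond these same bounds.
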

Then, we can deduce the consistency of the IK-MDA.

\begin{proof}[Proof of Theorem \ref{thm_MDA}-(iii)]
We assume that Assumptions \ref{A1}-\ref{A4} are satisfied, and fix $j \in \{1,\hdots,p\}$.
Recall that Ishwaran-Kogalur MDA is defined as
\begin{align*}
\widehat{\textrm{MDA}}_{M,n}^{(IK)}(X^{(j)}) =& \frac{1}{N_{M,n}} \sum_{i = 1}^{n} (Y_i - m_{M,n,\pi_{j}}^{(OOB)}(\bX_i, \bTheta_{M}))^2 - (Y_i - m_{M,n}^{(OOB)}(\bX_i, \bTheta_{M}))^2,
\end{align*}
where $N_{M,n} = \sum_{i = 1}^{n} \mathds{1}_{|\Lambda_{n,i}| > 0}$ is the number of points which do not belong to all trees, and
\begin{align*}
    &m_{M,n}^{(OOB)}(\bX_i, \bTheta_{M}) = \frac{1}{|\Lambda_{n,i}|} \sum_{\ell \in \Lambda_{n,i}} m_n(\bX_i, \Theta_{\ell}) \mathds{1}_{|\Lambda_{n,i}| > 0}, \\
    &m_{M,n,\pi_{j}}^{(OOB)}(\bX_{i}, \bTheta_{M}) = \frac{1}{|\Lambda_{n,i}|} \sum_{\ell \in \Lambda_{n,i}} m_n(\bX_{i,\pi_{j\ell}}, \Theta_{\ell}) \mathds{1}_{|\Lambda_{n,i}| > 0}.
\end{align*}
To lighten derivations, we define
$MDA_{IK}^{\star} =
\E[(m(\bX) - \E[m(\Xperm)|\bX^{(-j)}])^2]$.
We expand the following expression,
\begin{align*}
     \E\big[\big|&\widehat{\textrm{MDA}}_{M,n}^{(IK)}(X^{(j)}) - MDA_{IK}^{\star} \big|\big] \\
     = & \E\big[\big|\frac{1}{N_{M,n}} \sum_{i = 1}^{n}  \big[(Y_i - m_{M,n,\pi_{j}}^{(OOB)}(\bX_i, \bTheta_{M}))^2 - (Y_i - m_{M,n}^{(OOB)}(\bX_i, \bTheta_{M}))^2  - MDA_{IK}^{\star}\big]
    \mathds{1}_{|\Lambda_{n,i}| > 0} \big|\big].
\end{align*}
Observe that $N_{M,n}$ is bounded between $n$ and $n - a_n$, and consequently
\begin{align*}
     \E\big[\big|&\widehat{\textrm{MDA}}_{M,n}^{(IK)}(X^{(j)}) - MDA_{IK}^{\star} \big|\big] \\
     \leq & \E\big[\big|\frac{1}{n - a_n} \sum_{i = 1}^{n}  \big[(Y_i - m_{M,n,\pi_{j}}^{(OOB)}(\bX_i, \bTheta_{M}))^2 - (Y_i - m_{M,n}^{(OOB)}(\bX_i, \bTheta_{M}))^2  - MDA_{IK}^{\star}\big]
    \mathds{1}_{|\Lambda_{n,i}| > 0} \big|\big].
\end{align*}
Then, we follow the proof of Theorem \ref{thm_MDA}-(i) and (ii) with a similar decomposition of the sum of the above expression
\begin{align*}
    \sum_{i = 1}^{n} [(Y_i - & m_{M,n,\pi_{j}}^{(OOB)}(\bX_i, \bTheta_{M}))^2 - (Y_i - m_{M,n}^{(OOB)}(\bX_i, \bTheta_{M}))^2 - MDA_{IK}^{\star} ]\mathds{1}_{|\Delta_{n,i}|>0}\\[-0.5em]
    = \sum_{i = 1}^{n} & [([m(\bX_i) - \E[m(\Xpermi)|\bX_i^{(-j)}]] + [\E[m(\Xpermi)|\bX_i^{(-j)}] - m_{M,n,\pi_{j}}^{(OOB)}(\bX_i, \bTheta_{M})] + \varepsilon_i)^2 \\[-0.5em]
    &- ([m(\bX_i) - m_{M,n}^{(OOB)}(\bX_i, \bTheta_{M})] + \varepsilon_i)^2 - MDA_{IK}^{\star}] \mathds{1}_{|\Delta_{n,i}|>0}\\[-0.5em]
     = \sum_{i = 1}^{n} &  ([m(\bX_i) - \E[m(\Xpermi)|\bX_i^{(-j)}]]^2 - MDA_{IK}^{\star}) \mathds{1}_{|\Delta_{n,i}|>0} \\[-0.5em]
     &+ [\E[m(\Xpermi)|\bX_i^{(-j)}] - m_{M,n,\pi_{j}}^{(OOB)}(\bX_i, \bTheta_{M})]^2\mathds{1}_{|\Delta_{n,i}|>0} + \varepsilon_i^2\mathds{1}_{|\Delta_{n,i}|>0} \\
     & + 2[m(\bX_i) - \E[m(\Xpermi)|\bX_i^{(-j)}]][\E[m(\Xpermi)|\bX_i^{(-j)}] - m_{M,n,\pi_{j}}^{(OOB)}(\bX_i, \bTheta_{M})]\mathds{1}_{|\Delta_{n,i}|>0}
     \\ & + 2 \varepsilon_i[m(\bX_i) - \E[m(\Xpermi)|\bX_i^{(-j)}]]\mathds{1}_{|\Delta_{n,i}|>0}
     \\ & + 2 \varepsilon_i[\E[m(\Xpermi)|\bX_i^{(-j)}] - m_{M,n,\pi_{j}}^{(OOB)}(\bX_i, \bTheta_{M})]\mathds{1}_{|\Delta_{n,i}|>0}
     \\ & - [m(\bX_i) - m_{M,n}^{(OOB)}(\bX_i, \bTheta_{M})]^2\mathds{1}_{|\Delta_{n,i}|>0} - \varepsilon_i^2\mathds{1}_{|\Delta_{n,i}|>0}\\ & -  2 \varepsilon_i[m(\bX_i) - m_{M,n}^{(OOB)}(\bX_i, \bTheta_{M})]\mathds{1}_{|\Delta_{n,i}|>0}.
\end{align*}
We then obtain the following bound
\begin{align}
    \E\big[\big|\widehat{\textrm{MDA}}_{M,n}^{(IK)}(X^{(j)})& - MDA_{IK}^{\star} \big|\big] \nonumber \\
     \leq \E\big[\big|\frac{1}{n - a_n} \sum_{i = 1}^{n}&  ([m(\bX_i) - \E[m(\Xpermi)|\bX_i^{(-j)}]]^2 - MDA_{IK}^{\star}) \mathds{1}_{|\Lambda_{n,i}| > 0} \big|\big] \label{term_IK_1}\\
     + \E\big[\frac{1}{n - a_n}\sum_{i = 1}^{n}&[\E[m(\Xpermi)|\bX_i^{(-j)}] - m_{M,n,\pi_{j}}^{(OOB)}(\bX_i, \bTheta_{M})]^2\mathds{1}_{|\Lambda_{n,i}| > 0}\big] \label{term_IK_2}\\
     + \E\big[\big|\frac{2}{n - a_n}\sum_{i = 1}^{n}&[m(\bX_i) - \E[m(\Xpermi)|\bX_i^{(-j)}]] \nonumber\\
     &\times [\E[m(\Xpermi)|\bX_i^{(-j)}] - m_{M,n,\pi_{j}}^{(OOB)}(\bX_i, \bTheta_{M})]\mathds{1}_{|\Lambda_{n,i}| > 0}\big|\big]
     \label{term_IK_3}\\ + \E\big[\big|\frac{2}{n - a_n}\sum_{i = 1}^{n}&\varepsilon_i[m(\bX_i) - \E[m(\Xpermi)|\bX_i^{(-j)}]]\mathds{1}_{|\Lambda_{n,i}| > 0}\big|\big]
     \label{term_IK_4}\\ + \E\big[\big|\frac{2}{n - a_n}\sum_{i = 1}^{n}&\varepsilon_i[\E[m(\Xpermi)|\bX_i^{(-j)}] - m_{M,n,\pi_{j}}^{(OOB)}(\bX_i, \bTheta_{M})]\mathds{1}_{|\Lambda_{n,i}| > 0}\big|\big]
     \label{term_IK_5}\\ + \E\big[\frac{1}{n - a_n}\sum_{i = 1}^{n}&[m(\bX_i) - m_{M,n}^{(OOB)}(\bX_i, \bTheta_{M})]^2\mathds{1}_{|\Lambda_{n,i}| > 0}\big] \label{term_IK_6}\\ +  \E\big[\big|\frac{2}{n - a_n}\sum_{i = 1}^{n}&\varepsilon_i[m(\bX_i) - m_{M,n}^{(OOB)}(\bX_i, \bTheta_{M})]\mathds{1}_{|\Lambda_{n,i}| > 0}\big|\big]. \label{term_IK_7}
    \end{align}
    Now, let us consider all the terms on the right hand side one by one.
       For the first term (\ref{term_IK_1}), we can rewrite
    \begin{align*}
        \frac{1}{n - a_n} &\sum_{i = 1}^{n} ([m(\bX_i) - \E[m(\Xpermi)|\bX_i^{(-j)}]]^2 - MDA_{IK}^{\star}) \mathds{1}_{|\Lambda_{n,i}| > 0} \\
        = & \frac{n}{n - a_n} \frac{1}{n} \sum_{i = 1}^{n} ([m(\bX_i) - \E[m(\Xpermi)|\bX_i^{(-j)}]]^2 - MDA_{IK}^{\star}) \mathds{1}_{|\Lambda_{n,i}| > 0}, 
    \end{align*}
    and the multiplicative term in front $n/(n-a_n)$ is upper bounded by $1/\kappa > 0$ by Assumption \ref{A3}. Next, we can apply the strong law of large numbers to show that the sum converges almost surely towards
    \begin{align*}
        \E\big[& ([m(\bX_1) - \E[m(\bX_{1,\pi_j})|\bX_1^{(-j)}]]^2 - MDA_{IK}^{\star}) \mathds{1}_{|\Lambda_{n,1}| > 0} \big] \\
        & = \E\big[([m(\bX_1) - \E[m(\bX_{1,\pi_j})|\bX_1^{(-j)}]]^2 - MDA_{IK}^{\star}) \big] \P(|\Lambda_{n,1}| > 0) \\
        & = 0.
    \end{align*}
    Since almost sure convergence implies $\mathbb{L}^1$-convergence, the first term (\ref{term_IK_1}) converges towards $0$. 
    The fourth term (\ref{term_IK_4}) is handled similarly with the strong law of large number since the noise is centered and independent of $\Dn$.
    The second term
    \begin{align*}
        \E\big[\frac{1}{n - a_n}&\sum_{i = 1}^{n}[\E[m(\Xpermi)|\bX_i^{(-j)}] - m_{M,n,\pi_{j}}^{(OOB)}(\bX_i, \bTheta_{M})]^2\mathds{1}_{|\Lambda_{n,i}| > 0}\big] \\
        =& \frac{n}{n - a_n} \E\big[(\E[m(\bX_{1,\pi_{j}})|\bX_1^{(-j)}] - m_{M,n,\pi_{j}}^{(OOB)}(\bX_1, \bTheta_{M}))^2\mathds{1}_{|\Lambda_{n,1}| > 0}\big],
    \end{align*}
    converges towards $0$ from the second part of Lemma \ref{lemme_oob_consistency} and because $n/(n-a_n) < 1/\kappa$. The sixth term (\ref{term_IK_6}) is handled identically using the first part of Lemma \ref{lemme_oob_consistency}.
    For the third term (\ref{term_IK_3}), since $m$ is bounded (continuous on a compact), we have
    \begin{align*}
    \E\big[\big|\frac{2}{n - a_n}\sum_{i = 1}^{n}[m(\bX_i)& - \E[m(\Xpermi)|\bX_i^{(-j)}]] \\
     &\times [\E[m(\Xpermi)|\bX_i^{(-j)}] - m_{M,n,\pi_{j}}^{(OOB)}(\bX_i, \bTheta_{M})]\mathds{1}_{|\Lambda_{n,i}| > 0}\big|\big] \\
    \leq \frac{4n||m||_{\infty}}{n - a_n}& \E\big[\big|\E[m(\bX_{1,\pi_{j}})|\bX_1^{(-j)}] - m_{M,n,\pi_{j}}^{(OOB)}(\bX_1, \bTheta_{M})\big|\mathds{1}_{|\Lambda_{n,1}| > 0}\big],
    \end{align*}
    which converges towards $0$ by Lemma \ref{lemme_oob_consistency}.
    Similarly, for the fifth (\ref{term_IK_5}) and seventh (\ref{term_IK_7}) terms, we have the following bound
    \begin{align*}
        \E\big[\big|\frac{2}{n - a_n}\sum_{i = 1}^{n}&\varepsilon_i[\E[m(\Xpermi)|\bX_i^{(-j)}] - m_{M,n,\pi_{j}}^{(OOB)}(\bX_i, \bTheta_{M})]\mathds{1}_{|\Lambda_{n,i}| > 0}\big|\big] \\
        \leq& \frac{2n}{n - a_n}E[|\varepsilon|] \E\big[\big|\E[m(\bX_{1,\pi_{j}})|\bX_1^{(-j)}] - m_{M,n,\pi_{j}}^{(OOB)}(\bX_1, \bTheta_{M})\big|\mathds{1}_{|\Lambda_{n,1}| > 0}\big],
    \end{align*}
    and we conclude using Lemma \ref{lemme_oob_consistency} again. 
    Overall, we have 
	\begin{align*}
	     \widehat{\textrm{MDA}}_{M,n}^{(IK)}(X^{(j)}) \overset{\mathbb{L}^1}{\longrightarrow} \E[(m(\bX) - \E[m(\Xperm)|\bX^{(-j)}])^2].
	\end{align*}
\end{proof}

\begin{proof}[Proof of Lemma \ref{lemma_oob_risk}]
    We assume that Assumption \ref{A1} is satisfied, and consider $i \in \{1,\hdots,n\}$ and $M \in \mathbb{N}^{\star}$.
    To prove the first part of Lemma~\ref{lemma_oob_risk}, we begin with and expansion of the OOB estimate
    \begin{align*}
        \E\big[\big( m_{M,n}^{(OOB)}(\bX_i, \bTheta_{M})& - m(\bX_i) \big)^2 \big| |\Lambda_{n,i}| > 0\big] \\ =& \E\Big[ \Big( \frac{1}{|\Lambda_{n,i}|} \sum_{\ell \in \Lambda_{n,i}} m_n(\bX_i, \Theta_{\ell}) \mathds{1}_{|\Lambda_{n,i}| > 0}
        - m(\bX_i) \Big)^2 \big| |\Lambda_{n,i}| > 0\Big] \\
        =& \E\Big[ \Big( \frac{1}{|\Lambda_{n,i}|} \sum_{\ell =1}^{M} [m_n(\bX_i, \Theta_{\ell}) 
        - m(\bX_i)]\mathds{1}_{\ell \in \Lambda_{n,i}} \Big)^2 \big| |\Lambda_{n,i}| > 0\Big]. \\
    \end{align*}
    Now, we expand the square with a double sum,
    \begin{align*}
        \E\big[\big( m_{M,n}^{(OOB)}&(\bX_i, \bTheta_{M}) - m(\bX_i) \big)^2 \big| |\Lambda_{n,i}| > 0\big] \\
        =& \sum_{\ell, \ell'=1}^{M} \E\Big[ \frac{1}{|\Lambda_{n,i}|^2} [m_n(\bX_i, \Theta_{\ell}) 
        - m(\bX_i)][m_n(\bX_i, \Theta_{\ell'}) 
        - m(\bX_i)]\mathds{1}_{\ell, \ell' \in \Lambda_{n,i}} \big| |\Lambda_{n,i}| > 0\Big] \\
        =& \sum_{\ell, \ell'=1}^{M} \E\Big[ \frac{1}{|\Lambda_{n,i}|^2} [m_n(\bX_i, \Theta_{\ell}) 
        - m(\bX_i)][m_n(\bX_i, \Theta_{\ell'}) 
        - m(\bX_i)] \big| \ell, \ell' \in \Lambda_{n,i} \Big] \\ & \hspace*{1.5cm} \times \P\big(\ell, \ell' \in \Lambda_{n,i} \big| | \Lambda_{n,i}| > 0\big).
    \end{align*}
    Observe that conditionally on $\{ \ell, \ell' \in \Lambda_{n,i} \}$, $\Lambda_{n,i}$ only depends on $\{\Theta_{k}, k \in \{1,\hdots,M\} \setminus \{\ell,\ell'\} \}$. This means that $\Lambda_{n,i}$ and $[m_n(\bX_i, \Theta_{\ell}) - m(\bX_i)][m_n(\bX_i, \Theta_{\ell'}) - m(\bX_i)]$ are independent conditionally on $\{ \ell, \ell' \in \Lambda_{n,i} \}$. We can then write
     \begin{align*}
        \E\big[\big( m_{M,n}^{(OOB)}&(\bX_i, \bTheta_{M}) - m(\bX_i) \big)^2 \big| |\Lambda_{n,i}| > 0\big] \P(|\Lambda_{n,i}| > 0) \\
        = \sum_{\ell, \ell'=1}^{M} & \E\Big[ \frac{1}{|\Lambda_{n,i}|^2}\big| \ell, \ell' \in \Lambda_{n,i} \Big] \P\big(\ell, \ell' \in \Lambda_{n,i} \big| | \Lambda_{n,i}| > 0\big) \P(|\Lambda_{n,i}| > 0) \\ & \times \E\big[[m_n(\bX_i, \Theta_{\ell}) 
        - m(\bX_i)][m_n(\bX_i, \Theta_{\ell'}) 
        - m(\bX_i)] \big| \ell, \ell' \in \Lambda_{n,i} \big].  \\
        = \sum_{\ell, \ell'=1}^{M} & \E\Big[ \frac{1}{|\Lambda_{n,i}|^2}\big| \ell, \ell' \in \Lambda_{n,i} \Big] \P\big(\ell, \ell' \in \Lambda_{n,i} \big) \\ & \times \E\big[[m_n(\bX_i, \Theta_{\ell}) 
        - m(\bX_i)][m_n(\bX_i, \Theta_{\ell'}) 
        - m(\bX_i)] \big| \ell, \ell' \in \Lambda_{n,i} \big]. 
    \end{align*}
    Since $|\Lambda_{n,i}|$ is a binomial distribution, $\E\big[ \frac{1}{|\Lambda_{n,i}|^2}\big| \ell, \ell' \in \Lambda_{n,i} \big] \P(\ell, \ell' \in \Lambda_{n,i})$ takes the same value for each pair of distinct $\ell, \ell'$ and any sample $i \in \{1,\hdots,n\}$. Similarly for the case $\ell = \ell'$, $\E\big[ \frac{1}{|\Lambda_{n,i}|^2}\big| \ell \in \Lambda_{n,i} \big] \P(\ell \in \Lambda_{n,i})$ is constant when $\ell$ varies. 
    Therefore, we introduce
    \begin{align*}
        \delta_{M,n} = M^2 \E\Big[ \frac{1}{|\Lambda_{n,i}|^2}\big| \ell, \ell' \in \Lambda_{n,i} \Big] \P(\ell, \ell' \in \Lambda_{n,i} ),
    \end{align*}
    and 
    \begin{align*}
        \gamma_{M,n} = M^2 \E\Big[ \frac{1}{|\Lambda_{n,i}|^2}\big| \ell \in \Lambda_{n,i} \Big] \P(\ell \in \Lambda_{n,i}). 
    \end{align*}
    Then, we have
    \begin{align*}
        \E\big[\big( m_{M,n}^{(OOB)}&(\bX_i, \bTheta_{M}) - m(\bX_i) \big)^2 \big| |\Lambda_{n,i}| > 0\big] \P(|\Lambda_{n,i}| > 0) \\
        =&  \delta_{M,n} \frac{1}{M^2} \sum_{\ell, \ell' = 1}^{M} \E\big[[m_n(\bX_i, \Theta_{\ell}) 
        - m(\bX_i)][m_n(\bX_i, \Theta_{\ell'}) 
        - m(\bX_i)] \big| \ell, \ell' \in \Lambda_{n,i} \big]  \\
        &+ (\gamma_{M,n} - \delta_{M,n}) \frac{1}{M^2} \sum_{\ell = 1}^{M} \E\big[(m_n(\bX_i, \Theta_{\ell}) 
        - m(\bX_i))^2 \big| \ell \in \Lambda_{n,i} \big].
    \end{align*}
    Recall that $m_n(\bX_i, \Theta_{\ell})$ is the randomized CART estimate, built with $\Dn$ and $\Theta_{\ell}$, where the component $\Theta_{\ell}^{(S)}$ is used to subsample $a_n$ data points. When conditioned on $\{\ell \in \Lambda_{n,i}\}$ (i.e. $i \notin \Theta_{\ell}^{(S)}$), $m_n(\bX_i, \Theta_{\ell})$ can be seen as the CART estimate built with $\Dn \setminus \{(\bX_i, Y_i)\}$ and with the subsample size $a_{n}$, i.e., $m_{a_n,n-1}(\bX_i, \Theta_{\ell})$. Therefore, we have for all pairs $\ell, \ell'$,
    \begin{align} \label{eq_n-1}
        \E\big[[m_n(\bX_i, \Theta_{\ell}) 
        - &m(\bX_i)][m_n(\bX_i, \Theta_{\ell'}) 
        - m(\bX_i)] \big| \ell, \ell' \in \Lambda_{n,i} \big] \nonumber\\
        &= \E\big[[m_{a_n,n-1}(\bX_i, \Theta_{\ell}) 
        - m(\bX_i)][m_{a_n,n-1}(\bX_i, \Theta_{\ell'}) 
        - m(\bX_i)]\big] \nonumber \\
        &= \E\big[[m_{a_n,n-1}(\bX, \Theta_{\ell}) 
        - m(\bX)][m_{a_n,n-1}(\bX, \Theta_{\ell'}) 
        - m(\bX)]\big],
    \end{align}
    where the last equality holds because $\bX_i$ and $\bX$ are identically distributed and both independent of the training data of $m_{a_n,n-1}$.
    Then, this last equality is plugged in the previous result to obtain
    \begin{align} \label{decomposition_risk_OOB}
        \E\big[\big( m_{M,n}^{(OOB)}&(\bX_i, \bTheta_{M}) - m(\bX_i) \big)^2 \big| |\Lambda_{n,i}| > 0\big] \P(|\Lambda_{n,i}| > 0) \nonumber \\
        =&  \delta_{M,n} \frac{1}{M^2} \sum_{\ell, \ell' = 1}^{M} \E\big[[m_{a_n,n-1}(\bX, \Theta_{\ell}) 
        - m(\bX)][m_{a_n,n-1}(\bX, \Theta_{\ell'}) 
        - m(\bX)] \big] \nonumber \\
        &+ (\gamma_{M,n} - \delta_{M,n}) \frac{1}{M^2} \sum_{\ell = 1}^{M} \E\big[(m_{a_n,n-1}(\bX, \Theta_{\ell}) 
        - m(\bX))^2\big].
    \end{align}
    Next, we factorize the right hand side
    \begin{align} \label{bound_risk_OOB_M}
        \E\big[\big( m_{M,n}^{(OOB)}&(\bX_i, \bTheta_{M}) - m(\bX_i) \big)^2 \big| |\Lambda_{n,i}| > 0\big] \P(|\Lambda_{n,i}| > 0) \nonumber\\
        =&  \delta_{M,n} \E\Big[\Big(\frac{1}{M}\sum_{\ell=1}^{M} m_{a_n,n-1}(\bX, \Theta_{\ell}) 
        - m(\bX)\Big)^2 \Big]   \nonumber\\
        &+ (\gamma_{M,n} - \delta_{M,n}) \frac{1}{M} \E\big[(m_{a_n,n-1}(\bX, \Theta) - m(\bX))^2\big] \nonumber\\
        =&  \delta_{M,n} \E\big[\big(m_{M,a_n,n-1}(\bX, \bTheta_{M}) 
        - m(\bX)\big)^2 \big]   \nonumber\\
        &+ (\gamma_{M,n} - \delta_{M,n}) \frac{1}{M} \E\big[(m_{a_n,n-1}(\bX, \Theta) - m(\bX))^2\big],
    \end{align}
    where $m_{M,a_n,n-1}(\bX, \bTheta_{M})$ is the standard random forest estimate, built with a dataset of size $n-1$ and the subsample size $a_{n}$.
    Using the decomposition (\ref{risk_forest_cart}) of the risk of the finite forest, we have
    \begin{align*}
         \frac{1}{M} \E\big[(m_{a_n,n-1}(\bX, \Theta) - m(\bX))^2\big]
         \leq \E\big[\big(m_{M,a_n,n-1}(\bX, \bTheta_{M}) 
        - m(\bX)\big)^2 \big].
    \end{align*}
    Additionally, from Lemma \ref{lemma_tech}, $\gamma_{M,n} - \delta_{M,n} > 0$. We combine the last two inequalities with the previous result and obtain
    \begin{align*}
        \E\big[\big( m_{M,n}^{(OOB)}(\bX_i, \bTheta_{M}) - &m(\bX_i) \big)^2 \big| |\Lambda_{n,i}| > 0\big] \P(|\Lambda_{n,i}| > 0) \\
        \leq&  \delta_{M,n} \E\big[\big(m_{M,a_n,n-1}(\bX, \bTheta_{M}) 
        - m(\bX)\big)^2 \big]   \\
        &+ (\gamma_{M,n} - \delta_{M,n}) \E\big[\big(m_{M,a_n,n-1}(\bX, \bTheta_{M}) 
        - m(\bX)\big)^2 \big] \\
        \leq& \gamma_{M,n} \E\big[\big(m_{M,a_n,n-1}(\bX, \bTheta_{M}) 
        - m(\bX)\big)^2 \big], 
    \end{align*}
    and using again Lemma \ref{lemma_tech}, we finally get
    \begin{align*}
        \E\big[\big( m_{M,n}^{(OOB)}(\bX_i, \bTheta_{M}) - m(\bX_i) \big)^2 \mathds{1}_{|\Lambda_{n,i}| > 0}\big]
        \leq \frac{2}{1 - a_n/n} \E\big[\big(m_{M,a_n,n-1}(\bX, \bTheta_{M}) 
        - m(\bX)\big)^2 \big].
    \end{align*}
\end{proof}

\begin{proof}[Proof of Proposition \ref{prop_oob_risk}]
    We need to bound the difference between the risks of the OOB estimate and the standard forest. To do so, we go back to equation (\ref{bound_risk_OOB_M})
    \begin{align*}
        \E\big[\big( m_{M,n}^{(OOB)}(\bX_i, \bTheta_{M}) - m(\bX_i) \big)^2& \big| |\Lambda_{n,i}| > 0\big] \P(|\Lambda_{n,i}| > 0) \\
        =  \delta_{M,n}& \E\big[\big(m_{M,a_n,n-1}(\bX, \bTheta_{M}) 
        - m(\bX)\big)^2 \big]   \\
        &+ (\gamma_{M,n} - \delta_{M,n}) \frac{1}{M} \E\big[(m_{a_n,n-1}(\bX, \Theta) - m(\bX))^2\big],
    \end{align*}
    and rewrite it 
    \begin{align*}
        \Big|\E\big[\big( m_{M,n}^{(OOB)}(\bX_i, \bTheta_{M}) - m(\bX_i) \big)^2& \mathds{1}_{|\Lambda_{n,i}| > 0}\big] - \E\big[\big(m_{M,a_n,n-1}(\bX, \bTheta_{M}) 
        - m(\bX)\big)^2 \big]\Big| \\
        \leq  \big|\delta_{M,n}& - 1\big| \E\big[\big(m_{M,a_n,n-1}(\bX, \bTheta_{M}) 
        - m(\bX)\big)^2 \big]   \\
        &+ (\gamma_{M,n} - \delta_{M,n}) \frac{1}{M} \E\big[(m_{a_n,n-1}(\bX, \Theta) - m(\bX))^2\big].
    \end{align*}
    According to Lemma \ref{lemma_tech}, $\delta_{M,n} - 1 = O(1/M)$ and $\gamma_{M,n} - \delta_{M,n}$ is bounded. Therefore, for a fixed sample size $n$, we have
    \begin{align} \label{equivalent_M}
        \Big|\E\big[\big( m_{M,n}^{(OOB)}(\bX_i, \bTheta_{M}) - &m(\bX_i) \big)^2 \mathds{1}_{|\Lambda_{n,i}| > 0}\big] - \E\big[\big(m_{M,a_n,n-1}(\bX, \bTheta_{M}) 
        - m(\bX)\big)^2 \big]\Big| = O\Big(\frac{1}{M}\Big).
    \end{align}
    Finally, recall that $\P(|\Lambda_{n,i}| > 0)$ is the probability that the $i$-th observation does not belong to all trees (in this case the OOB forest estimate is properly defined). A simple calculation gives that $\P(|\Lambda_{n,i}| > 0) = 1 - (a_n/n)^M$, which converges towards $1$ exponentially fast as $M$ grows.
    Then, we have
    \begin{align} \label{bound_1}
        \Big|\E\big[\big(& m_{M,n}^{(OOB)}(\bX_i, \bTheta_{M}) - m(\bX_i) \big)^2\big] - \E\big[\big(m_{M,n}^{(OOB)}(\bX_i, \bTheta_{M}) - m(\bX_i) \big)^2 \mathds{1}_{|\Lambda_{n,i}| > 0}\big]\Big| \nonumber \\
        &= \E\big[\big( m_{M,n}^{(OOB)}(\bX_i, \bTheta_{M}) - m(\bX_i) \big)^2 \mathds{1}_{|\Lambda_{n,i}| = 0}\big] \nonumber \\
        &= \E\big[m(\bX_i)^2\big]\P(|\Lambda_{n,i}| = 0) \nonumber \\
        &\leq ||m||_{\infty}^2 (a_n/n)^M.
    \end{align}
    From Assumption \ref{A3}, $a_n/n < 1$, and combining the bound (\ref{bound_1}) with the previous result (\ref{equivalent_M}), we conclude that
    \begin{align*}
        \Big|\E\big[\big( m_{M,n}^{(OOB)}(\bX_i, \bTheta_{M}) - &m(\bX_i) \big)^2 \big] - \E\big[\big(m_{M,a_n,n-1}(\bX, \bTheta_{M}) 
        - m(\bX)\big)^2 \big]\Big| = O\Big(\frac{1}{M}\Big).
    \end{align*}
    
\end{proof}

\begin{proof}[Proof of Lemma \ref{lemme_oob_consistency}]
    We first assume that Assumptions \ref{A1}, \ref{A2}, \ref{A3}, and \ref{A4} are satisfied, and we consider $i \in \{1,\hdots,n\}$.
    Using Lemma \ref{lemma_oob_risk}, we have
    \begin{align} \label{eq_oob_risk}
        \E\big[ \big( m_{M,n}^{(OOB)}(\bX_i, \bTheta_{M}) - m(\bX_i) \big)^2 \mathds{1}_{|\Lambda_{n,i}| > 0}  \big]
        \leq \frac{2}{1 - a_n/n}& \E\big[ \big(m_{M,a_n,n-1}(\bX, \bTheta_{M}) - m(\bX) \big)^2 \big].
    \end{align}
    According to Assumption \ref{A3}, $1 - a_n/n > \kappa$ where $\kappa$ is fixed positive constant. Thus, we can directly apply Lemma \ref{lemme_rf_consistency} to obtain
    \begin{align*}
        \lim_{n \to \infty} \E\big[ \big(m_{M,a_n,n-1}(\bX, \bTheta_{M}) - m(\bX) \big)^2 \big] = 0,
    \end{align*}
    and then
    \begin{align*}
        \lim_{n \to \infty} \E\big[ \big( m_{M,n}^{(OOB)}(\bX_i, \bTheta_{M}) - m(\bX_i) \big)^2 \mathds{1}_{|\Lambda_{n,i}| > 0} \big] = 0.
    \end{align*}
    
    Next, we extend this result to the permuted case, i.e., $\bX_i$ is replaced by $\Xpermi$. 
    Following the same proof as in Lemma \ref{lemma_oob_risk}, we derive the following decomposition, similarly to equation (\ref{decomposition_risk_OOB})
    \begin{align*}
        \E\big[\big( m_{M,n,\pi_j}^{(OOB)}(\bX_i, \bTheta_{M}) - \E[m(\Xpermi)|&\bX_i^{(-j)}] \big)^2 \big| |\Lambda_{n,i}| > 0\big] \P(|\Lambda_{n,i}| > 0) \\
        =  \delta_{M,n} \frac{1}{M^2}\sum_{\ell\neq\ell'} \E\big[&(m_{a_n,n-1}(\bX_{i,\pi_{j\ell}}, \Theta_{\ell})
        - \E[m(\Xpermi)|\bX_i^{(-j)}]) \\[-1em] &\times (m_{a_n,n-1}(\bX_{i,\pi_{j\ell'}}, \Theta_{\ell'})
        - \E[m(\Xpermi)|\bX_i^{(-j)}])\big]   \\
        + \gamma_{M,n} \frac{1}{M^2} \sum_{\ell=1}^{M} &\E\big[(m_{a_n,n-1}(\bX_{i,\pi_{j\ell}}, \Theta) - \E[m(\Xpermi)|\bX_i^{(-j)}])^2\big].
    \end{align*}
    By symmetry, we have
    \begin{align*}
        \E\big[\big( m_{M,n,\pi_j}^{(OOB)}(\bX_i, \bTheta_{M}) - \E[m(\Xpermi)|&\bX_i^{(-j)}] \big)^2 \big| |\Lambda_{n,i}| > 0\big] \P(|\Lambda_{n,i}| > 0) \\
        =  \delta_{M,n} \frac{M-1}{M} \E\big[&(m_{a_n,n-1}(\bX_{i,\pi_{j 1}}, \Theta_{1})
        - \E[m(\Xpermi)|\bX_i^{(-j)}]) \\ &\times (m_{a_n,n-1}(\bX_{i,\pi_{j 2}}, \Theta_{2})
        - \E[m(\Xpermi)|\bX_i^{(-j)}])\big]   \\
        + \gamma_{M,n} \frac{1}{M} & \E\big[(m_{a_n,n-1}(\bX_{\pi_{j}}, \Theta) - \E[m(\Xperm)|\bX^{(-j)}])^2\big].
    \end{align*}
    In the first term of the right hand side, we need to deal with the specific case where $\pi_{j1}=\pi_{j2}$, which implies that $\bX_{i,\pi_{j 1}} = \bX_{i,\pi_{j 2}}$ since they have the same $j$-th permuted component:
    \begin{align*}
        \E\big[\big( m_{M,n,\pi_j}^{(OOB)}(\bX_i, \bTheta_{M})& - \E[m(\Xpermi)|\bX_i^{(-j)}] \big)^2 \big| |\Lambda_{n,i}| > 0\big] \P(|\Lambda_{n,i}| > 0) \\
        =  \delta_{M,n} \frac{M-1}{M} \E\big[&(m_{a_n,n-1}(\bX_{i,\pi_{j 1}}, \Theta_{1})
        - \E[m(\Xpermi)|\bX_i^{(-j)}]) \\ &\times (m_{a_n,n-1}(\bX_{i,\pi_{j 2}}, \Theta_{2})
        - \E[m(\Xpermi)|\bX_i^{(-j)}]) | \pi_{j1} \neq \pi_{j2} \big] \P(\pi_{j1} \neq \pi_{j2})  \\
        + \delta_{M,n} \frac{M-1}{M} & \E\big[(m_{a_n,n-1}(\bX_{i,\pi_{j 1}}, \Theta_{1})
        - \E[m(\Xpermi)|\bX_i^{(-j)}]) \\ &\times (m_{a_n,n-1}(\bX_{i,\pi_{j 2}}, \Theta_{2})
        - \E[m(\Xpermi)|\bX_i^{(-j)}]) | \pi_{j1}=\pi_{j2} \big]
        \P(\pi_{j1} = \pi_{j2}) \\
        + \gamma_{M,n} \frac{1}{M} & \E\big[(m_{a_n,n-1}(\bX_{\pi_{j}}, \Theta) - \E[m(\Xperm)|\bX^{(-j)}])^2\big],
    \end{align*}
    which can be simplified using Cauchy-Schwartz inequality for the second term as
    \begin{align} \label{decomposition_perm}
        \E\big[\big( m_{M,n,\pi_j}^{(OOB)}(\bX_i, \bTheta_{M})& - \E[m(\Xpermi)|\bX_i^{(-j)}] \big)^2 \big| |\Lambda_{n,i}| > 0\big] \P(|\Lambda_{n,i}| > 0) \\
        \leq  \delta_{M,n} \frac{M-1}{M} \E\big[&(m_{a_n,n-1}(\bX_{i,\pi_{j 1}}, \Theta_{1})
        - \E[m(\Xpermi)|\bX_i^{(-j)}]) \nonumber\\ &\times (m_{a_n,n-1}(\bX_{i,\pi_{j 2}}, \Theta_{2})
        - \E[m(\Xpermi)|\bX_i^{(-j)}]) | \pi_{j1} \neq \pi_{j2} \big] \P(\pi_{j1} \neq \pi_{j2})  \nonumber\\
        + \Big( \frac{\gamma_{M,n}}{M} +& \delta_{M,n} \frac{M-1}{M} \P(\pi_{j1} = \pi_{j2}) \Big) \E\big[(m_{a_n,n-1}(\bX_{\pi_{j}}, \Theta) - \E[m(\Xperm)|\bX^{(-j)}])^2\big].\nonumber
    \end{align}
    Now, we focus on the first term of the right hand side. We have
    \begin{align*}
        \E\big[(m_{a_n,n-1}&(\bX_{i,\pi_{j1}}, \Theta_{1})
        - \E[m(\Xpermi)|\bX_i^{(-j)}]) \\ &\times (m_{a_n,n-1}(\bX_{i,\pi_{j2}}, \Theta_{2})
        - \E[m(\Xpermi)|\bX_i^{(-j)}]) | \pi_{j1} \neq \pi_{j2} \big] \\
        =  \E\big[&[m_{a_n,n-1}(\bX_{i,\pi_{j1}}, \Theta_{1}) - m(\bX_{i,\pi_{j1}})
        - (\E[m(\Xpermi)|\bX_i^{(-j)}] - m(\bX_{i,\pi_{j1}}))] \\ &\times [m_{a_n,n-1}(\bX_{i,\pi_{j2}}, \Theta_{2}) - m(\bX_{i,\pi_{j2}})
        - (\E[m(\Xpermi)|\bX_i^{(-j)}] - m(\bX_{i,\pi_{j2}}))] | \pi_{j1} \neq \pi_{j2} \big] \\
         = \E\big[&(m_{a_n,n-1}(\bX_{i,\pi_{j1}}, \Theta_{1}) - m(\bX_{i,\pi_{j1}}))
         (m_{a_n,n-1}(\bX_{i,\pi_{j2}}, \Theta_{2}) - m(\bX_{i,\pi_{j2}}))| \pi_{j1} \neq \pi_{j2} \big]\\
        & -2 \E\big[(m_{a_n,n-1}(\bX_{i,\pi_{j1}}, \Theta_{1}) - m(\bX_{i,\pi_{j1}}))
        (\E[m(\Xpermi)|\bX_i^{(-j)}] - m(\bX_{i,\pi_{j2}})) | \pi_{j1} \neq \pi_{j2} \big] \\
        &+ \E\big[(\E[m(\Xpermi)|\bX_i^{(-j)}] - m(\bX_{i,\pi_{j1}}))(\E[m(\Xpermi)|\bX_i^{(-j)}] - m(\bX_{i,\pi_{j2}})) | \pi_{j1} \neq \pi_{j2} \big].
    \end{align*}
    For the second term, the two multiplied terms are independent conditional on $\bX_i^{(-j)}$ and $\pi_{j1} \neq \pi_{j2}$, then
    \begin{align*}
        \E&\big[(m_{a_n,n-1}(\bX_{i,\pi_{j1}}, \Theta_{1}) - m(\bX_{i,\pi_{j1}}))
        (\E[m(\Xpermi)|\bX_i^{(-j)}] - m(\bX_{i,\pi_{j2}})) \big| \pi_{j1} \neq \pi_{j2} \big] \\
        &=\E\big[\E\big[(m_{a_n,n-1}(\bX_{i,\pi_{j1}}, \Theta_{1}) - m(\bX_{i,\pi_{j1}}))
        (\E[m(\Xpermi)|\bX_i^{(-j)}] - m(\bX_{i,\pi_{j2}})) \big| \bX_i^{(-j)},\pi_{j1} \neq \pi_{j2}\big] \big] \\
        &=\E\big[\E\big[m_{a_n,n-1}(\bX_{i,\pi_{j1}}, \Theta_{1}) - m(\bX_{i,\pi_{j1}}) \big| \bX_i^{(-j)}\big] \E\big[ \E[m(\Xpermi)|\bX_i^{(-j)}] - m(\bX_{i,\pi_{j2}})) \big| \bX_i^{(-j)}\big] \big] \\
        &= 0.
    \end{align*}
    Similarly, the third term is also null. Finally, we apply Cauchy-Schwartz inequality to the first term to obtain
    \begin{align*}
        \delta_{M,n} \frac{M-1}{M} \E\big[(m_{a_n,n-1}(&\bX_{i,\pi_{j1}}, \Theta_{1})
        - \E[m(\Xpermi)|\bX_i^{(-j)}]) \\ &\times (m_{a_n,n-1}(\bX_{i,\pi_{j2}}, \Theta_{2})
        - \E[m(\Xpermi)|\bX_i^{(-j)}]) | \pi_{j1} \neq \pi_{j2} \big] \\
        \leq \delta_{M,n} & \E\big[(m_{a_n,n-1}(\bX_{i,\pi_{j1}}, \Theta_{1}) - m(\bX_{i,\pi_{j1}}))^2\big]\\
        \leq \delta_{M,n} & \E\big[(m_{a_n,n-1}(\Xperm, \Theta)
        - m(\Xperm))^2\big],
    \end{align*}
    where the last inequality holds because $\bX_{i,\pi_{j1}}$ is independent of the sample used to train $m_{a_n,n-1}$ and have the same distribution as $\Xperm$.
    Overall, using this last inequality with the decomposition (\ref{decomposition_perm}), we obtain the following bound
    \begin{align*}
        \E\big[\big( m_{M,n,\pi_j}^{(OOB)}&(\bX_i, \bTheta_{M}) - \E[m(\Xpermi)|\bX_i^{(-j)}] \big)^2 \big| |\Lambda_{n,i}| > 0\big] \P(|\Lambda_{n,i}| > 0) \\
        \leq  &\delta_{M,n} \E\big[(m_{a_n,n-1}(\Xperm, \Theta)
        - m(\Xperm))^2\big]  \\
        &+ \Big( \frac{\gamma_{M,n}}{M} + \delta_{M,n} \frac{M-1}{M} \P(\pi_{j1} = \pi_{j2}) \Big) \E\big[(m_{a_n,n-1}(\Xperm, \Theta) - \E[m(\Xperm)|\bX^{(-j)}])^2\big].
    \end{align*}
    Furthermore, using Lemma \ref{lemma_tech}, the bound can be simplified to get
    \begin{align*}
        \E\big[\big( m_{M,n,\pi_j}^{(OOB)}&(\bX_i, \bTheta_{M}) - \E[m(\Xpermi)|\bX_i^{(-j)}] \big)^2 \big| |\Lambda_{n,i}| > 0\big] \P(|\Lambda_{n,i}| > 0) \\
        \leq  & \E\big[(m_{a_n,n-1}(\Xperm, \Theta)
        - m(\Xperm))^2\big]  \\
        &+ \Big( \frac{2}{1-a_n/n} \frac{1}{M} + \P(\pi_{j1} = \pi_{j2}) \Big) \E\big[(m_{a_n,n-1}(\Xperm, \Theta) - \E[m(\Xperm)|\bX^{(-j)}])^2\big].
    \end{align*}
    Next, we break down the expectation of the second term
    \begin{align*}
        \E\big[(m_{a_n,n-1}(&\Xperm, \Theta) - \E[m(\Xperm)|\bX^{(-j)}])^2\big] \\
        =& \E\big[(m_{a_n,n-1}(\Xperm, \Theta) - m(\Xperm) + (m(\Xperm) - \E[m(\Xperm)|\bX^{(-j)}]))^2\big]\\
        =& \E\big[(m_{a_n,n-1}(\Xperm, \Theta) - m(\Xperm))^2\big] + \E\big[(m(\Xperm) - \E[m(\Xperm)|\bX^{(-j)}])^2\big] \\
        &+ 2 \E\big[(m_{a_n,n-1}(\Xperm, \Theta) - m(\Xperm))
        (m(\Xperm) - \E[m(\Xperm)|\bX^{(-j)}])\big].
    \end{align*}
    Since $m$ is bounded, we get
    \begin{align*}
        \E\big[(m_{a_n,n-1}(\Xperm, \Theta) - \E[m(\Xperm)&|\bX^{(-j)}])^2\big] \\
        \leq \E\big[(&m_{a_n,n-1}(\Xperm, \Theta) - m(\Xperm))^2\big] + 4 ||m||_{\infty}^2 \\
        &+ 4 ||m||_{\infty} \E\big[|m_{a_n,n-1}(\Xperm, \Theta) - m(\Xperm)|\big].
    \end{align*}
    Finally we obtain the following bound
    \begin{align*}
        \E\big[\big( m_{M,n,\pi_j}^{(OOB)}&(\bX_i, \bTheta_{M}) - \E[m(\Xpermi)|\bX_i^{(-j)}] \big)^2 \big| |\Lambda_{n,i}| > 0\big] \P(|\Lambda_{n,i}| > 0) \\
        \leq  & \Big( 1 + \frac{2}{1-a_n/n} \frac{1}{M} + \P(\pi_{j1} = \pi_{j2}) \Big) \E\big[(m_{a_n,n-1}(\Xperm, \Theta_{\ell})
        - m(\Xperm))^2\big] \\
        &+ \Big( \frac{2}{1-a_n/n} \frac{1}{M} + \P(\pi_{j1} = \pi_{j2}) \Big) 4 ||m||_{\infty} \E\big[|m_{a_n,n-1}(\Xperm, \Theta) - m(\Xperm)|\big] \\ &+ 4 ||m||_{\infty}^2 \Big( \frac{2}{1-a_n/n} \frac{1}{M} + \P(\pi_{j1} = \pi_{j2}) \Big).
    \end{align*}
    The second part of Lemma \ref{lemme_rf_consistency} for $M = 1$ gives that 
    \begin{align*}
        \lim_{n \to \infty} \E\big[(m_{a_n,n-1}(\Xperm, \Theta_{\ell})
        - m(\Xperm))^2\big] = 0,
    \end{align*}
    and since $\mathbb{L}^2$-convergence implies $\mathbb{L}^1$-convergence, we also have
    \begin{align*}
        \lim_{n \to \infty} \E\big[|m_{a_n,n-1}(\Xperm, \Theta_{\ell})
        - m(\Xperm)|\big] = 0.
    \end{align*}
    It is clear that $\P(\pi_{j1} = \pi_{j2}) < 1/(n - a_n)$, and then $\lim_{n \to \infty} \P(\pi_{j1} = \pi_{j2}) = 0$, since $1 - a_n/n > \kappa > 0$ by Assumption \ref{A3}. Additionally, according to Assumption \ref{A4}, $M \underset{n \to \infty}{\longrightarrow}\infty$, therefore
    \begin{align*}
        \lim_{n \to \infty} \frac{2}{1-a_n/n} \frac{1}{M} + \P(\pi_{j1} = \pi_{j2}) = 0.
    \end{align*}
    Overall, we have
    \begin{align*}
        \lim_{n \to \infty} \E\big[\big( m_{M,n,\pi_j}^{(OOB)}(\bX_i, \bTheta_{M}) - \E[m(\Xpermi)|&\bX_i^{(-j)}] \big)^2 \mathds{1}_{|\Lambda_{n,i}| > 0}\big]  = 0.
    \end{align*}

\end{proof}

\begin{proof}[Proof of Lemma \ref{lemma_tech}]
    We consider $M \in \mathbb{N}\setminus\{0,1\}$, $i \in \{1,\hdots,n\}$, and define
    \begin{align*}
        \delta_{M,n} =& M^2 \E \Big[ \frac{1}{|\Lambda_{n,i}|^2} \big| 1,2 \in \Lambda_{n,i} \Big] \P(1,2 \in \Lambda_{n,i}) \\
         =& M^2 \E \Big[ \frac{1}{|\Lambda_{n,i}|^2} \big| M-1,M \in \Lambda_{n,i} \Big] \P(M-1,M \in \Lambda_{n,i}).
    \end{align*}
    Recall that by definition, $|\Lambda_{n,i}| = \sum_{\ell=1}^{M} \mathds{1}_{i \notin \Theta^{(S)}_{\ell}}$. Since $\Theta_{\ell}$ are iid, $|\Lambda_{n,i}|$ is a binomial random variable. Then, we have
    \begin{align*}
        \E \Big[ \frac{1}{|\Lambda_{n,i}|^2} \big| M,M-1 \in \Lambda_{n,i} \Big] =& 
        \E \Big[ \frac{1}{(2 + \sum_{\ell=1}^{M-2} \mathds{1}_{i \notin \Theta^{(S)}_{\ell}})^2} \Big] \\ =&
        \sum_{k=0}^{M-2} \frac{1}{(k+2)^2} {M-2 \choose k}\big( 1 - \frac{a_n}{n} \big)^{k} \big(\frac{a_n}{n}\big)^{M-2-k}.
    \end{align*}
    On the other hand, 
    \begin{align*}
        \P(M-1,M \in \Lambda_{n,i}) =  \big(1 - \frac{a_n}{n}\big)^2.
    \end{align*}
    Combining the previous two equations, we get
    \begin{align*}
        \delta_{M,n} =& M^2 \big( 1 - \frac{a_n}{n} \big)^2 \sum_{k=0}^{M-2} \frac{1}{(k+2)^2} {M-2 \choose k}\big( 1 - \frac{a_n}{n} \big)^{k} \big(\frac{a_n}{n}\big)^{M-2-k} \\
        =& M^2 \sum_{k=0}^{M-2} \frac{1}{(k+2)^2} \frac{(M-2)!}{k!(M-(k+2))!} \big( 1 - \frac{a_n}{n} \big)^{k+2} \big(\frac{a_n}{n}\big)^{M-(k+2)} \\
        =& M^2 \sum_{k=0}^{M-2} \frac{k+1}{(k+2)M(M-1)} \frac{M!}{(k+2)!(M-(k+2))!} \big( 1 - \frac{a_n}{n} \big)^{k+2} \big(\frac{a_n}{n}\big)^{M-(k+2)} \\
        =& \frac{M}{M-1} \sum_{k=0}^{M-2} \frac{k+1}{k+2} {M \choose k+2} \big( 1 - \frac{a_n}{n} \big)^{k+2} \big(\frac{a_n}{n}\big)^{M-(k+2)} \\
    \end{align*}
    We reindex the sum with $k \shortleftarrow k + 2 $ and get
    \begin{align} \label{eq_delta_0}
        \delta_{M,n}
        =& \frac{M}{M-1} \sum_{k=2}^{M} \frac{k-1}{k} {M \choose k} \big( 1 - \frac{a_n}{n} \big)^{k} \big(\frac{a_n}{n}\big)^{M-k} \nonumber \\
         =& \frac{M}{M-1} \sum_{k=1}^{M} \Big(1 - \frac{1}{k}\Big) {M \choose k} \big( 1 - \frac{a_n}{n} \big)^{k} \big(\frac{a_n}{n}\big)^{M-k}.
    \end{align}
    Next, we bound $\delta_{M,n}$,
    \begin{align} \label{eq_delta}
        \delta_{M,n}
          \leq& \frac{M}{M-1} \sum_{k=1}^{M} \Big(1 - \frac{1}{M}\Big) {M \choose k} \big( 1 - \frac{a_n}{n} \big)^{k} \big(\frac{a_n}{n}\big)^{M-k} \nonumber \\
           \leq& \sum_{k=0}^{M} {M \choose k} \big( 1 - \frac{a_n}{n} \big)^{k} \big(\frac{a_n}{n}\big)^{M-k} - \big(\frac{a_n}{n}\big)^M \nonumber\\
           \leq& 1 - \big(\frac{a_n}{n}\big)^M \\
           \leq& 1. \nonumber
    \end{align}
    
    Similarly for the second inequality, we define
    \begin{align*}
        \gamma_{M,n} =& M^2 \E \Big[ \frac{1}{|\Lambda_{n,i}|^2} \big| 1 \in \Lambda_{n,i} \Big] \P(1 \in \Lambda_{n,i}) \\
         =& M^2 \E \Big[ \frac{1}{|\Lambda_{n,i}|^2} \big| M \in \Lambda_{n,i} \Big] \P(M \in \Lambda_{n,i}),
    \end{align*}
    and get
    \begin{align*}
        \gamma_{M,n} =& M^2 \big(1 - \frac{a_n}{n} \big) \sum_{k=0}^{M-1} \frac{1}{(k+1)^2} {M-1 \choose k}\big( 1 - \frac{a_n}{n} \big)^{k} \big(\frac{a_n}{n}\big)^{M-1-k} \\
        =& M \sum_{k=0}^{M-1} \frac{1}{k+1} {M \choose k+1} \big( 1 - \frac{a_n}{n} \big)^{k+1} \big(\frac{a_n}{n}\big)^{M-(k+1)} \\
        =& M \sum_{k=1}^{M} \frac{1}{k} {M \choose k} \big( 1 - \frac{a_n}{n} \big)^{k} \big(\frac{a_n}{n}\big)^{M-k} \\
        =& M \E\Big[\frac{1}{Z}\mathds{1}_{Z\geq1}\Big],
    \end{align*}
    where $Z$ is a binomial random variable with $M$ trials and parameter $1 - \frac{a_n}{n}$.
    Lemma 4.1 from \citet{gyorfi2006distribution} states that
    \begin{align} \label{bound_gyorfi}
        \E\Big[\frac{1}{Z}\mathds{1}_{Z\geq1}\Big] 
        \leq \frac{2}{(M+1)(1 - \frac{a_n}{n})},
    \end{align}
    which implies that
    \begin{align*}
        \gamma_{M,n} \leq \frac{2 M}{(M+1)(1 - \frac{a_n}{n})}
        \leq \frac{2}{1 - \frac{a_n}{n}}.
    \end{align*}
    On the other hand, 
    \begin{align*}
        \gamma_{M,n}
        =& M \sum_{k=1}^{M} \frac{1}{k} {M \choose k} \big( 1 - \frac{a_n}{n} \big)^{k} \big(\frac{a_n}{n}\big)^{M-k} \\
        \geq& M \sum_{k=1}^{M} \frac{1}{M} {M \choose k} \big( 1 - \frac{a_n}{n} \big)^{k} \big(\frac{a_n}{n}\big)^{M-k} \\
        \geq& 1 - \big(\frac{a_n}{n}\big)^M \\
        \geq& \delta_{M,n},
    \end{align*}
    where the last inequality uses (\ref{eq_delta}).
    
    To prove the last statement of Lemma~\ref{lemma_tech}, we go back to equation (\ref{eq_delta_0}):
         \begin{align*} 
        \delta_{M,n}
        =& \frac{M}{M-1} \sum_{k=1}^{M} \big(1 - \frac{1}{k}\big) {M \choose k} \big( 1 - \frac{a_n}{n} \big)^{k} \big(\frac{a_n}{n}\big)^{M-k} \\
        =& \frac{M}{M-1} \Big[ \sum_{k=1}^{M} {M \choose k} \big( 1 - \frac{a_n}{n} \big)^{k} \big(\frac{a_n}{n}\big)^{M-k} 
        - \sum_{k=1}^{M} \frac{1}{k} {M \choose k} \big( 1 - \frac{a_n}{n} \big)^{k} \big(\frac{a_n}{n}\big)^{M-k} \Big] \\
        =& \frac{M}{M-1} \Big[1 - \big(\frac{a_n}{n}\big)^{M} - \E\Big[\frac{1}{Z}\mathds{1}_{Z\geq1}\Big]\Big] \\
        \geq& \frac{M}{M-1} \Big[ 1 - \big(\frac{a_n}{n}\big)^{M} - \frac{2}{(M+1)(1 - \frac{a_n}{n})} \Big],
    \end{align*}
    where we use inequality (\ref{bound_gyorfi}) for the last statement.
    Overall, using also inequality (\ref{eq_delta}), we have
    \begin{align*}
        0 \geq M(\delta_{M,n} - 1) \geq \frac{M}{M-1} \Big[ 1 - M\big(\frac{a_n}{n}\big)^{M} - \frac{2M}{(M+1)(1 - \frac{a_n}{n})} \Big]
    \end{align*}
    The right hand side is an increasing function of $M$ and converges towards $-\frac{1+a_n/n}{1-a_n/n}$ as $M \to \infty$. 
    Additionally, the right hand side is always defined since $1 - a_n/n > \kappa > 0$ from Assumption \ref{A3}.
    Therefore, for a fixed sample size $n$, $M(\delta_{M,n} - 1)$ is a bounded sequence. Finally,
    \begin{align*}
       \delta_{M,n} - 1 = O\Big(\frac{1}{M}\Big).
    \end{align*}
    
\end{proof}

\subsection{Proof of Proposition \ref{prop_MDA}} \label{sec_proof_prop1}

\begin{proposition*} \label{prop_MDA}
    If Assumptions \ref{A1}, \ref{A2} and \ref{A3} are satisfied, then for all $M \in \mathbb{N}^{\star}$ and $j \in \{1,\hdots,p\}$ we have
	\begin{align*} &(i) \quad  \widehat{\textrm{MDA}}_{M,n}^{(TT)}(X^{(j)}) \overset{\mathbb{L}^1}{\longrightarrow} \V[Y] \times ST^{(j)} + \V[Y] \times ST^{(j)}_{mg} + \textrm{MDA}_3^{ \star (j)} \\ &(ii) \quad \widehat{\textrm{MDA}}_{M,n}^{(BC)}(X^{(j)}) \overset{\mathbb{L}^1}{\longrightarrow} \V[Y] \times ST^{(j)} + \V[Y] \times ST^{(j)}_{mg} + \textrm{MDA}_3^{ \star (j)}.
	\end{align*}
	If Assumption \ref{A4} is additionally satisfied, then
	\begin{align*}
	(iii) \quad \widehat{\textrm{MDA}}_{M,n}^{(IK)}(X^{(j)}) \overset{\mathbb{L}^1}{\longrightarrow} \V[Y] \times ST^{(j)} + \textrm{MDA}_3^{ \star (j)}.
	\end{align*}
\end{proposition*}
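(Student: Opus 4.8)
The plan is to treat Proposition \ref{prop_MDA} as a purely probabilistic rewriting of the limits already established in Theorem \ref{thm_MDA}; no further asymptotic work is required. Since Theorem \ref{thm_MDA} gives $\smash{\widehat{\textrm{MDA}}_{M,n}^{(TT)}}$ and $\smash{\widehat{\textrm{MDA}}_{M,n}^{(BC)}}$ converging in $\mathbb{L}^1$ to $\E[\{m(X) - m(\Xperm)\}^2]$, and $\smash{\widehat{\textrm{MDA}}_{M,n}^{(IK)}}$ to $\E[\{m(X) - \E[m(\Xperm)\mid X^{(-j)}]\}^2]$, it suffices to establish the two algebraic identities
\begin{align*}
\E[\{m(X) - m(\Xperm)\}^2] &= \E\{\V(m(X)\mid X^{(-j)})\} + \E\{\V(m(\Xperm)\mid X^{(-j)})\} + \textrm{MDA}_3^{\star(j)}, \\
\E[\{m(X) - \E[m(\Xperm)\mid X^{(-j)}]\}^2] &= \E\{\V(m(X)\mid X^{(-j)})\} + \textrm{MDA}_3^{\star(j)},
\end{align*}
and then to recognise $\E\{\V(m(X)\mid X^{(-j)})\} = \V[Y]\, ST^{(j)}$ and $\E\{\V(m(\Xperm)\mid X^{(-j)})\} = \V[Y]\, ST^{(j)}_{mg}$ directly from the definitions of the total and marginal total Sobol indices.

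The key structural fact, noted just before the statement, is that $X$ and $\Xperm$ are independent conditionally on $X^{(-j)}$: the $j$-th coordinate of $\Xperm$ is a fresh independent copy of $X^{(j)}$, so given $X^{(-j)}$ the variables $m(X)$ and $m(\Xperm)$ are functions of conditionally independent inputs. First I would condition on $X^{(-j)}$ and expand the square via the conditional bias--variance decomposition; writing $A = m(X)$, $B = m(\Xperm)$ and $W = X^{(-j)}$,
\begin{align*}
\E[(A - B)^2 \mid W] = \V(A\mid W) + \V(B\mid W) - 2\,\mathrm{Cov}(A,B\mid W) + \{\E[A\mid W] - \E[B\mid W]\}^2.
\end{align*}
Conditional independence kills the covariance term, and taking the outer expectation yields the first identity, the last term being exactly $\textrm{MDA}_3^{\star(j)}$ by definition. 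For the IK limit I would instead treat $\E[m(\Xperm)\mid W]$ as a $W$-measurable constant inside the inner expectation, so that $\E[\{m(X) - \E[B\mid W]\}^2 \mid W] = \V(m(X)\mid W) + \{\E[m(X)\mid W] - \E[B\mid W]\}^2$; the outer expectation then gives the second identity.

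There is essentially no hard obstacle here, so the emphasis is on correct bookkeeping of the definitions rather than on any estimate. The only points requiring care are: (a) justifying the dropping of $\mathrm{Cov}(A,B\mid W)$, which rests on the independent-copy construction of $\Xperm$ and on the conditioning being on $X^{(-j)}$, the coordinates shared by $X$ and $\Xperm$; and (b) matching normalisations, noting that $ST^{(j)}$ and $ST^{(j)}_{mg}$ carry a $1/\V[Y]$ factor in their definitions that is precisely cancelled by the $\V[Y]$ prefactors in the statement. Positivity of all three components is then immediate, each being the expectation of a nonnegative quantity.
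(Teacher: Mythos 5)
Your proposal is correct and follows essentially the same route as the paper: both reduce the proposition to an algebraic identity via Theorem \ref{thm_MDA}, condition on $X^{(-j)}$, use the conditional independence of $m(X)$ and $m(\Xperm)$ given $X^{(-j)}$ to kill the cross terms (the paper expands a three-term square where you invoke the conditional bias--variance formula, but this is the same computation), and identify the resulting pieces with $\V[Y]\,ST^{(j)}$, $\V[Y]\,ST^{(j)}_{mg}$, and $\textrm{MDA}_3^{\star(j)}$. Your explicit handling of the covariance term and of the $\E[m(\Xperm)\mid X^{(-j)}]$ term as conditionally constant in the IK case matches the paper's argument point for point.
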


\begin{proof}[Proof of Proposition \ref{prop_MDA}]
	We assume that Assumptions \ref{A1}, \ref{A2}, and \ref{A3} are satisfied, and fix $j \in \{1,\hdots,p\}$ and $M \in \mathbb{N}^{\star}$. Then, using Theorem \ref{thm_MDA}-(i), we have 
	\begin{align*}
	     \widehat{\textrm{MDA}}_{M,n}^{(TT)}(X^{(j)}) \overset{\mathbb{L}^1}{\longrightarrow} \E[(m(\bX) - m(\Xperm))^2].
	\end{align*}
	First, we rewrite the MDA limit as
	\begin{align*}
	\E[(m(\bX) - &m(\Xperm))^2] \\
	=& \E[\E[(m(\bX) - m(\Xperm))^2 | \bX^{(-j)}]] \\
	=& \E\big[\E\big[\big((m(\bX) - \E[m(\bX)|\bX^{(-j)}]) -  
	(m(\Xperm) - \E[m(\Xperm)|\bX^{(-j)}]) \\ & + 
	(\E[m(\bX)|\bX^{(-j)}] - \E[m(\Xperm)|\bX^{(-j)}]) \big)^2 | \bX^{(-j)}\big]\big].
	\end{align*}
	Now, observing that these three terms are independent conditionally on $\bX^{(-j)}$, we can expand the MDA limit as follows
    \begin{align*}
	\E[(m(\bX) - &m(\Xperm))^2] \\
	=& \E\big[\E\big[(m(\bX) - \E[m(\bX)|\bX^{(-j)}])^2| \bX^{(-j)}\big] + \E[(m(\Xperm) - \E[m(\Xperm)|\bX^{(-j)}])^2| \bX^{(-j)}\big] \\ & + 
	(\E[m(\bX)|\bX^{(-j)}] - \E[m(\Xperm)|\bX^{(-j)}])^2 \big] \\
	=& \E[\V[m(\bX)|\bX^{(-j)}]] +  \E[\V[m(\Xperm)|\bX^{(-j)}]] \\ & +
	\E[(\E[m(\bX)|\bX^{(-j)}] - \E[m(\Xperm)|\bX^{(-j)}])^2] \\
	=& \V[Y] \times ST^{(j)} +  \V[Y] \times ST_{mg}^{(j)} + \E[(\E[m(\bX)| \bX^{(-j)}] - \E[m(\Xperm)| \bX^{(-j)}])^2].
    \end{align*}
    
    Theorem \ref{thm_MDA}-(ii) gives the same theoretical counterpart for BC-MDA, and thus the same decomposition applies
    \begin{align*}
	     \widehat{\textrm{MDA}}_{M,n}^{(BC)}(X^{(j)}) \overset{\mathbb{L}^1}{\longrightarrow} \V[Y] \times ST^{(j)} +  \V[Y] \times ST_{mg}^{(j)} + \E[(\E[m(\bX)| \bX^{(-j)}] - \E[m(\Xperm)| \bX^{(-j)}])^2].
	\end{align*}
    
    Now, we additionally assume that Assumption \ref{A4} is satisfied, i.e., the number of trees grows to infinity with $n$.
    Then, using Theorem \ref{thm_MDA}-(iii) we have 
	\begin{align*}
	     \widehat{\textrm{MDA}}_{M,n}^{(IK)}(X^{(j)}) \overset{\mathbb{L}^1}{\longrightarrow} \E[(m(\bX) - \E[m(\Xperm)|\bX^{(-j)}])^2].
	\end{align*}
	We decompose the theoretical counterpart as in the first case,
	\begin{align*}
	     \E[(m(\bX) &- \E[m(\Xperm)|\bX^{(-j)}])^2] \\ 
	     =& \E[(m(\bX) - \E[m(\bX)|\bX^{(-j)}] - (\E[m(\Xperm)|\bX^{(-j)}] - \E[m(\bX)|\bX^{(-j)}] ))^2] \\
	     =& \E[(m(\bX) - \E[m(\bX)|\bX^{(-j)}])^2] + \E[(\E[m(\bX)|\bX^{(-j)}] - \E[m(\Xperm)|\bX^{(-j)}]))^2] \\
	     =& \V[Y] \times ST^{(j)} + \E[(\E[m(\bX)| \bX^{(-j)}] - \E[m(\Xperm)| \bX^{(-j)}])^2].
	\end{align*}
	
\end{proof}

\subsection{Proof of Corollary $2$}

\begin{corollary*} \label{cor_MDA_indep}
If covariates are independent, and if Assumptions \ref{A1}-\ref{A3} are satisfied, for all $M \in \mathbb{N}^{\star}$ and $j \in \{1,\hdots,p\}$ we have 
\begin{align*}
\widehat{\textrm{MDA}}_{M,n}^{(TT)}(X^{(j)})& \overset{\mathbb{L}^1}{\longrightarrow} 2\V[Y] \times ST^{(j)} \\
\widehat{\textrm{MDA}}_{M,n}^{(BC)}(X^{(j)})& \overset{\mathbb{L}^1}{\longrightarrow} 2\V[Y] \times ST^{(j)}.
\end{align*}
In addition, if Assumption \ref{A4} is satisfied,
\begin{align*}
\widehat{\textrm{MDA}}_{M,n}^{(IK)}(X^{(j)})& \overset{\mathbb{L}^1}{\longrightarrow} \V[Y] \times ST^{(j)}.
\end{align*}
\end{corollary*}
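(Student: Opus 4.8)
The plan is to build directly on Proposition~\ref{prop_MDA}, whose decompositions already isolate the relevant pieces, and then to show that independence forces two of the three components to collapse. Recall that Proposition~\ref{prop_MDA} gives
\begin{align*}
    \widehat{\textrm{MDA}}_{M,n}^{(TT)}(X^{(j)}) \overset{\mathbb{L}^1}{\longrightarrow} \V[Y] \times ST^{(j)} + \V[Y] \times ST_{mg}^{(j)} + \textrm{MDA}_3^{\star (j)},
\end{align*}
with the identical limit for the BC-MDA, and the limit $\V[Y]\times ST^{(j)} + \textrm{MDA}_3^{\star(j)}$ for the IK-MDA. Hence it suffices to evaluate $ST_{mg}^{(j)}$ and $\textrm{MDA}_3^{\star(j)}$ in the independent setting and substitute.

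First I would invoke Property~\ref{property_Smg}-(b), which states precisely that $ST_{mg}^{(j)} = ST^{(j)}$ whenever the components of $X$ are independent. This merges the first two terms of the TT- and BC-MDA limits into $2\V[Y]\times ST^{(j)}$, and is all that is needed from the marginal total Sobol index.

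The only substantive step is to show $\textrm{MDA}_3^{\star(j)} = 0$. By definition,
\begin{align*}
    \textrm{MDA}_3^{\star(j)} = \E\big[\{\E(m(X)\mid X^{(-j)}) - \E(m(\Xperm)\mid X^{(-j)})\}^2\big],
\end{align*}
so it is enough to prove that the two inner conditional expectations coincide almost surely. Writing $\Xperm = (X^{(-j)}, X'^{(j)})$ with $X'^{(j)}$ an independent copy of $X^{(j)}$, the term $\E(m(\Xperm)\mid X^{(-j)})$ integrates the $j$-th argument against the marginal law of $X^{(j)}$, whereas $\E(m(X)\mid X^{(-j)})$ integrates it against the conditional law of $X^{(j)}$ given $X^{(-j)}$. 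Under independence these two laws coincide, so the two conditional expectations agree and $\textrm{MDA}_3^{\star(j)} = 0$.

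Substituting, the TT- and BC-MDA limits become $\V[Y]\times ST^{(j)} + \V[Y]\times ST^{(j)} + 0 = 2\V[Y]\times ST^{(j)}$, while the IK-MDA limit becomes $\V[Y]\times ST^{(j)} + 0 = \V[Y]\times ST^{(j)}$, as claimed. Since the result is a direct corollary of Proposition~\ref{prop_MDA}, there is no genuine obstacle; the single point requiring care is the measure-theoretic identity between the marginal and conditional laws of $X^{(j)}$, which is exactly where the independence hypothesis enters and which drives the vanishing of $\textrm{MDA}_3^{\star(j)}$.
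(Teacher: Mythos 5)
Your proposal is correct and follows exactly the route the paper intends: the paper never writes out a separate proof of this corollary (its supplementary only proves the additive Corollary~\ref{cor_MDA_additive}), but its main text already records the two facts you use, namely Property~\ref{property_Smg}-(b) giving $ST^{(j)}_{mg} = ST^{(j)}$ and the observation that $\textrm{MDA}_3^{\star (j)}$ vanishes under independence, so plugging these into the decomposition of Proposition~\ref{prop_MDA} is precisely the implicit argument. Your justification of $\textrm{MDA}_3^{\star(j)} = 0$ (the conditional law of $X^{(j)}$ given $X^{(-j)}$ coincides with its marginal law under independence, so $\E[m(X)\mid X^{(-j)}] = \E[m(\Xperm)\mid X^{(-j)}]$ almost surely) is sound and fills in the one detail the paper leaves unstated.
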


\begin{corollary*} \label{cor_MDA_additive}
If the regression function $m$ is additive, and if Assumptions \ref{A1}-\ref{A3} are satisfied, for all $M \in \mathbb{N}^{\star}$ and $j \in \{1,\hdots,p\}$ we have 
\begin{align*}
\widehat{\textrm{MDA}}_{M,n}^{(TT)}(X^{(j)})& \overset{\mathbb{L}^1}{\longrightarrow} 2 \V[Y] \times ST_{mg}^{(j)} \\
\widehat{\textrm{MDA}}_{M,n}^{(BC)}(X^{(j)})& \overset{\mathbb{L}^1}{\longrightarrow} 2 \V[Y] \times ST_{mg}^{(j)}.
\end{align*}
In addition, if Assumption \ref{A4} is satisfied,
\begin{align*}
\widehat{\textrm{MDA}}_{M,n}^{(IK)}(X^{(j)})& \overset{\mathbb{L}^1}{\longrightarrow} \V[Y] \times ST_{mg}^{(j)}.
\end{align*}
\end{corollary*}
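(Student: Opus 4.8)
The plan is to start from the decompositions already established in Proposition \ref{prop_MDA} and to show that, under additivity, the two terms $\V[Y]\,ST^{(j)}$ and $\textrm{MDA}_3^{\star(j)}$ collapse together into the single marginal term $\V[Y]\,ST_{mg}^{(j)}$. Indeed, Proposition \ref{prop_MDA} gives the $\mathbb{L}^1$-limits $\V[Y]\,ST^{(j)} + \V[Y]\,ST_{mg}^{(j)} + \textrm{MDA}_3^{\star(j)}$ for the TT- and BC-MDA, and $\V[Y]\,ST^{(j)} + \textrm{MDA}_3^{\star(j)}$ for the IK-MDA. Hence it suffices to prove the single identity $\V[Y]\,ST^{(j)} + \textrm{MDA}_3^{\star(j)} = \V[Y]\,ST_{mg}^{(j)}$, after which substitution yields $2\V[Y]\,ST_{mg}^{(j)}$ and $\V[Y]\,ST_{mg}^{(j)}$ respectively.

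To establish this identity, I would write the additive regression function as $m(X) = m_j(X^{(j)}) + h(X^{(-j)})$ with $h(X^{(-j)}) = \sum_{k\neq j} m_k(X^{(k)})$, and reduce every conditional quantity to one involving $m_j$ alone. Since $h(X^{(-j)})$ is $X^{(-j)}$-measurable it drops out of all centered expressions, so that $\V(m(X)\mid X^{(-j)}) = \V(m_j(X^{(j)})\mid X^{(-j)})$ and therefore $\V[Y]\,ST^{(j)} = \E[\V(m_j(X^{(j)})\mid X^{(-j)})]$. For the permuted vector $\Xperm$, whose $j$-th coordinate is an independent copy of $X^{(j)}$ independent of $X^{(-j)}$, the same measurability together with independence gives $\E[m(\Xperm)\mid X^{(-j)}] = \E[m_j(X^{(j)})] + h(X^{(-j)})$. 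Subtracting this from $\E[m(X)\mid X^{(-j)}] = \E[m_j(X^{(j)})\mid X^{(-j)}] + h(X^{(-j)})$ cancels $h$ and yields $\textrm{MDA}_3^{\star(j)} = \E[(\E[m_j(X^{(j)})\mid X^{(-j)}] - \E[m_j(X^{(j)})])^2] = \V(\E[m_j(X^{(j)})\mid X^{(-j)}])$.

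Finally I would combine these two expressions via the law of total variance applied to $m_j(X^{(j)})$ conditionally on $X^{(-j)}$:
\[
\V(m_j(X^{(j)})) = \E\big[\V(m_j(X^{(j)})\mid X^{(-j)})\big] + \V\big(\E[m_j(X^{(j)})\mid X^{(-j)}]\big).
\]
The right-hand side is exactly $\V[Y]\,ST^{(j)} + \textrm{MDA}_3^{\star(j)}$, while the left-hand side equals $\V[Y]\,ST_{mg}^{(j)}$ by Property \ref{property_Smg}-(c) (equivalently, by the direct computation $\V(m(\Xperm)\mid X^{(-j)}) = \V(m_j(X^{(j)}))$, again using independence of the permuted copy). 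This proves the required identity, and substituting it back into the three limits of Proposition \ref{prop_MDA} gives the announced values.

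The computations here are routine once the additive splitting is in place; the only point demanding care---which I view as the main, if minor, obstacle---is the correct use of the independence of the permuted coordinate from $X^{(-j)}$ when evaluating $\E[m(\Xperm)\mid X^{(-j)}]$ and $\V(m(\Xperm)\mid X^{(-j)})$, since it is precisely this independence (rather than additivity alone) that forces the conditional mean and variance of $m_j$ at the permuted point to become \emph{unconditional}, thereby decoupling the marginal term from the dependence structure of $X$ and making the two terms $\V[Y]\,ST^{(j)}$ and $\textrm{MDA}_3^{\star(j)}$ recombine into a single total variance.
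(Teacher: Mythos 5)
Your proposal is correct, and it takes a genuinely different route from the paper's proof. The paper works directly from the raw limits in Theorem~\ref{thm_MDA}: under additivity it writes $m(\bX) - m(\Xperm) = m_j(X^{(j)}) - m_j(X'^{(j)})$ with $X'^{(j)}$ an independent copy of $X^{(j)}$, computes $\E[\{m_j(X^{(j)}) - m_j(X'^{(j)})\}^2] = 2\V[m_j(X^{(j)})]$ for the TT/BC limits (and $\E[\{m_j(X^{(j)}) - \E[m_j(X'^{(j)})]\}^2] = \V[m_j(X^{(j)})]$ for the IK limit), and then separately verifies that $\V[Y]\,ST^{(j)}_{mg} = \V[m_j(X^{(j)})]$, exactly as in Property~\ref{property_Smg}-(c). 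You instead start from the three-term decomposition of Proposition~\ref{prop_MDA} and prove the single identity $\V[Y]\,ST^{(j)} + \textrm{MDA}_3^{\star(j)} = \V[Y]\,ST^{(j)}_{mg}$, recognizing $\E[\V(m_j(X^{(j)})\mid X^{(-j)})]$ and $\V(\E[m_j(X^{(j)})\mid X^{(-j)}])$ as the two halves of the law of total variance for $\V(m_j(X^{(j)}))$. All your intermediate computations check out (in particular the reduction $\E[m(\Xperm)\mid X^{(-j)}] = \E[m_j(X^{(j)})] + h(X^{(-j)})$, which correctly uses independence of the permuted coordinate, and the fact that $\E[\E[m_j(X^{(j)})\mid X^{(-j)}]] = \E[m_j(X^{(j)})]$ so that $\textrm{MDA}_3^{\star(j)}$ is indeed a variance of a conditional expectation). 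What your route buys is structural insight: under additivity, dependence merely redistributes mass between $ST^{(j)}$ and $\textrm{MDA}_3^{\star(j)}$ while their sum is pinned at $ST^{(j)}_{mg}$, which explains at a glance why the corollary's limits take the stated form. What the paper's route buys is self-containedness and brevity: it needs only Theorem~\ref{thm_MDA} and the elementary identity $\E[(U - U')^2] = 2\V[U]$ for iid $U, U'$, without invoking the decomposition of Proposition~\ref{prop_MDA} at all.
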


\begin{proof}[Proof of Corollary \ref{cor_MDA_additive}]

	We assume that Assumptions \ref{A1}, \ref{A2}, and \ref{A3} are satisfied, and fix $j \in \{1,\hdots,p\}$ and $M \in \mathbb{N}^{\star}$. Then, using Theorem \ref{thm_MDA}-(i), we have 
	\begin{align*}
	     \widehat{\textrm{MDA}}_{M,n}^{(TT)}(X^{(j)}) \overset{\mathbb{L}^1}{\longrightarrow} \E[(m(\bX) - m(\Xperm))^2].
	\end{align*}
    Since the regression function is assumed additive, we can write $m$ as
    \begin{align*}
        m(\bx) = \sum_{k=1}^{p} m_k(x^{(k)}).
    \end{align*}
    Then, the MDA limit writes
    \begin{align*}
        \E[(m(\bX) - m(\Xperm))^2] &= \E[\{m_j(X^{(j)}) - m_j(X'^{(j)})\}^2] \\
        &= \E[\{(m_j(X^{(j)}) - \E[m_j(X^{(j)})]) - (m_j(X'^{(j)}) - \E[m_j(X^{(j)})])\}^2] \\
        &= 2 \V[m_j(X^{(j)})],
    \end{align*}
    where $X'^{(j)}$ is an independent copy of $X^{(j)}$ by definition of $\Xperm$. 
    
    On the other hand, we have
    \begin{align*}
	    \V[Y] \times ST^{(j)}_{mg} &= \E[\V[m(\Xperm)|\bX^{(-j)}]] \\
	    &= \E[\{m(\Xperm) - \E[m(\Xperm)|\bX^{(-j)}]\}^2] \\
	    &= \E[\{m_j(X'^{(j)}) + \sum_{k \neq j}^{p} m_k(X^{(k)}) - \E[m_j(X'^{(j)}) + \sum_{k \neq j}^{p} m_k(X^{(k)})|\bX^{(-j)}]\}^2] \\
	    &= \E[\{m_j(X'^{(j)}) - \E[m_j(X'^{(j)})]\}^2] \\
	    &= \V[m_j(X^{(j)})] \\
	    &= 1/2 \E[(m(\bX) - m(\Xperm))^2],
	\end{align*}
    which gives the result of Corollary \ref{cor_MDA_additive} for the Train-Test MDA.
    
    The proof for the Breiman-Cutler MDA is identical. 
    For the Iswharan-Kogalur MDA, we assume that Assumption (A4) is additionally satisfied, and Theorem \ref{thm_MDA} gives that
	\begin{align*}
	\widehat{\textrm{MDA}}_{M,n}^{(IK)}(X^{(j)}) \overset{\mathbb{L}^1}{\longrightarrow} \E[\{m(\bX) - \E[m(\Xperm)|\bX^{(-j)}]\}^2].
	\end{align*}
    Again, we can simplify the MDA limit in the additive setting, and we get
    \begin{align*}
        \E[\{m(\bX) - \E[m(\Xperm)|\bX^{(-j)}]\}^2] &= \E[\{m_j(X^{(j)}) - \E[m_j(X'^{(j)})]\}^2] \\
        &= \V[m_j(X^{(j)})] \\
        &=  \V[Y] \times ST^{(j)}_{mg},
    \end{align*}
    which gives the final result.
    
\end{proof}

\subsection{Proof of Property $1$}

\begin{property*}[Marginal Total Sobol Index] \label{property_Smg}
    If Assumption $1$ is satisfied, the marginal total Sobol index $ST^{(j)}_{mg}$ satisfies the following properties.
    \begin{enumerate}[(a)]
        \item $ST^{(j)}_{mg} = 0 \iff ST^{(j)} = 0$.
        \item If the components of $X$ are independent, then we have $ST^{(j)}_{mg} = ST^{(j)}$.
        \item If $m$ is additive, i.e. $m(X) = \sum_{k} m_k(X^{(k)})$, then we have $ST^{(j)}_{mg} = \V[m_j(X^{(j)})]/\V[Y]$, and $ST^{(j)}_{mg} \geq ST^{(j)}$.
    \end{enumerate}
\end{property*}

\begin{proof}[Proof of Property \ref{property_Smg}]
We assume that Assumption $1$ is satisfied. \\

(a) First, we assume that $ST^{(j)} = 0$.
Using the definition of the total Sobol index, we get that
\begin{align*}
    \E[(m(\bX) - \E[m(\bX)|\bX^{(-j)}])^2] = 0.
\end{align*}
By Assumption $1$, the density of $\bX$ is strictly positive on its support $[0,1]^p$, and since the square function is positive, the previous equation gives that, almost surely, 
\begin{align*}
    (m(\bX) - \E[m(\bX)|\bX^{(-j)}])^2 = 0,
\end{align*}
which gives
\begin{align*}
    m(\bX) = \E[m(\bX)|\bX^{(-j)}] \quad \textrm{a.s.}
\end{align*}
Therefore, m(\bX) does not depend on the $j$-th component almost surely, and we have
\begin{align*}
    m(\Xperm) = m(\bX) \quad \textrm{a.s.},
\end{align*}
and consequently $ST^{(j)}_{mg} = ST^{(j)} = 0$.
The reverse case follows the same proof. \\

(b) By construction, $\Xperm$ and $\bX$ have the same joint distribution when $\bX$ has independent components, and the result follows. \\

(c) We assume that $m$ is additive and writes
\begin{align*}
    m(\bX) = \sum_{k=1}^p m_k(X^{(k)}).
\end{align*}
We expand the definition of the marginal total Sobol index using the above expression of $m$ and obtain
\begin{align*}
    \V[Y] \times ST^{(j)}_{mg} &= \E[\V[m(\Xperm)|\bX^{(-j)}]] \\
    &= \E[\{m(\Xperm) - \E[m(\Xperm)|\bX^{(-j)}]\}^2] \\
    &= \E[\{m_j(X'^{(j)}) + \sum_{k \neq j}^{p} m_k(X^{(k)}) - \E[m_j(X'^{(j)}) + \sum_{k \neq j}^{p} m_k(X^{(k)})|\bX^{(-j)}]\}^2] \\
    &= \E[\{m_j(X'^{(j)}) - \E[m_j(X'^{(j)})]\}^2] \\
    &= \V[m_j(X^{(j)})].
\end{align*}

For the second part of the statement, we similarly derive
\begin{align*}
    \V[Y] \times ST^{(j)}
    &= \E[\{m_j(X^{(j)}) - \E[m_j(X^{(j)})|\bX^{(-j)}]\}^2] \\
    &= \E[\V[m_j(X^{(j)})|\bX^{(-j)}]],
\end{align*}
and the law of total variance gives that $ST^{(j)}_{mg} \geq ST^{(j)}$.

\end{proof}

\section{Proof of the Sobol-MDA Consistency} \label{sec_proof_SMDA}

For the sake of clarity, we recall Assumptions \ref{A5}, \ref{A6}, and Theorem \ref{thm_MDA_sobol}.
\begin{assumption*} \label{A5}
A node split is constrained to generate child nodes with at least a small fraction $\gamma > 0$ of the parent node observations. 
Secondly, the split selection is slightly modified: at each tree node, the number \texttt{mtry} of candidate variables drawn to optimize the split is set to $\texttt{mtry} = 1$ with a small probability $\delta > 0$. Otherwise, with probability $1 - \delta$, the default value of \texttt{mtry} is used.
\end{assumption*}
\begin{assumption*} \label{A6}
The asymptotic regime of $a_n$, the size of the subsampling without replacement, and the number of terminal leaves $t_n$ is such that $a_n \leq n-2$, $a_n/n < 1 - \kappa$ for a fixed $\kappa > 0$, $\lim \limits_{n \to \infty} a_n = \infty$, $\lim \limits_{n \to \infty} t_n = \infty$, and $\lim \limits_{n \to \infty} 2^{t_n} \frac{(\log(a_n))^9}{a_n} = 0$.
\end{assumption*}
\begin{theorem*} \label{thm_MDA_sobol}
    If Assumptions \ref{A1}, \ref{A5}, and \ref{A6} are satisfied, for all $M \in \mathbb{N}^{\star}$ and $j \in \{1,\hdots,p\}$
    \begin{align*}
        \widehat{\textrm{S-MDA}}_{M,n}(X^{(j)}) \overset{p}{\longrightarrow} ST^{(j)}.
    \end{align*}
\end{theorem*}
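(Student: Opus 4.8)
The plan is to reduce the convergence of $\widehat{\textrm{S-MDA}}_{M,n}(X^{(j)})$ to three ingredients: $\mathbb{L}^2$-consistency of each of the two out-of-bag estimates appearing in the definition, a purely algebraic expansion of the squared-error difference in the spirit of the proofs of Theorem~\ref{thm_MDA}, and Slutsky's lemma for the normalization. Since $\hat{\sigma}^2_Y \to \V[Y]$ almost surely by the law of large numbers and $\V[Y]>0$, it suffices to show that the unnormalized numerator $\frac1n\sum_{i=1}^n \big[\{Y_i - m_{M,n}^{(-j,OOB)}(X_i^{(-j)},\bTheta_{(M)})\}^2 - \{Y_i - m_{M,n}^{(OOB)}(X_i,\bTheta_{(M)})\}^2\big]$ converges in probability to $\V[Y]\times ST^{(j)} = \E[\V(m(X)\mid X^{(-j)})]$. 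The two consistency statements I would establish are that the projected out-of-bag forest is consistent for $m^{(-j)}(X^{(-j)}) = \E[m(X)\mid X^{(-j)}]$, and that the standard out-of-bag forest is consistent for $m(X)$; the latter is already available through Lemma~\ref{lemma_oob_risk} and the extended forest consistency of Section~\ref{sec_proof_IK}.

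For the numerator I would substitute $Y_i = m(X_i)+\varepsilon_i$ and insert $m^{(-j)}(X_i^{(-j)})$ as a pivot, expanding exactly as in the proof of Theorem~\ref{thm_MDA}-(iii). This groups the summand into a leading term $\{m(X_i) - m^{(-j)}(X_i^{(-j)})\}^2$, two ``variance'' terms $\{m^{(-j)}(X_i^{(-j)}) - m_{M,n}^{(-j,OOB)}(X_i^{(-j)})\}^2$ and $\{m(X_i)-m_{M,n}^{(OOB)}(X_i)\}^2$, several cross terms, and terms linear in the noise $\varepsilon_i$. Averaged over $i$, the leading term converges to $\E[\{m(X)-m^{(-j)}(X^{(-j)})\}^2]=\E[\V(m(X)\mid X^{(-j)})]$ by the law of large numbers; the two variance terms vanish in $\mathbb{L}^1$ by the two consistency statements; the cross terms vanish after Cauchy--Schwarz combined with the boundedness of $m$ (continuous on the compact $[0,1]^p$) and the same consistency; and the noise terms vanish because $\varepsilon$ is centered and independent, either directly by the law of large numbers or via a factorization $\E[|\varepsilon|]\times(\text{consistency})$. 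The only out-of-bag bookkeeping is that summands with $|\Lambda_{n,i}|=0$ contribute exactly zero, since both predictions are then null, so the effective average runs over $\{i:|\Lambda_{n,i}|>0\}$ and carries the stated limit.

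The genuine difficulty, and the heart of the argument, is the consistency of the projected estimate, i.e. $\E[\{m_{M,n}^{(-j,OOB)}(X_i^{(-j)},\bTheta_{(M)}) - m^{(-j)}(X_i^{(-j)})\}^2\mathds{1}_{|\Lambda_{n,i}|>0}] \to 0$. Here one cannot invoke \citet{scornet2015consistency} as in Section~\ref{sec_MDA}, because the projected partition is not a CART partition and, with dependent covariates, Assumption~\ref{A2} does not transfer to it. Instead I would use Assumption~\ref{A5}: the rule that each split retains at least a fraction $\gamma$ of the parent's observations, together with the event $\texttt{mtry}=1$ occurring with probability $\delta>0$ at every node, forces each coordinate of $X^{(-j)}$ to be split infinitely often, so that the diameter of the projected cell $A_n^{(-j)}(X^{(-j)},\Theta)$ in the $X^{(-j)}$-directions tends to $0$ as $a_n\to\infty$. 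Since this projected cell constrains only $X^{(-j)}$, its local average $m_n^{(-j)}(X^{(-j)},\Theta)$ estimates $\E[Y\mid X^{(-j)}\in A_n^{(-j)}]$, which tends to $m^{(-j)}(X^{(-j)})$ as the cell shrinks. The second hypothesis of a partitioning-estimate consistency theorem, as in \citet{gyorfi2006distribution}, is that each cell retains a growing number of training points; the obstruction is that the projected partition may have up to $2^{t_n}$ cells, which is precisely why Assumption~\ref{A6} strengthens \ref{A3} to $2^{t_n}(\log a_n)^9/a_n\to 0$.

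With cell-diameter shrinkage and cell-mass growth in hand, the Györfi-type argument gives $\mathbb{L}^2$-consistency of the projected single tree; the projected forest then follows by the conditional-i.i.d.\ decomposition (\ref{risk_forest_cart}), and its out-of-bag version by the risk bound of Lemma~\ref{lemma_oob_risk} adapted to the projected trees. I expect the control of the projected-cell mass to be the most delicate step: one must guarantee uniformly that no relevant projected cell becomes empty or too sparse, which is where the empty-cell fallback of the Projected-CART algorithm and the $2^{t_n}$ complexity budget of Assumption~\ref{A6} must be made to interact, and where the extension of the single-tree diameter-and-mass control to the out-of-bag subsample requires care.
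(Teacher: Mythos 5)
Your overall architecture coincides with the paper's: Slutsky/continuous-mapping for the $\hat{\sigma}^2_Y$ normalization, the same squared-error decomposition as in the proof of Theorem~\ref{thm_MDA}-(iii) with $\E[m(\bX)\mid\bX^{(-j)}]$ as pivot, the observation that indices with $|\Lambda_{n,i}|=0$ contribute exactly zero, and the use of Assumption~\ref{A5} to force splits in every coordinate so that cells shrink. On this last point the paper proceeds slightly differently but equivalently: its Lemma~\ref{lemme_diam} shows $\textrm{diam}(A_n(\bX,\Theta))\to 0$ in probability for the \emph{original} cell, and transfers this to the projected cell via the inclusion of $A_n^{(-j)}(\bX^{(-j)},\Theta)$ in the projection of $A_n(\bX,\Theta)$ along direction $j$, which can only decrease the diameter; your direct argument on the projected cell rests on the same mechanism.

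The genuine gap is in the estimation-error step, which you yourself flag as unresolved. You propose a local-averaging (Stone-type) consistency theorem whose second hypothesis is ``cell-mass growth,'' i.e., that the projected cell of a query point contains a growing number of training points, and you suggest reconciling this with the empty-cell fallback of the Projected-CART algorithm. No such control is available: projected cells are intersections of projections of terminal leaves, and they can be empty --- the paper devotes a remark to exactly this phenomenon --- so no occupancy lower bound, uniform or in probability, can be established. The paper's proof of Lemma~\ref{consistency_cart_proj} avoids occupancy entirely: it defines the projected tree estimate as the least-squares element of the class $\mathcal{F}_n^{(-j)}(\Theta)$ of piecewise-constant functions on the projected partition, truncates it at level $\beta_n$, and applies Theorem 10.2 of \citet{gyorfi2006distribution} (recalled as Theorem~\ref{thm_102}), whose estimation-error condition is a \emph{uniform deviation bound} over that function class, controlled not by cell occupancy but by the complexity of the partition family: $M(\Pi_n^{(-j)}(\Theta)) \le 2^{t_n}$ and $\Gamma(\Pi_n^{(-j)}(\Theta)) \le [(p-1)a_n]^{2^{t_n}}$, which vanish in the bound precisely under Assumption~\ref{A6}'s condition $2^{t_n}(\log a_n)^9/a_n \to 0$. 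This is the missing idea: switch from a local-averaging theorem to the empirical-risk-minimization theorem with complexity control, so that the $2^{t_n}$ budget of Assumption~\ref{A6} is spent on the deviation bound rather than on cell mass. A secondary inaccuracy: under Assumptions~\ref{A1}, \ref{A5}, \ref{A6} alone you cannot import the consistency of the standard out-of-bag forest from Section~\ref{sec_proof_IK}, since Lemma~\ref{lemme_oob_consistency} is proved under Assumptions~\ref{A2}--\ref{A3}; it must be re-derived from Lemma~\ref{lemme_diam} by the same Györfi-type argument, which is what the paper does when proving Theorem~\ref{thm_MDA_retrain}.
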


The consistency of the Sobol-MDA relies on the consistency of the projected random forest, stated in Lemma \ref{consistency_cart_proj}, and Lemma \ref{lemme_sobol_oob_consistency} for the corresponding OOB estimate. Lemma \ref{lemme_diam} is an intermediate result on the asymptotic behavior of the original forest.
Under the small modifications of the random forest algorithm defined by Assumption \ref{A5}, Lemma \ref{lemme_diam} states that the cells of a random tree in the empirical forest become infinitely small as the sample size increases.
For a cell $A \in [0,1]$, we define $\textrm{diam}(A)$ the diameter of a cell as
\begin{align*}
    \textrm{diam}(A) = \sup_{\bx,\bx' \in A} ||\bx - \bx'||_2.
\end{align*}
Recall that $A_n(\bX, \Theta)$ is the cell of the original $\Theta$-random CART where $\bX$ falls.
\begin{lemma*} \label{lemme_diam}
    If Assumptions \ref{A1}, \ref{A5}, and \ref{A6} are satisfied, we have in probability
    \begin{align*}
        \lim \limits_{n \to \infty} \textrm{diam}(A_n(\bX, \Theta)) = 0.
    \end{align*}
\end{lemma*}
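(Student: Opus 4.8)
The plan is to reduce the statement to a one–dimensional shrinkage estimate for each coordinate and then exploit the two modifications of Assumption \ref{A5} separately: the balance constraint to control how much a cell shrinks at each split, and the \texttt{mtry}$=1$ randomization to guarantee that every coordinate is cut often enough. Writing $\ell_{n,j}$ for the side length of the cell $A_n(\bX,\Theta)$ along coordinate $j$, one has $\mathrm{diam}(A_n(\bX,\Theta)) \le \sqrt{p}\,\max_{1\le j\le p}\ell_{n,j}$, so since $p$ is fixed it suffices to prove that $\ell_{n,j}\overset{p}{\longrightarrow}0$ for each fixed $j$ and then take a union bound over the finitely many coordinates.

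Next I would count cuts along the path from the root to the leaf containing $\bX$. Let $D_n$ be the number of splits on that path and $N_{n,j}$ the number of those splits performed on coordinate $j$. The random–split part of Assumption \ref{A5} ensures that, conditionally on the past, each node splits coordinate $j$ with probability at least $\delta/p>0$; hence $N_{n,j}$ stochastically dominates a $\mathrm{Binomial}(D_n,\delta/p)$ variable, so that $N_{n,j}\overset{p}{\longrightarrow}\infty$ as soon as $D_n\overset{p}{\longrightarrow}\infty$. To obtain the latter I would use the balance constraint together with $t_n\to\infty$: each split keeps a fraction between $\gamma$ and $1-\gamma$ of the parent observations, so a cell at depth $d$ carries an empirical mass in $[\gamma^{d}a_n,(1-\gamma)^{d}a_n]$, and because the partition is refined until $t_n\to\infty$ leaves are produced while the density of $\bX$ is bounded from above and below (Assumption \ref{A1}), a new point falls with probability tending to one in a leaf whose depth diverges.

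It then remains to convert a cut on coordinate $j$ into a geometric reduction of $\ell_{n,j}$. Here I would combine the balance constraint with the density bounds $c_1\le f_X\le c_2$ of Assumption \ref{A1}: when coordinate $j$ is split, at least a fraction $\gamma$ of the cell's empirical $j$–marginal mass is discarded, and since the conditional density of $X^{(j)}$ on the cell stays between positive constants, a positive fraction of empirical mass corresponds to a positive fraction of side length. This yields a contraction factor $1-\rho$ with $\rho=\rho(\gamma,c_1,c_2)>0$, so that up to concentration corrections $\ell_{n,j}\le (1-\rho)^{N_{n,j}}$, which tends to $0$ because $N_{n,j}\overset{p}{\longrightarrow}\infty$. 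This is exactly the type of argument used by \citet{meinshausen2006quantile} and \citet{wager2018estimation}, and the empirical–to–population control needed to pass from masses to lengths is supplied by the concentration tools of \citet{gyorfi2006distribution}; the divergence of $a_n$ together with the control of the number of cells in Assumption \ref{A6} guarantees enough observations per cell for the empirical masses to concentrate around their population counterparts.

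The main obstacle is precisely this last translation from mass to geometry: the balance condition is stated on the empirical mass of the training points inside a cell, whereas the diameter is a Lebesgue quantity, so one must show, uniformly over all data–dependent cells of the tree, that the empirical $j$–marginal c.d.f.\ is close to its population version (otherwise a cut could remove many points while barely shrinking the cell, or conversely). Controlling this uniform deviation over the family of cells — whose cardinality is governed by $t_n$ — is the delicate step and is where the refined regime of Assumption \ref{A6} enters. By comparison, the reduction to a single coordinate and the counting of per–coordinate splits are routine once $D_n\overset{p}{\longrightarrow}\infty$ has been established.
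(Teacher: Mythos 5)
Your reduction to per-coordinate side lengths, the divergence of the depth via the $\gamma$-balance constraint combined with the two stopping rules (minimum node size and the leaf budget $t_n$), and the divergence of the number of splits on coordinate $j$ via the probability-$\delta/p$ randomization reproduce exactly the first half of the paper's proof (its quantities $s_n(\bX,\Theta)$ and $s_n^{(j)}(\bX,\Theta)$). The two arguments part ways in the final step, and that is where your plan has a genuine gap. You want a per-split geometric contraction, $\ell_{n,j}\le(1-\rho)^{N_{n,j}}$ ``up to concentration corrections'', obtained by converting the empirical balance fraction $\gamma$ at each split into a population mass fraction, and then into a length fraction via $c_1\le f_X\le c_2$. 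But the conversion you need is \emph{multiplicative}: at a cell of depth $d$ you must show that the child's population mass is at least (roughly) $\gamma$ times the parent's, while the parent's empirical mass can be as small as $\gamma^{d}a_n$. Uniform \emph{additive} control over boxes --- which is what Glivenko--Cantelli/VC arguments deliver, with error of order $\sqrt{\log(a_n)/a_n}$ no matter how many cells the tree has --- becomes vacuous as soon as $\gamma^{d}\lesssim\sqrt{\log(a_n)/a_n}$, i.e., precisely at the terminal depths (of order $\log a_n$) that your first step works to reach. So $\ell_{n,j}\le(1-\rho)^{N_{n,j}}$, with $N_{n,j}$ counting all splits down to the terminal leaf, is not justified as stated. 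It is repairable --- side lengths are monotone along the path, so it suffices to harvest contractions only from splits above a truncation depth $d_n$ with $\gamma^{d_n}\gg\sqrt{\log(a_n)/a_n}$, and those levels still contain a diverging number of splits on coordinate $j$ in probability --- but this is the crux, and your plan neither identifies it nor resolves it.

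The difficulty is also misdiagnosed in two respects. First, uniformity over the data-dependent cells is not the issue: cells are axis-aligned boxes, a fixed VC class, so the cardinality of the tree partition (and hence $t_n$) is irrelevant to the uniform deviation; the issue is additive versus relative accuracy, as above. Second, the refined regime of Assumption \ref{A6}, $2^{t_n}(\log a_n)^9/a_n\to0$, plays no role in the paper's proof of this lemma --- it is needed later to control the complexity of the \emph{projected} partitions in Lemma \ref{consistency_cart_proj} via Theorem 10.2 of \citet{gyorfi2006distribution}; here only $a_n\to\infty$, $t_n\to\infty$ and $a_n/n<1-\kappa$ are used. The paper sidesteps your obstacle entirely by never converting per-split fractions into population statements. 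It propagates an \emph{empirical} count bound all the way to the terminal interval, $N_n^{(j)}(\bX,\Theta)\le a_n(1-\gamma)^{s_n^{(j)}(\bX,\Theta)}$, where $N_n^{(j)}$ counts the subsampled points whose $j$-th coordinate lies in $A_n^{(j)}(\bX,\Theta)=[l_n^{(j)},u_n^{(j)}]$. Because this bound is relative to the full subsample size $a_n$ rather than to a parent cell, a single application of the one-dimensional Glivenko--Cantelli theorem to the marginal empirical cdf $F_{a_n}^{(j)}$ suffices to conclude $F^{(j)}(u_n^{(j)})-F^{(j)}(l_n^{(j)})\overset{p}{\longrightarrow}0$, and the marginal density lower bound $f^{(j)}\ge c_1$ then yields $\textrm{diam}(A_n^{(j)}(\bX,\Theta))\overset{p}{\longrightarrow}0$. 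In short: same skeleton, but the paper performs the mass-to-length conversion once, at the root scale, where additive concentration is enough, whereas your scheme performs it at every cell scale, where it is not.
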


The following lemma states that the Projected-CART estimate is consistent.
Recall that $A_n^{(-j)}(\bX^{(-j)}, \Theta)$ is the cell of the projected partition where $\bX^{(-j)}$ falls, $\smash{m_n^{(-j)}(\bX^{(-j)}, \Theta)}$ is the associated projected tree, and $\smash{m_n^{(-j)}(\bX^{(-j)}) = \E[m_n^{(-j)}(\bX^{(-j)}, \Theta)|\Dn, \bX^{(-j)}]}$ is the projected infinite forest estimate.
We also define $m^{(-j)}(\bz) = \E[m(\bX)|\bX^{(-j)} = \bz]$ for $\bz \in [0,1]^{p-1}$.
\begin{lemma*} \label{consistency_cart_proj}
    If Assumptions \ref{A1}, \ref{A5}, and \ref{A6} are satisfied, we have for $j \in \{1,\hdots,p\}$
    \begin{align*}
        \lim \limits_{n \to \infty} \E[(m_n^{(-j)}(\bX^{(-j)}) - m^{(-j)}(\bX^{(-j)}))^2] = 0.
    \end{align*}
\end{lemma*}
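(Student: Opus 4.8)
The plan is to first reduce the infinite-forest statement to a single projected tree, and then to recognise the projected tree as a data-dependent partitioning (local averaging) estimate of the reduced regression function, to which the general consistency machinery of \citet{gyorfi2006distribution} applies. Since $m_n^{(-j)}(\bX^{(-j)}) = \E[m_n^{(-j)}(\bX^{(-j)}, \Theta) \mid \Dn, \bX^{(-j)}]$, conditional Jensen's inequality gives
\begin{align*}
\E\big[\big(m_n^{(-j)}(\bX^{(-j)}) - m^{(-j)}(\bX^{(-j)})\big)^2\big] \leq \E\big[\big(m_n^{(-j)}(\bX^{(-j)}, \Theta) - m^{(-j)}(\bX^{(-j)})\big)^2\big],
\end{align*}
so it suffices to prove the $\mathbb{L}^2$-consistency of the single projected tree. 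Because $\varepsilon$ is centered and independent of $\bX$ under Assumption \ref{A1}, we have $m^{(-j)}(\bX^{(-j)}) = \E[Y \mid \bX^{(-j)}]$, and $m_n^{(-j)}(\cdot, \Theta)$ is exactly the histogram estimate of this reduced regression function associated with the projected partition of $[0,1]^{p-1}$. The proof then rests on two ingredients: that the cells of this partition shrink, and that the number of cells is controlled relative to the subsample size $a_n$.

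For the shrinkage, I would show that the projected cell is contained in the $\bX^{(-j)}$-projection of the original cell. The collection of terminal leaves into which $\bX$ is routed by the Projected-CART branching contains the original leaf $A_n(\bX, \Theta)$, and any $\bz$ belonging to $A_n^{(-j)}(\bX^{(-j)}, \Theta)$ must reach this same collection; in particular $\bz$ must satisfy every split constraint on coordinates other than $j$ along the root-to-leaf path of $\bX$, i.e.\ $\bz \in \mathrm{proj}_{-j}(A_n(\bX, \Theta))$. Hence $A_n^{(-j)}(\bX^{(-j)}, \Theta) \subseteq \mathrm{proj}_{-j}(A_n(\bX, \Theta))$, and therefore $\mathrm{diam}(A_n^{(-j)}(\bX^{(-j)}, \Theta)) \leq \mathrm{diam}(A_n(\bX, \Theta))$. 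By Lemma \ref{lemme_diam}, the right-hand side tends to $0$ in probability, so the projected cells shrink. Combined with the continuity of $m^{(-j)}$ — which itself follows from the continuity of $m$ and the fact that the conditional density of $X^{(j)}$ given $\bX^{(-j)}$ is bounded above and below under Assumption \ref{A1} — this controls the approximation (bias) term: the local average of $Y$ over a shrinking cell converges to $m^{(-j)}(\bX^{(-j)})$.

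For the estimation (variance) term, I would bound the number of cells of the projected partition by $2^{t_n}$, as noted in the discussion preceding Assumption \ref{A6}, and invoke the complexity budget $2^{t_n}(\log a_n)^9 / a_n \to 0$. The family of partitions achievable by a CART with $t_n$ leaves built on $a_n$ points is finite, with cardinality polynomial in $a_n$ since each split picks an axis and one of at most $a_n$ thresholds; the associated covering numbers therefore contribute only polylogarithmic factors, which the $(\log a_n)^9$ term absorbs. Feeding the shrinkage condition and this complexity bound into the consistency theorem for data-dependent partitioning estimates of \citet{gyorfi2006distribution} — using the subgaussian tail of $\varepsilon$ and the boundedness of $m$ to guarantee $\E[Y^2] < \infty$ — yields $\mathbb{L}^2$-consistency of the single tree, and the first display then closes the argument. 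The minimum node-fraction and the $\texttt{mtry} = 1$ randomization of Assumption \ref{A5} enter only through Lemma \ref{lemme_diam}; the empty-cell fallback to the previous tree level does not affect the limit, since it only coarsens a cell by one level while preserving the diameter bound.

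The main obstacle is the estimation-error control, and it is precisely why we depart from the route of \citet{scornet2015consistency}. Because the covariates may be dependent, Assumption \ref{A2} no longer transfers to the projected partition, so we cannot compare the empirical projected tree to a population (theoretical) CART; instead we must argue purely geometrically, which is why the blow-up from $t_n$ to at most $2^{t_n}$ projected cells forces the strengthened regime of Assumption \ref{A6} rather than the milder Assumption \ref{A3}. Verifying that the data-dependence of the partition — which depends both on the labels $Y_i$ through the CART splitting rule and on the discarded coordinate $X^{(j)}$ through the projection step — is tamed by the combinatorial complexity bound is the delicate part of the argument.
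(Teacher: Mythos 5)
Your proposal is correct and takes essentially the same route as the paper's proof: reduction to a single projected tree (via Jensen), approximation-error control through the containment of $A_n^{(-j)}(\bX^{(-j)}, \Theta)$ in the projection of $A_n(\bX, \Theta)$ combined with Lemma \ref{lemme_diam}, estimation-error control through the $2^{t_n}$ bound on the projected partition and the complexity budget of Assumption \ref{A6}, and finally the data-dependent partitioning consistency theorem (Theorem 10.2) of \citet{gyorfi2006distribution} with the truncation handled by the sub-Gaussian noise. Even your slightly informal treatment of the continuity of $m^{(-j)}$ mirrors the paper, which makes the same gloss when asserting $\Delta(m, A_n^{(-j)}(\bX^{(-j)}, \Theta)) \overset{p}{\longrightarrow} 0$ from the continuity of $m$ alone.
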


\begin{lemma*} \label{lemme_sobol_oob_consistency}
    If Assumptions \ref{A1}, \ref{A5}, and \ref{A6} are satisfied, for all $i \in \{1,\hdots,n\}$, $j \in \{1,\hdots,p\}$, and $M \in \mathbb{N}^{\star}$ we have
    \begin{align*}
    \lim \limits_{n \to \infty} \E[(m_{M,n}^{(-j,OOB)}(\bX_i^{(-j)}, \bTheta_{M}) - m(\bX_i^{(-j)}))^2\mathds{1}_{|\Lambda_{n,i}| > 0} ] = 0.
    \end{align*}
\end{lemma*}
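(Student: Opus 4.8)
The plan is to mirror the argument developed for the standard out-of-bag estimate in Lemmas \ref{lemma_oob_risk} and \ref{lemme_oob_consistency}, since the projected out-of-bag forest $m_{M,n}^{(-j,OOB)}(\bX_i^{(-j)},\bTheta_{M})$ has exactly the same aggregation structure as $m_{M,n}^{(OOB)}(\bX_i,\bTheta_{M})$, with the projected tree $m_n^{(-j)}$ in place of the standard tree $m_n$ and the target $m^{(-j)}(\bX_i^{(-j)}) = \E[m(\bX)\mid \bX^{(-j)} = \bX_i^{(-j)}]$ in place of $m(\bX_i)$. (I read the target in the statement as $m^{(-j)}(\bX_i^{(-j)})$, the quantity the Projected-CART is designed to estimate, since $m$ itself takes a $p$-dimensional argument.) Throughout I keep the subsample size $a_n$ implicit, as in the body of the paper.

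First I would transcribe the computation in the proof of Lemma \ref{lemma_oob_risk}, substituting $m_n(\bX_i,\Theta_\ell) \leftarrow m_n^{(-j)}(\bX_i^{(-j)},\Theta_\ell)$ and $m(\bX_i) \leftarrow m^{(-j)}(\bX_i^{(-j)})$. The two structural facts that the proof relies on survive the substitution: (i) conditionally on $\{\ell,\ell'\in\Lambda_{n,i}\}$ the random set $\Lambda_{n,i}$ is independent of the tree errors, since it depends only on the resampling components $\Theta_k^{(S)}$ for $k\neq\ell,\ell'$; and (ii) conditionally on $\{i\notin\Theta_\ell^{(S)}\}$ the projected tree $m_n^{(-j)}(\bX_i^{(-j)},\Theta_\ell)$ coincides with a projected CART grown on $\Dn\setminus\{(\bX_i,Y_i)\}$ at subsample size $a_n$, so by identical distribution and independence of $\bX_i^{(-j)}$ from that reduced sample I may replace $\bX_i^{(-j)}$ by $\bX^{(-j)}$. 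The coefficients $\delta_{M,n}$ and $\gamma_{M,n}$ of Lemma \ref{lemma_tech} are unchanged, as they only involve $|\Lambda_{n,i}|$. This yields the projected analogue of Lemma \ref{lemma_oob_risk},
\begin{align*}
&\E\big[\big(m_{M,n}^{(-j,OOB)}(\bX_i^{(-j)},\bTheta_{M}) - m^{(-j)}(\bX_i^{(-j)})\big)^2\mathds{1}_{|\Lambda_{n,i}|>0}\big] \\
&\qquad \leq \frac{2}{1-a_n/n}\,\E\big[\big(m_{M,n-1}^{(-j)}(\bX^{(-j)},\bTheta_{M}) - m^{(-j)}(\bX^{(-j)})\big)^2\big].
\end{align*}

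Next I would control the right-hand side with the finite-forest risk decomposition (\ref{risk_forest_cart}) applied to the projected estimate,
\begin{align*}
&\E\big[\big(m_{M,n-1}^{(-j)}(\bX^{(-j)},\bTheta_{M}) - m^{(-j)}(\bX^{(-j)})\big)^2\big] \\
&\qquad = \frac{1}{M}\E\big[\big(m_{n-1}^{(-j)}(\bX^{(-j)},\Theta) - m^{(-j)}(\bX^{(-j)})\big)^2\big] + \Big(1-\frac{1}{M}\Big)\E\big[\big(m_{n-1}^{(-j)}(\bX^{(-j)}) - m^{(-j)}(\bX^{(-j)})\big)^2\big].
\end{align*}
The second expectation vanishes as $n\to\infty$ by Lemma \ref{consistency_cart_proj} (relabelling $n-1$ for $n$ is harmless, since the regime of Assumption \ref{A6} is preserved). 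For the first expectation I need the consistency of a single projected CART, which follows from the same ingredients that underlie Lemma \ref{consistency_cart_proj}: the cell diameters vanish in probability by Lemma \ref{lemme_diam} and the projected-partition complexity is controlled by the $2^{t_n}$ regime of Assumption \ref{A6}, so the local-averaging consistency argument applies tree-by-tree, exactly as the single-CART statement (\ref{eq_X_CART}) was deduced from the infinite-forest statement (\ref{eq_X}) in the unprojected case. Since $1-a_n/n>\kappa>0$ by Assumption \ref{A6}, the prefactor $2/(1-a_n/n)$ is bounded, and combining the three displays gives the claimed limit.

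The proof is thus essentially a transcription whose only genuinely new input is Lemma \ref{consistency_cart_proj}; the point demanding the most care is fact (ii), namely that conditioning on $\{i\notin\Theta_\ell^{(S)}\}$ reduces the data-dependent projected tree — whose projected partition is recomputed from the training sample — to a projected CART grown on the reduced sample of size $n-1$ and evaluated at the independent coordinate $\bX_i^{(-j)}$. I expect this to be the main obstacle to state cleanly, but it transfers without modification from Lemma \ref{lemma_oob_risk}, because the projection is a deterministic post-processing of the already-grown tree and so inherits the same conditional-independence structure.
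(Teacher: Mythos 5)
Your proposal is correct and follows essentially the same route as the paper's proof: transfer Lemma \ref{lemma_oob_risk} verbatim to the projected estimates to get the bound $\frac{2}{1-a_n/n}\,\E\big[\big(m_{M,a_n,n-1}^{(-j)}(\bX^{(-j)},\bTheta_{M})-m^{(-j)}(\bX^{(-j)})\big)^2\big]$, then invoke Lemma \ref{consistency_cart_proj} and the bound $a_n/n<1-\kappa$ from Assumption \ref{A6}. Your reading of the target as $m^{(-j)}(\bX_i^{(-j)})$ is the intended one (the paper's $m(\bX_i^{(-j)})$ is a notational abuse), and your only elaboration beyond the paper --- spelling out the decomposition \eqref{risk_forest_cart} into single-tree and infinite-forest terms, where the paper simply asserts that infinite projected forest consistency ``also implies'' finite projected forest consistency --- is a more careful rendering of the same step, not a different argument.
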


\begin{proof}[Proof of Theorem \ref{thm_MDA_sobol}]
We assume that Assumptions \ref{A1}, \ref{A5}, and \ref{A6} are satisfied and consider $j \in\{1,\hdots,p\}$.
We can exactly follow the proof of Theorem \ref{thm_MDA}-(iii) by only replacing $\E[m(\Xperm)|\bX^{(-j)}]$ by $\E[m(\bX)|\bX^{(-j)}]$ in the main decomposition, and get the $\mathbb{L}^1$-consistency of the unnormalized Sobol-MDA using Lemmas \ref{lemme_oob_consistency} and \ref{lemme_sobol_oob_consistency}. Finally, the Sobol-MDA is normalized by the standard variance estimate $\hat{\sigma}_Y$ of the output $Y$, which is consistent by the Law of Large Numbers. Next, according to the continuous mapping theorem $1/\hat{\sigma}_Y \overset{p}{\longrightarrow} 1/\V[Y]$. 
Overall, the Sobol-MDA is the product of two random quantities which convergence in probability, and we have
\begin{align*}
    \widehat{\textrm{S-MDA}}_{M,n}(X^{(j)}) \overset{p}{\longrightarrow} ST^{(j)}.
\end{align*}
\end{proof}

The brute force approach of retraining the forest with the data $\smash{\Dn^{(-j)}}$, where covariate $\smash{X^{(j)}}$ is removed, also estimates the total Sobol index, as proved in Theorem \ref{thm_MDA_retrain} below. The associated forest estimate is then denoted by $\smash{m_{M,n}(X^{(-j)}, \Dn^{(-j)})}$.
\begin{theorem*} \label{thm_MDA_retrain}
    If Assumptions \ref{A1}, \ref{A5}, and \ref{A6} are satisfied, for all $M \in \mathbb{N}^{\star}$ and $j \in \{1,\hdots,p\}$
    \begin{align*}
        \frac{\E[(Y - m_{M,n}(X^{(-j)}, \Dn^{(-j)}))^2]}{\V[Y]}& - \frac{\E[(Y - m_{M,n}(X))^2]}{\V[Y]}  \longrightarrow ST^{(j)}.
    \end{align*}
\end{theorem*}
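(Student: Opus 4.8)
The plan is to recognize that $m_{M,n}(X^{(-j)}, \Dn^{(-j)})$ is simply a standard random forest fit to the reduced data $\Dn^{(-j)} = \{(X_i^{(-j)}, Y_i)\}_{i=1}^n$, whose target regression function is $\E[Y \mid X^{(-j)}] = \E[m(X)\mid X^{(-j)}] = m^{(-j)}(X^{(-j)})$, because the noise $\varepsilon$ is centered and independent of $X$. I would therefore first establish that this retrained forest is $\mathbb{L}^2$-consistent for $m^{(-j)}$, then expand the two mean squared errors, kill the cross terms by orthogonality, and finally read off the total Sobol index.

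For the original forest, using $Y = m(X) + \varepsilon$ with $\varepsilon$ centered and independent of both the test point and the training sample, the cross term $2\E[\varepsilon(m(X) - m_{M,n}(X))]$ vanishes, so that
\begin{align*}
\E[(Y - m_{M,n}(X))^2] = \E[(m(X) - m_{M,n}(X))^2] + \V[\varepsilon] \longrightarrow \V[\varepsilon],
\end{align*}
the convergence coming from consistency of the standard forest under Assumptions \ref{A1}, \ref{A5}, and \ref{A6}, which follows from Lemma \ref{lemme_diam} exactly as for the projected forest in Lemma \ref{consistency_cart_proj}. For the retrained forest, writing $m(X) = m^{(-j)}(X^{(-j)}) + B$ with $B = m(X) - m^{(-j)}(X^{(-j)})$, the three quantities $m^{(-j)}(X^{(-j)}) - m_{M,n}(X^{(-j)}, \Dn^{(-j)})$, $B$, and $\varepsilon$ are mutually orthogonal: $\E[B\mid X^{(-j)}] = 0$ by definition of $m^{(-j)}$ and $B$ depends only on the test point hence is, given $X^{(-j)}$, independent of $\Dn^{(-j)}$, while $\varepsilon$ is centered and independent of everything else. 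Expanding the square then gives
\begin{align*}
\E[(Y - m_{M,n}(X^{(-j)}, \Dn^{(-j)}))^2] = \E[(m^{(-j)}(X^{(-j)}) - m_{M,n}(X^{(-j)}, \Dn^{(-j)}))^2] + \E[\V(m(X) \mid X^{(-j)})] + \V[\varepsilon].
\end{align*}

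The main obstacle is the first term on the right, namely the $\mathbb{L}^2$-consistency of the retrained forest for $m^{(-j)}$. I would argue it as follows. First, the marginal density of $X^{(-j)}$ is obtained by integrating $f_X$ over the $j$-th coordinate, hence remains bounded from above and below by positive constants, so Assumption \ref{A1} transfers to the reduced problem. Secondly, the split constraint and the \texttt{mtry}-randomization of Assumption \ref{A5} apply verbatim to the reduced forest, and since a retrained tree has at most $t_n$ leaves, Assumption \ref{A6} (which controls $2^{t_n}$, a fortiori $t_n$) places it in the consistency regime. The argument of Lemma \ref{lemme_diam} then shows that the cells of the retrained trees have diameters tending to $0$ in probability, and since $m^{(-j)}$ is bounded (as $m$ is continuous on a compact), a standard local-averaging consistency argument à la \citet{gyorfi2006distribution} yields $\E[(m^{(-j)}(X^{(-j)}) - m_{M,n}(X^{(-j)}, \Dn^{(-j)}))^2] \to 0$. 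The delicate point is precisely that one cannot invoke Assumption \ref{A2} here, because $m^{(-j)}$ need not be continuous and the theoretical CART built on $X^{(-j)}$ need not be consistent; the shrinking-cell route of Assumption \ref{A5} is what saves the argument, which is exactly why the statement is phrased under \ref{A5}--\ref{A6} rather than \ref{A2}--\ref{A3}.

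Combining the two displays, the difference of the (normalized) errors converges to $\E[\V(m(X)\mid X^{(-j)})]/\V[Y]$, which is by definition the total Sobol index $ST^{(j)}$. Note that $\V[Y]$ here is the true variance, a fixed constant, so no estimation step and no continuous-mapping argument are needed, in contrast with the empirical Sobol-MDA of Theorem \ref{thm_MDA_sobol}.
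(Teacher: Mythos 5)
Your proposal is correct and follows essentially the same route as the paper's proof: establish $\mathbb{L}^2$-consistency of the forest under Assumptions \ref{A1}, \ref{A5}, and \ref{A6} (a result valid in any input dimension, hence for the forest retrained on $\Dn^{(-j)}$), expand the two mean squared errors, eliminate the cross terms, and identify $\E[\V(m(X)\mid X^{(-j)})] = \V[Y]\times ST^{(j)}$. The one point where your argument genuinely differs is the cross term $\E[(m(X)-m^{(-j)}(X^{(-j)}))(m^{(-j)}(X^{(-j)})-m_{M,n}(X^{(-j)},\Dn^{(-j)}))]$: you make it vanish \emph{exactly}, by conditioning on $(X^{(-j)},\Dn,\bTheta_{(M)})$ and using that the test point is independent of the training sample, so that $\E[m(X)-m^{(-j)}(X^{(-j)})\mid X^{(-j)},\Dn]=0$; the paper instead shows it is only asymptotically negligible, bounding it by $2\|m\|_{\infty}$ times an $\mathbb{L}^1$ estimation error. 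Your version yields an exact, non-asymptotic error decomposition, which is slightly cleaner. You are also more explicit than the paper about why consistency transfers to the reduced problem (the marginal density of $X^{(-j)}$ remains bounded above and below, Assumptions \ref{A5}--\ref{A6} apply verbatim, and the shrinking-cell argument of Lemma \ref{lemme_diam} replaces Assumption \ref{A2}, which is indeed unavailable since $m^{(-j)}$ need not be continuous), where the paper compresses this into the remark that its consistency result ``is valid for any dimension of the input vector.'' The one step you leave informal---verifying the approximation-error condition of Theorem 10.2 in \citet{gyorfi2006distribution} for a possibly discontinuous target $m^{(-j)}$, which requires first approximating $m^{(-j)}$ in $\mathbb{L}^2$ by a continuous function before exploiting the shrinking cells---is glossed over at the same level of detail in the paper itself, so this is not a gap relative to the published argument.
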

\begin{proof}[Proof of Theorem \ref{thm_MDA_retrain}]
We first need to show the $\mathbb{L}^2$-consistency of random forests under Assumptions $1$, $5$, and $6$.
Under these assumptions, Lemma \ref{lemme_diam} gives that
\begin{align*}
    \lim \limits_{n \to \infty} \textrm{diam}(A_n(\bX, \Theta)) = 0.
\end{align*}
Then, we follow the proof of Lemma \ref{consistency_cart_proj} to get the $\mathbb{L}^2$-consistency of random forests under Assumptions $1$, $5$, and $6$. Next, we break down the following quantity,
\begin{align*}
    \E[(Y - m_{M,n}(\bX))^2] &= 
    \E[(m(\bX) - m_{M,n}(\bX) + \varepsilon)^2] \\ 
    &= \E[(m(\bX) - m_{M,n}(\bX))^2] + \sigma^2
    + 2\E[\varepsilon(m(\bX) - m_{M,n}(\bX))],
\end{align*}
where the last term is null because $\varepsilon$ is centered and independent from $\Dn$, $\bTheta_M$, and $\bX$ by construction, and $\sigma^2 = \V[\varepsilon]$. Finally, since the forest estimate is $\mathbb{L}^2$-consistent, the first term converges towards zero, and we get
\begin{align*}
    \E[(Y - m_{M,n}(\bX))^2] \rightarrow \sigma^2.
\end{align*}
For the second term, we can write
\begin{align*}
    \E[(Y - m_{M,n}(&\bX^{(-j)}, \Dn^{(-j)}))^2] \\
    =& \E[(m(\bX) - m^{(-j)}(\bX^{(-j)}) + (m^{(-j)}(\bX^{(-j)}) - m_{M,n}(\bX^{(-j)}, \Dn^{(-j)})) + \varepsilon)^2] \\ 
    =& \E[(m(\bX) - m^{(-j)}(\bX^{(-j)}))^2] + \sigma^2
    + 2\E[\varepsilon(m(\bX) - m^{(-j)}(\bX^{(-j)}))] \\
    &+ \E[(m^{(-j)}(\bX^{(-j)}) - m_{M,n}(\bX^{(-j)}, \Dn^{(-j)}))^2] \\
    &+ 2\E[\varepsilon(m^{(-j)}(\bX^{(-j)}) - m_{M,n}(\bX^{(-j)}, \Dn^{(-j)}))] \\
    &+ 2\E[(m(\bX) - m^{(-j)}(\bX^{(-j)}))(m^{(-j)}(\bX^{(-j)}) - m_{M,n}(\bX^{(-j)}, \Dn^{(-j)}))].
\end{align*}
As above, the third and fifth terms are null because $\varepsilon$ is independent of the other random variables involved.
The forest estimate is $\mathbb{L}^2$-consistent, and the result is valid for any dimension of the input vector, and in particular for $p-1$ when $X^{(j)}$ is removed from the data. Therefore, the fourth term is null asymptotically. It is also the case for the last term, since $m$ is bounded (continuous on a compact) and $\mathbb{L}^2$-convergence implies $\mathbb{L}^1$-convergence.
Overall, we get that
\begin{align*}
    \E[(Y - m_{M,n}(\bX^{(-j)},& \Dn^{(-j)}))^2] 
    \rightarrow \E[(m(\bX) - m^{(-j)}(\bX^{(-j)}) )^2] + \sigma^2.
\end{align*}
We can rewrite 
\begin{align*}
    \E[(m(\bX) - m^{(-j)}(\bX^{(-j)}) )^2]
    &= \E[(m(\bX) - \E[m(\bX) \mid \bX^{(-j)}])^2] \\
    &= \E[\V[m(\bX) \mid \bX^{(-j)}]] \\
    &= \V[Y] \times ST^{(j)}.
\end{align*}
Finally, we have
\begin{align*}
    \frac{\E[(Y - m_{M,n}(\bX^{(-j)}, \Dn^{(-j)}))^2]}{\V[Y]}& - \frac{\E[(Y - m_{M,n}(\bX))^2]}{\V[Y]}  \longrightarrow ST^{(j)}.
\end{align*}
\end{proof}

\begin{proof}[Proof of Lemma \ref{lemme_diam}]
    The proof is inspired by Lemma $2$ from \citet{meinshausen2006quantile}.
    We define $s_n(\bX, \Theta)$ as the number of splits to reach the terminal cell $A_n(\bX, \Theta)$ where $\bX$ falls.
    The asymptotic regime of the tree growing is controlled by Assumption \ref{A6} by setting the number of terminal leaves to $t_n$.
    Since $A_n(\bX, \Theta)$ is a terminal leave, there are two possible cases: further splitting $A_n(\bX, \Theta)$ will necessarily lead to cells with a number of observations smaller than the algorithm parameter \texttt{minimum node size}, that we call $N_{min}$, and is typically equal to $5$ in practice. Formally, it means that 
    \begin{align} \label{eq_terminal_1}
        N_n(\bX, \Theta) < 2 N_{min},
    \end{align}
    where $N_n(\bX, \Theta)$ is the number of observations in $A_n(\bX, \Theta)$. The other possibility is that the total number of leaves $t_n$ is reached, which implies that
    \begin{align*}
        2^{s_n(\bX, \Theta)} \geq t_n,
    \end{align*}
    the equality case happening if the tree is balanced.
    Next, according to Assumption \ref{A5}, all children nodes have at least a fraction $0.5 > \gamma > 0$ of the parent node observations. Then we have $a_n\gamma^{s_n(\bX,\Theta)} \leq N_n(\bX, \Theta)$. Combining this last inequality with (\ref{eq_terminal_1}), we obtain $a_n\gamma^{s_n(\bX,\Theta)} < 2 N_{min}$. 
    Overall, at least one of the two following inequalities is satisfied
    \begin{align*}
        &s_n(\bX, \Theta) \geq \log_2(t_n) \\
        &s_n(\bX,\Theta) > \frac{\log_2(a_n/2 N_{min})}{\log_2(1/\gamma)}.
    \end{align*}
    From Assumption \ref{A6}, $a_n \rightarrow \infty$ and $t_n \rightarrow \infty$. Therefore, we can conclude that
    \begin{align} \label{inf_splits}
        s_n(\bX, \Theta) \overset{p}{\longrightarrow} \infty.
    \end{align}
    
    Now, we fix $j \in \{1,\hdots,p\}$, and define $s_n^{(j)}(\bX, \Theta)$ as the number of splits involving the $j$-th variable in the path to $A_n(\bX, \Theta)$. According to Assumption \ref{A5}, variable $j$ can be selected at each node with probability at least $\delta/p$. Combined with result (\ref{inf_splits}), we consequently have
    \begin{align} \label{inf_splits_j}
        s_n^{(j)}(\bX, \Theta) \overset{p}{\longrightarrow} \infty.
    \end{align}
    
    Next, we break down the cell $A_n(\bX, \Theta)$ with a collection of intervals for each of the $p$ directions: 
    \begin{align*}
        A_n(\bX, \Theta) = \bigotimes_{j=1}^{p} A_n^{(j)}(\bX, \Theta),
    \end{align*}
    where each $A_n^{(j)}(\bX, \Theta)$ is an interval and can be written as $A_n^{(j)}(\bX, \Theta) = [l_n^{(j)}(\bX, \Theta), u_n^{(j)}(\bX, \Theta)]$.
    Then, we can bound from above the number $N_n^{(j)}(\bX, \Theta)$ of observations whose $j$-th coordinate belongs to $A_n^{(j)}(\bX, \Theta)$ using Assumption \ref{A2},
    \begin{align*}
        N_n^{(j)}(\bX, \Theta) \leq a_n(1-\gamma)^{s_n^{(j)}(\bX,\Theta)},
    \end{align*}
    and using (\ref{inf_splits_j}), we get that 
    \begin{align*}
        N_n^{(j)}(\bX, \Theta)/a_n \overset{p}{\longrightarrow} 0.
    \end{align*}
    Next, we introduce $F_{a_n}^{(j)}$ the empirical cdf of $X^{(j)}$, estimated with the $\Theta^{(S)}$-subsample of $\Dn$. Similarly, $F^{(j)}$ denotes the cdf of $X^{(j)}$.
    By definition, we have
    \begin{align} \label{cdf_lim}
        N_n^{(j)}(\bX, \Theta)/a_n = F_{a_n}^{(j)}(u_n^{(j)}(\bX, \Theta)) - F_{a_n}^{(j)}(l_n^{(j)}(\bX, \Theta))\overset{p}{\longrightarrow} 0.
    \end{align}
    On the other hand, we can write
    \begin{align*}
        F^{(j)}(u_n^{(j)}(\bX, \Theta)) - F^{(j)}(l_n^{(j)}(\bX, \Theta)) = &
        F_{a_n}^{(j)}(u_n^{(j)}(\bX, \Theta)) - F_{a_n}^{(j)}(l_n^{(j)}(\bX, \Theta)) \\ &- [F_{a_n}^{(j)}(u_n^{(j)}(\bX, \Theta)) - F^{(j)}(u_n^{(j)}(\bX, \Theta))]  \\ &+ [F_{a_n}^{(j)}(l_n^{(j)}(\bX, \Theta)) - F^{(j)}(l_n^{(j)}(\bX, \Theta))],
    \end{align*}
    and we get the following bound
    \begin{align*}
        F^{(j)}(u_n^{(j)}(\bX, \Theta)) - F^{(j)}(l_n^{(j)}(\bX, \Theta)) \leq
        F_{a_n}^{(j)}&(u_n^{(j)}(\bX, \Theta)) - F_{a_n}^{(j)}(l_n^{(j)}(\bX, \Theta)) \\ &+ 2 \sup_{z \in [0,1]} |F_{a_n}^{(j)}(z) - F^{(j)}(z)|.
    \end{align*}
    The Glivenko-Cantelli Theorem gives that 
    \begin{align*}
        \sup_{z \in [0,1]} |F_{a_n}^{(j)}(z) - F^{(j)}(z)| \overset{p}{\longrightarrow} 0,
    \end{align*}
    and combined with (\ref{cdf_lim}), we obtain
    \begin{align} \label{cdf_lim_th}
        F^{(j)}(u_n^{(j)}(\bX, \Theta)) - F^{(j)}(l_n^{(j)}(\bX, \Theta)) \overset{p}{\longrightarrow} 0.
    \end{align}
    Finally, using the integral form of the difference above, we have
    \begin{align*}
        F^{(j)}(u_n^{(j)}(\bX, \Theta)) - F^{(j)}(l_n^{(j)}(\bX, \Theta)) = \int_{A_n^{(j)}(\bX, \Theta)} f^{(j)}(x)dx,
    \end{align*}
    and since $f^{(j)}$ is lower bounded by $c_1$ according to Assumption \ref{A1},
    \begin{align*}
        F^{(j)}(u_n^{(j)}(\bX, \Theta)) - F^{(j)}(l_n^{(j)}(\bX, \Theta)) \geq c_1 \textrm{diam}(A_n^{(j)}(\bX, \Theta)).
    \end{align*}
    This last inequality combined with limit (\ref{cdf_lim_th}) gives
    \begin{align*}
        \textrm{diam}(A_n^{(j)}(\bX, \Theta)) \overset{p}{\longrightarrow} 0,
    \end{align*}
    and since this is true for each direction $j = 1,\hdots,p$, the final result follows. Then, we have in probability
     \begin{align*}
        \lim \limits_{n \to \infty} \textrm{diam}(A_n(\bX, \Theta)) = 0.
    \end{align*}
\end{proof}

The proof of Lemma \ref{consistency_cart_proj} is based on Theorem 10.2 from \citet{gyorfi2006distribution} and Theorem 1 from \citet{scornet2015consistency}. First, we introduce several notations following \citet{scornet2015consistency}.
The partition of $[0,1]^{p-1}$ obtained with the $\Theta$-random tree projected along the $j$-th direction is denoted by $\mathcal{P}^{(-j)}_n(\Dn,\Theta)$. We define the family of all achievable partitions with $\Theta$ as
\begin{align*}
    \Pi_n^{(-j)}(\Theta) = \{\mathcal{P}^{(-j)}((\bx_1,y_1),\hdots,(\bx_n,y_n), \Theta):(\bx_i,y_i)\in[0,1]^{p-1}\times\R\},
\end{align*}
and the associated maximal number $M(\Pi_n^{(-j)}(\Theta))$ of terminal nodes among all partitions in $\Pi_n^{(-j)}(\Theta)$ is
\begin{align*}
    M(\Pi_n^{(-j)}(\Theta)) = \max\{|\mathcal{P}|:\mathcal{P}\in\Pi_n^{(-j)}(\Theta)\}.
\end{align*}
Next, we consider $\bz_1, \hdots, \bz_n \in [0,1]^{p-1}$ and denotes $\Gamma(\bz_1, \hdots, \bz_n, \Pi_n^{(-j)}(\Theta))$ the number of distinct partitions of $\bz_1, \hdots, \bz_n$ induced by the elements of $\Pi_n^{(-j)}(\Theta)$. Then, the partitioning number $\Gamma(\Pi_n^{(-j)}(\Theta))$ is defined as
\begin{align*}
    \Gamma(\Pi_n^{(-j)}(\Theta)) = \max \{ \Gamma(\bz_1, \hdots, \bz_n, \Pi_n^{(-j)}(\Theta)) : \bz_1, \hdots, \bz_n \in [0,1]^{p-1} \}.
\end{align*}

We define the truncated operator $T_L$ for $L > 0$. Thus, the truncated tree estimate $T_{L}m_n^{(-j)}(\bX^{(-j)}, \Theta)$ returns the constant $L$ whenever $|m_n^{(-j)}(\bX^{(-j)}, \Theta)| > L$.
Finally, we define $\mathcal{F}_n^{(-j)}(\Theta)$ the set of piecewise constant functions over the partition $\mathcal{P}^{(-j)}_n(\Dn,\Theta)$.
Then, the projected tree estimate $m_n^{(-j)}(\bX^{(-j)}, \Theta)$ is defined as the element of $\mathcal{F}_n^{(-j)}(\Theta)$ which minimizes the quadratic risk.

For the sake of clarity, we recall Theorem 10.2 from \citet{gyorfi2006distribution}, as presented in \citet{scornet2015consistency} in the case of random forests.
\begin{theorem*}[Theorem 10.2 in \citet{gyorfi2006distribution}] \label{thm_102}
    Assume that
    \begin{align*}
        (i)&\lim_{n \to \infty} \beta_n = \infty, \\
        (ii)&\lim_{n \to \infty} \E\big[\inf_{f \in \mathcal{F}^{(-j)}_n(\Theta), ||f||_{\infty} \leq \beta_n} \E[ (f(\bX^{(-j)}) - m^{(-j)}(\bX^{(-j)}))^2 ] \big] = 0, \\
        (iii)& \textrm{ for all } L > 0,\\
        & \lim_{n \to \infty} \E\Big[\sup_{\begin{tabular}{c}
             \small{$f \in \mathcal{F}_n^{(-j)}(\Theta),$} \\ \small{$||f||_{\infty} \leq \beta_n$} \end{tabular}} \Big|\frac{1}{a_n} \sum_{i \in \Theta^{(S)}} [f(\bX_i^{(-j)}) - Y_{i,L}]^2 - \E[(f(\bX^{(-j)}) - Y_{L})^2 ] \Big| \Big] = 0.
    \end{align*}
    Then, we have
    \begin{align*}
        \lim \limits_{n \to \infty} \E[(T_{\beta_n} m_n^{(-j)}(\bX^{(-j)}) - m^{(-j)}(\bX^{(-j)}))^2] = 0.
    \end{align*}
\end{theorem*}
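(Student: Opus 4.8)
The plan is to prove this as the classical $\mathbb{L}^2$-consistency result for the least-squares estimate $m_n^{(-j)}$, which by construction is the element of $\mathcal{F}_n^{(-j)}(\Theta)$ minimizing the empirical risk $\frac{1}{a_n}\sum_{i\in\Theta^{(S)}}(f(\bX_i^{(-j)})-Y_i)^2$. I would first reduce the statement to a bound on an expected excess risk. Since $\varepsilon$ is centered and independent of $\bX$, the projected target $m^{(-j)}(\bX^{(-j)})=\E[Y\mid\bX^{(-j)}]$ is exactly the regression function of $Y$ on $\bX^{(-j)}$; moreover $m$ is bounded (continuous on the compact $[0,1]^p$) and $\varepsilon$ is sub-Gaussian, so $Y$ is square-integrable with $\E[Y^2\mathds{1}_{|Y|>L}]\to0$ as $L\to\infty$. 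Using the orthogonality identity $\E[(f(\bX^{(-j)})-Y)^2]-\E[(m^{(-j)}(\bX^{(-j)})-Y)^2]=\E[(f(\bX^{(-j)})-m^{(-j)}(\bX^{(-j)}))^2]$, valid for every $f$, the quantity to control equals the expected excess risk of the truncated estimate $T_{\beta_n}m_n^{(-j)}$, and it suffices to show this excess risk tends to $0$.

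For a fixed response-truncation level $L>0$, set $Y_L=T_LY$ and $m_L^{(-j)}(\cdot)=\E[Y_L\mid\bX^{(-j)}=\cdot]$, and bound $\E[(T_{\beta_n}m_n^{(-j)}(\bX^{(-j)})-m^{(-j)}(\bX^{(-j)}))^2]\le 2\E[(T_{\beta_n}m_n^{(-j)}(\bX^{(-j)})-m_L^{(-j)}(\bX^{(-j)}))^2]+2\E[(m_L^{(-j)}(\bX^{(-j)})-m^{(-j)}(\bX^{(-j)}))^2]$. The second term is $\le 2\,\E[(Y_L-Y)^2]$ by conditional Jensen, hence vanishes as $L\to\infty$ uniformly in $n$. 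For the first term I would work entirely in the truncated problem (whose regression function is $m_L^{(-j)}$) and write its excess risk as a telescoping sum obtained by inserting, in order, the empirical truncated risk $\frac{1}{a_n}\sum(T_{\beta_n}m_n^{(-j)}(\bX_i^{(-j)})-Y_{i,L})^2$, the empirical risk of $m_n^{(-j)}$ itself, and the empirical and true risks of the best approximant $f^\star\in\mathcal{F}_n^{(-j)}(\Theta)$ with $\|f^\star\|_\infty\le\beta_n$ attaining the infimum in hypothesis $(ii)$.

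In this telescope, the terms exchanging empirical for true (truncated) risks over the class are dominated by the uniform deviation $\E\big[\sup_{f\in\mathcal{F}_n^{(-j)}(\Theta),\,\|f\|_\infty\le\beta_n}\big|\frac{1}{a_n}\sum_{i\in\Theta^{(S)}}(f(\bX_i^{(-j)})-Y_{i,L})^2-\E[(f(\bX^{(-j)})-Y_L)^2]\big|\big]$, which goes to $0$ for each fixed $L$ by hypothesis $(iii)$. The comparison between $T_{\beta_n}m_n^{(-j)}$ and $m_n^{(-j)}$ inside the empirical truncated risk is handled by the contraction property $(T_{\beta_n}a-b)^2\le(a-b)^2$ whenever $|b|\le\beta_n$, which applies to $b=Y_{i,L}$ as soon as $L\le\beta_n$ (true eventually by $(i)$); combined with the empirical-minimization property of $m_n^{(-j)}$, this makes the estimate-versus-approximant comparison nonpositive up to a response-truncation residual. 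The true risk of $f^\star$ minus that of $m_L^{(-j)}$ equals $\E[(f^\star(\bX^{(-j)})-m_L^{(-j)}(\bX^{(-j)}))^2]\le 2\E[(f^\star(\bX^{(-j)})-m^{(-j)}(\bX^{(-j)}))^2]+2\E[(Y_L-Y)^2]$, whose first part vanishes with $n$ by $(ii)$ and whose second part vanishes with $L$. Assembling everything gives, for each fixed $L$, $\limsup_{n\to\infty}\E[(T_{\beta_n}m_n^{(-j)}(\bX^{(-j)})-m^{(-j)}(\bX^{(-j)}))^2]\le\phi(L)$ with $\phi(L)\to0$ as $L\to\infty$; since the left-hand side does not depend on $L$, letting $L\to\infty$ yields the result.

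The main obstacle is the bookkeeping of the two truncation levels, $\beta_n$ for the estimate and $L$ for the response, together with the order of the resulting double limit. Hypothesis $(iii)$ only controls fluctuations for the truncated response $Y_L$, so every occurrence of the raw $Y$ must be converted to $Y_L$ while ensuring the conversion error does not inflate as $\beta_n\to\infty$; this is precisely why one truncates the estimate and exploits the contraction inequality, so the $\beta_n$-bounded function $T_{\beta_n}m_n^{(-j)}$ can be compared to the unbounded minimizer $m_n^{(-j)}$ without picking up a factor $\beta_n$, and why the residual terms must be shown to be bounded by $\E[Y^2\mathds{1}_{|Y|>L}]$-type quantities that are uniform in $n$ and controlled only by sub-Gaussianity of $Y$. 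Making these residuals genuinely free of $n$ and $\beta_n$, so that they vanish solely through $L\to\infty$, is the delicate heart of the argument; the approximation input $(ii)$ and the estimation input $(iii)$ are then plugged in routinely.
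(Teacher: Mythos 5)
You should first know that the paper contains no proof of this statement: it is imported verbatim from \citet{gyorfi2006distribution} (Theorem 10.2), in the form adapted by \citet{scornet2015consistency} to data-dependent tree partitions, and is used purely as a black box in the proof of Lemma~\ref{consistency_cart_proj}. So the relevant comparison is with the textbook proof, and your sketch reconstructs its architecture faithfully: the orthogonality identity reducing the claim to an excess risk (valid here since $m^{(-j)}(\bX^{(-j)})=\E[Y\mid\bX^{(-j)}]$ because $\varepsilon$ is centered and independent of $\bX$), the two truncation levels ($\beta_n$ for the estimate, $L$ for the response), the contraction $(T_{\beta_n}a-b)^2\le(a-b)^2$ for $|b|\le L\le\beta_n$, the telescope through the empirical minimizer and a near-minimizer $f^\star$ supplied by hypothesis $(ii)$ (minor point: the infimum need not be attained, so take an $\epsilon$-near-minimizer, and note $f^\star$ is random through the data-dependent partition, which is why $(ii)$ and $(iii)$ carry outer expectations over $\Theta$ and $\Dn$), the uniform deviations of $(iii)$ at each fixed $L$, and finally the double limit $n\to\infty$ then $L\to\infty$.

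One concrete warning about the step you correctly flag as the ``delicate heart'' but leave as an assertion. If you convert raw to truncated responses \emph{additively} in the telescope, the conversion error is a cross term of the form $(Y-Y_L)(2f-Y-Y_L)$ with $\|f\|_\infty\le\beta_n$, whose expectation is only bounded by $O(\beta_n)\,\E[|Y|\mathds{1}_{|Y|>L}]+O(\E[Y^2\mathds{1}_{|Y|>L}])$; for \emph{fixed} $L$ this blows up as $\beta_n\to\infty$, and sub-Gaussianity of $Y$ does not rescue you, since the tail factor is a constant in $n$ while $\beta_n$ grows. So the residuals cannot be made ``genuinely free of $n$ and $\beta_n$'' in the additive form you describe. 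The textbook closes exactly this gap with a multiplicative slack: using $(a+b)^2\le(1+\epsilon)a^2+(1+1/\epsilon)b^2$, each raw-versus-truncated exchange costs a factor $(1+\epsilon)$ on a quantity you already control plus an additive $c_\epsilon\,\E[Y^2\mathds{1}_{|Y|>L}]$ that is uniform in $n$ and $\beta_n$; the limits are then taken in the order $n\to\infty$, $L\to\infty$, $\epsilon\to0$. With that device (or an equivalent Cauchy--Schwarz absorption argument) made explicit, your plan is essentially the proof of Theorem 10.2; without it, the double limit as you have ordered it does not close.
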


\begin{proof}[Proof of Lemma \ref{consistency_cart_proj}]
    We assume that Assumptions \ref{A1}, \ref{A5}, and \ref{A6} are satisfied, and we fix $j \in \{1,\hdots,p\}$.
    We closely follow the proof of Theorem 1 from \citet{scornet2015consistency} to adapt it to the case of projected forest.
    
    (i) We set $\beta_n = ||m||_{\infty} + \V[\varepsilon] \sqrt{2} \log^2(a_n)$. By definition, $\beta_n \rightarrow \infty$ and (i) is satisfied.
    
    (ii) Approximation Error.
    Fix $\xi > 0$. We can show that (see \citet[page 17]{scornet2015consistency} for the details), for $n$ large enough such that $\beta_n > ||m||_{\infty}$,
    \begin{align*}
        \E\Big[\inf_{\begin{tabular}{c} \small{$f \in \mathcal{F}^{(-j)}_n(\Theta),$} \\ \small{$||f||_{\infty} \leq \beta_n$} \end{tabular}} \E[ (f(\bX^{(-j)}) - &m^{(-j)}(\bX^{(-j)}))^2 ] \Big] \\[-2em] &< \xi^2 + 4||m||_{\infty}^2
        \P(\Delta(m, A_n^{(-j)}(\bX^{(-j)}, \Theta)) > \xi).
    \end{align*}
    On the other hand, observe that $A_n^{(-j)}(\bX^{(-j)}, \Theta)$ is included in the projection of $A_n(\bX, \Theta)$ along the $j$-th direction by construction---see Figure \ref{fig_proj_CART_2} for an illustration. Furthermore, when a cell is projected, its diameter is smaller than the original one. 
    Thus, we have
    \begin{align*}
        \textrm{diam}(A_n^{(-j)}(\bX^{(-j)}, \Theta)) \leq \textrm{diam}(A_n(\bX, \Theta)).
    \end{align*}
    and consequently Lemma  \ref{lemme_diam} implies that in probability
    \begin{align*}
        \lim \limits_{n \to \infty} \textrm{diam}(A_n^{(-j)}(\bX^{(-j)}, \Theta)) = 0.
    \end{align*}
    Since $m$ is continuous, the control on the cell diameter implies that 
    \begin{align*}
        \Delta(m, A_n^{(-j)}(\bX^{(-j)}, \Theta)) \overset{p}{\longrightarrow} 0.
    \end{align*}
    This enables to control the approximation error, i.e., for $n$ large enough
    \begin{align*}
        \E\Big[\inf_{\begin{tabular}{c} \small{$f \in \mathcal{F}^{(-j)}_n(\Theta),$} \\ \small{$||f||_{\infty} \leq \beta_n$} \end{tabular}} \E[ (f(\bX^{(-j)}) - &m^{(-j)}(\bX^{(-j)}))^2 ] \Big] < 2 \xi^2,
    \end{align*}
    and therefore (ii) is satisfied.
    
    (iii) Estimation Error.
    The number of terminal leaves in the original tree is $t_n$. Consequently, the number of leaves in the projected tree is upper bounded by $2^{t_n}$. Thus, by definition 
    $M(\Pi_n^{(-j)}(\Theta)) \leq 2^{t_n}$, and simple calculations give $\Gamma(\Pi_n^{(-j)}(\Theta)) \leq [(p-1)a_n]^{2^{t_n}}$.
    Since Assumption \ref{A6} ensures that $\lim \limits_{n \to \infty} 2^{t_n} \frac{(\log(a_n))^9}{a_n} = 0$, we can show (iii) exactly as in \citet[page 17-18]{scornet2015consistency}.
    
    Since (i), (ii), and (iii) are satisfied, Theorem \ref{thm_102} gives the consistency of the truncated projected tree estimate,
   \begin{align*}
        \lim \limits_{n \to \infty} \E[(T_{\beta_n} m_n^{(-j)}(\bX^{(-j)}) - m^{(-j)}(\bX^{(-j)}))^2] = 0.
    \end{align*}
    
    Finally, the extension to the untruncated projected tree estimate strictly follows \citet[pages 18-19]{scornet2015consistency} when the noise is Gaussian, and is still valid for our case of a sub-Gaussian noise (Assumption \ref{A1}). Overall, we have
        \begin{align*}
        \lim \limits_{n \to \infty} \E[(m_n^{(-j)}(\bX^{(-j)}) - m^{(-j)}(\bX^{(-j)}))^2] = 0.
    \end{align*}
\end{proof}

\begin{figure}
\centering
\begin{tikzpicture}

\draw[->] (-0,0) -- (5.5,0);
\draw (5.5,0) node[right] {$X^{(1)}$};
\draw [->] (0,-0) -- (0,5);
\draw (0,5) node[above] {$X^{(2)}$};
\draw [color = blue, line width = 0.4 mm] (3.5,2) -- (3.5,5);
\draw [color = blue, line width = 0.4 mm] (1,2) -- (1,3.5);
\draw [color = blue, line width = 0.4 mm] (2.5,2) -- (2.5,-0);
\draw [color = blue, line width = 0.4 mm] (5.5,2) -- (-0,2);
\draw [color = blue, line width = 0.4 mm] (-0,3.5) -- (3.5,3.5);
\draw [color = blue, line width = 0.4 mm] (4.8,2) -- (4.8,-0);
\draw (1.7,3.1) node[right] {$A_n(\bX, \Theta)$};
\filldraw (1.5,2.5) circle[radius=1.5pt];
\draw (1.6,2.5) node[right] {$\bX$};

\draw[->, line width = 0.4 mm, color = red] (8,0) -- (13.5,0);
\draw (13.5,0) node[right] {$X^{(1)}$};
\draw [->, dashed] (8,-0) -- (8,5);
\draw (8,5) node[above] {$X^{(2)}$};
\draw [color = blue, line width = 0.4 mm, dashed] (11.5,2) -- (11.5,5);
\draw [color = blue, line width = 0.4 mm, dashed] (9,2) -- (9,3.5);
\draw [color = blue, line width = 0.4 mm, dashed] (10.5,2) -- (10.5,-0);
\draw [color = blue, line width = 0.4 mm, dashed] (13.5,2) -- (8,2);
\draw [color = blue, line width = 0.4 mm, dashed] (8,3.5) -- (11.5,3.5);
\draw [color = blue, line width = 0.4 mm, dashed] (12.8,2) -- (12.8,0);
\draw [color = red, line width = 0.2 mm] (11.5,-0) -- (11.5,5);
\draw [color = red, line width = 0.2 mm] (9,-0) -- (9,5);
\draw [color = red, line width = 0.2 mm] (10.5,5) -- (10.5,-0);
\draw [color = red, line width = 0.2 mm] (12.8,5) -- (12.8,-0);
\filldraw (9.5,2.5) circle[radius=1.5pt];
\draw (9.6,2.5) node[right] {$\bX$};
\draw [line width = 0.2 mm, dashed] (9.5,2.5) -- (9.5,0);
\filldraw (9.5,0) circle[radius=1.5pt];
\draw (9.5,0) node[above right] {$\bX^{(-j)}$};
\draw [decorate,decoration={brace,mirror,amplitude=5pt},xshift=0pt,yshift=-2pt]
(9,0) -- (10.5,0) node [black,midway,yshift=-0.4cm] 
{\footnotesize $A_n^{(-j)}(\bX^{(-j)}, \Theta)$};
\draw [color = orange] (9.8,1.5) node[right] {$A_1$};
\draw [color = orange] (9.8,3) node[right] {$A_2$};
\draw [color = orange] (9.8,4.5) node[right] {$A_3$};

\end{tikzpicture}
\caption{Example of the partition of $[0,1]^2$ by a random CART tree (left side) projected on the subspace span by $\bX^{({-2})} = X^{(1)}$ (right side). Here, $p = 2$ and $j = 2$.}
\label{fig_proj_CART_2}
\end{figure}

\begin{proof}[Proof of Lemma \ref{lemme_sobol_oob_consistency}]
    We assume that Assumptions \ref{A1}, \ref{A5}, and \ref{A6} are satisfied, and we fix $j \in \{1,\hdots,p\}$.
    First, we expand the considered risk
    \begin{align*}
    \E[(m_{M,n}^{(-j,OOB)}(\bX_i, \bTheta_{M})& - m(\bX_i^{(-j)}))^2\mathds{1}_{|\Lambda_{n,i}| > 0} ] \\
    =& \E\Big[ \Big(\frac{1}{|\Lambda_{n,i}|} \sum_{\ell \in \Lambda_{n,i}} [m_n^{(-j)}(\bX_i^{(-j)}, \Theta_{\ell}) - m(\bX_i^{(-j)})]\mathds{1}_{|\Lambda_{n,i}| > 0} \Big)^2 \Big].
    \end{align*}
    Then, identically to the proof of Lemma \ref{lemma_oob_risk}, we can handle the randomness of the selected batch of trees $\Lambda_{n,i}$, and bound the OOB risk with the risk of the standard projected forest, i.e.,
    \begin{align*}
        \E\big[\big( m_{M,n}^{(-j, OOB)}(\bX_i^{(-j)}, \bTheta_{M})& - m(\bX_i^{(-j)}) \big)^2 \mathds{1}_{|\Lambda_{n,i}| > 0}\big] \\
        &\leq \frac{2}{1 - a_n/n} \E\big[\big(m_{M,a_n,n-1}^{(-j)}(\bX^{(-j)}, \bTheta_{M}) 
        - m(\bX^{(-j)})\big)^2 \big].
    \end{align*}
    Lemma \ref{consistency_cart_proj} gives the consistency of the infinite projected forest, which also implies the consistency of the finite projected forest, that is
     \begin{align*}
         \E\big[\big(m_{M,a_n,n-1}^{(-j)}(\bX^{(-j)}, \bTheta_{M}) 
        - m(\bX^{(-j)})\big)^2 \big] \longrightarrow 0.
    \end{align*}
    Additionally, from Assumption \ref{A6}, $a_n/n < 1 - \kappa$ with $\kappa > 0$, and thus
    \begin{align*}
        \lim \limits_{n \to \infty} \E[(m_{M,n}^{(-j,OOB)}(\bX_i^{(-j)}, \bTheta_{M}) - m(\bX_i^{(-j)}))^2\mathds{1}_{|\Lambda_{n,i}| > 0} ] = 0.
    \end{align*}
\end{proof}

\section{MDA Software Implementations}  \label{appendix_soft}

We provide detailed references of the MDA implementations of the main random forest packages:
\begin{enumerate}
    \item \texttt{scikit-learn 0.24} \\ (\textit{https://scikit-learn.org/stable/})
    \item \texttt{randomForest 4.6-14} \\ (\textit{https://cran.r-project.org/web/packages/randomForest/index.html})
    \item \texttt{ranger 0.12.1} \\ (\textit{https://cran.r-project.org/web/packages/ranger/index.html})
    \item \texttt{randomForestSRC 2.9.3} \\ (\textit{https://cran.r-project.org/web/packages/randomForestSRC/index.html})
\end{enumerate}

\subsection{\texttt{scikit-learn 0.24}}

In \texttt{scikit-learn}, the MDA is not specific for random forests, but is a generic procedure taking a trained model and an independent testing sample as inputs.
The MDA implementation is located in the file: ``scikit-learn/sklearn/inspection/\_permutation\_importance.py''.

The method \textit{\_calculate\_permutation\_scores(estimator, X, y, sample\_weight, col\_idx, random\_state, n\_repeats, scorer)} computes the error of the model \textit{estimator} when the column of index \textit{col\_idx} of the testing sample \textit{X} is permuted, over multiple repetitions defined by the parameter \textit{n\_repeats}. The model error is defined by \textit{scorer}, and \textit{random\_state} defines the random seed. Finally, the permuted and the original errors are subtracted and the multiple repetitions are aggregated in the method \textit{permutation\_importance(estimator, X, y, *, scoring=None, n\_repeats=5, n\_jobs=None, random\_state=None)} which thus implements the Train/Test MDA.

\subsection{\texttt{randomForest 4.6-14}}

The \texttt{R} script ``randomForest/R/importance.R'' implements the function\\ \textit{importance.randomForest <- function(x, type=NULL, class=NULL, scale=TRUE, ...)} between lines $6$ and $44$, where \textit{x} is a fitted forest, which as the attribute \textit{x\$importance} storing the Breiman-Cutler MDA and the standard deviation of the risk differences across trees, computed with the script ``randomForest/src/regrf.c'' for regression forests. The function \textit{importance.randomForest} handles exceptions and normalizes the MDA with the standard deviations, and thus implements the normalized Breiman-Cutler MDA.

For regression forests, the \texttt{C} script ``randomForest/src/regrf.c'' computes the difference between the permuted and original errors for each tree between lines $262$ and $295$. The associated means and standard deviations across all trees are computed between lines $327$ and $338$. These computations are done right after the forest construction at the end of the method \textit{void regRF}.

\subsection{\texttt{ranger 0.12.1}}

In \texttt{ranger}, the MDA is computed during the forest growing by specifying the paramater \textit{importance = 'permutation'} in the call to the main function \textit{ranger}.
For each tree of the forest, the accuracy decrease is computed in the \texttt{C++} file ``ranger/src/Tree.cpp'' with the method \textit{void Tree::computePermutationImportance()}, located between lines $206$ and $255$. Next, the importance measures are averaged over all trees with the method \textit{void Forest:: computePermutationImportance()} between lines $646$ and $763$ of the \texttt{C++} file ``ranger/src/Forest.cpp'', and thus the BC-MDA is computed. If the paramater \textit{scale.permutation.importance} is set to \textit{True}, then the normalized BC-MDA is computed (default value is \textit{False}).

\subsection{\texttt{randomForestSRC 2.9.3}}

The package \texttt{randomForestSRC} can compute the three types of MDA. 
The function \textit{vimp.rfsrc} (lines $1$ to $82$ of file ``randomForestSRC/R/vimp.rfsrc.R'') computes the MDA, and takes a fitted forest \textit{object} as an input. If an independent testing sample is provided as the input \textit{newdata}, TT-MDA is computed. Otherwise if \textit{importance = 'permute'}, the IK-MDA by blocks is estimated: the trees of the forest are divided in multiple blocks and the IK-MDA is computed for each block and averaged. The parameter \textit{block.size} set the number of trees in each block, $10$ by default. If $\textit{block.size} = 1$, this procedure is the BC-MDA.

The function \textit{vimp.rfsrc} computes the MDA calling a chain of \texttt{C} subroutines, located in the file ``randomForestSRC/src/randomForestSRC.c'' between lines $2026$ and $2564$: \textit{permute}, \textit{getPermuteMembership}, \textit{getVimpMembership}, \textit{updateVimpEnsemble}, \textit{summarizePerturbedPerformance}, and \textit{finalizeVimpPerformance}.

\section{Analytical Example Computations} \label{appendix_example}

We first recall the analytical example definition, and all computations are provided next.
The input $\bX$ is a Gaussian vector of dimension $p = 5$. Its covariance matrix is defined by $\smash{\V[X^{(j)}] = \sigma_j^2}$ for $\smash{j \in \{1,\hdots,5\}}$, and all covariance terms are null except 
\begin{align*}
    \textrm{Cov}[X^{(1)},X^{(2)}] = \rho_{1,2} \sigma_1 \sigma_2,
\end{align*}
and
\begin{align*}
    \textrm{Cov}[X^{(4)},X^{(5)}] = \rho_{4,5} \sigma_4 \sigma_5.
\end{align*}
The regression function $m$ is given by
\begin{align*}
    m(\bX) = \alpha X^{(1)} X^{(2)} \mathds{1}_{X^{(3)} > 0} + \beta X^{(4)} X^{(5)} \mathds{1}_{X^{(3)} < 0}. 
\end{align*}

\subsection{Total Sobol Index $\pmb{ST^{(1)}}$.}
By definition, $\V[Y] \times ST^{(1)} = \E[\V[m(\bX) | \bX^{(-1)}]]$. Since $X^{(1)}$ and $X^{(2)}$ are independent of $X^{(3)}$, $X^{(4)}$, and $X^{(5)}$, we have
\begin{align*}
    \E[m(\bX) | \bX^{(-1)}] &= \E[ \alpha X^{(1)} X^{(2)} \mathds{1}_{X^{(3)} > 0} + \beta X^{(4)} X^{(5)} \mathds{1}_{X^{(3)} < 0} | \bX^{(-1)}] \\
    &= \E[ \alpha X^{(1)} X^{(2)} \mathds{1}_{X^{(3)} > 0} | X^{(2)}] + \beta X^{(4)} X^{(5)} \mathds{1}_{X^{(3)} < 0} \\
    &= \alpha X^{(2)} \E[ X^{(1)} | X^{(2)}] \mathds{1}_{X^{(3)} > 0} + \beta X^{(4)} X^{(5)} \mathds{1}_{X^{(3)} < 0}.
\end{align*}
Since $(X^{(1)}, X^{(2)})$ is a bivariate centered Gaussian vector,
\begin{align*}
    \E[ X^{(1)} | X^{(2)}] = \rho_{1,2} \frac{\sigma_1}{\sigma_2} X^{(2)},
\end{align*}
and then
\begin{align*}
    \E[m(\bX) | \bX^{(-1)}] &= \alpha \rho_{1,2} \frac{\sigma_1}{\sigma_2} X^{(2) 2} \mathds{1}_{X^{(3)} > 0} + \beta X^{(4)} X^{(5)} \mathds{1}_{X^{(3)} < 0}.
\end{align*}
Next, we compute
\begin{align*}
    \E[\V[m(\bX) | \bX^{(-1)}]] &= \E[(m(\bX) - \E[ m(\bX) | \bX^{(-1)}])^2] \\ 
    &= \E[ (\alpha X^{(1)} X^{(2)} \mathds{1}_{X^{(3)} > 0} - \alpha \rho_{1,2} \frac{\sigma_1}{\sigma_2} X^{(2) 2} \mathds{1}_{X^{(3)} > 0})^2] \\
    &= \frac{\alpha^2}{2} \E[(X^{(1)} X^{(2)} - \rho_{1,2} \frac{\sigma_1}{\sigma_2} X^{(2) 2})^2] \\
    &= \frac{\alpha^2}{2} \big( \E[(X^{(1)} X^{(2)})^2] + (\rho_{1,2} \frac{\sigma_1}{\sigma_2})^2 \E[X^{(2) 4}]
     - 2 \rho_{1,2} \frac{\sigma_1}{\sigma_2} \E[ X^{(1)} X^{(2) 3} ] \big).
\end{align*}
Standard formulas give 
\begin{align*}
    \E[(X^{(1)}X^{(2)})^2] = (1 + 2 \rho_{1,2}^2) \sigma_1^2 \sigma_2^2,
\end{align*} 
\begin{align*}
    \E[X^{(2) 4}] = 3\sigma_2^4,
\end{align*} and 
\begin{align*}
    \E[ X^{(1)} X^{(2) 3} ] = \E[X^{(2) 3} \E[X^{(1)} | X^{(2)}]] 
        = \rho_{1,2} \frac{\sigma_1}{\sigma_2} \E[X^{(2) 4} ].
\end{align*} 
Using these last three formulas in the previous result, we get
\begin{align*}
    \E[\V[m(\bX) | \bX^{(-1)}]]
    &= \frac{\alpha^2}{2} \big[ (1 + 2 \rho_{1,2}^2) \sigma_1^2 \sigma_2^2 + (\rho_{1,2} \frac{\sigma_1}{\sigma_2})^2 3\sigma_2^4 - 2 (\rho_{1,2} \frac{\sigma_1}{\sigma_2})^2 3\sigma_2^4 \big] \\
    &= \frac{\alpha^2}{2} \big[ (1 + 2 \rho_{1,2}^2) \sigma_1^2 \sigma_2^2 + 3 (\rho_{1,2} \sigma_1 \sigma_2)^2 - 6 (\rho_{1,2} \sigma_1 \sigma_2)^2 \big] \\
    &= \pmb{\frac{1}{2} (\alpha \sigma_1 \sigma_2)^2  (1 - \rho_{1,2}^2)}. 
\end{align*}

\subsection{Marginal Total Sobol Index $\pmb{ST_{mg}^{(1)}}$.}
By definition, $\V[Y] \times ST_{mg}^{(1)} = \E[\V[m(\bX_{\pi_1}) | \bX^{(-1)}]]$.
\begin{align*}
    \E[\V[m(\bX_{\pi_1}) | \bX^{(-1)}]] &= \E[(m(\bX_{\pi_1}) - \E[ m(\bX_{\pi_1}) | \bX^{(-1)}])^2] \\ 
    &= \E[ (\alpha X'^{(1)} X^{(2)} \mathds{1}_{X^{(3)} > 0} - \alpha \E[X'^{(1)} | \bX^{(-1)}] X^{(2)} \mathds{1}_{X^{(3)} > 0})^2],
\end{align*}
where $X'^{(1)}$ is an iid copy of $X^{(1)}$. Therefore $X'^{(1)}$ is independent of $\bX$ and $\E[X'^{(1)} | \bX^{(-1)}] = 0$, and we get
\begin{align*}
    \E[\V[m(\bX_{\pi_1}) | \bX^{(-1)}]] &= \frac{\alpha^2}{2} \E[(X'^{(1)}X^{(2)})^2]
    = \frac{\alpha^2}{2} \E[(X'^{(1)}] \E[X^{(2)})^2] \\
    &= \pmb{\frac{1}{2} (\alpha \sigma_1 \sigma_2)^2}.
\end{align*}

\subsection{Third MDA Component $\pmb{MDA_3^{(1)}}$.}
By definition,
\begin{align*}
    MDA_3^{(1)} = \E[(\E[m(\bX) | \bX^{(-1)}] - \E[m(\bX_{\pi_1}) |\bX^{(-1)}])^2]
\end{align*}
As computed above for the marginal total Sobol index, $\E[ m(\bX_{\pi_1}) | \bX^{(-1)}] = \beta X^{(4)} X^{(5)} \mathds{1}_{X^{(3)} > 0}$, thus
\begin{align*}
    MDA_3^{(1)} &= \E[(\alpha X^{(1)} \E[X^{(2)} | \bX^{(-1)}] \mathds{1}_{X^{(3)} > 0})^2] \\
    &= \frac{1}{2} \alpha^2 \E[ (X^{(1)} \E[X^{(2)} | X^{(1)}])^2 ] \\
    &= \frac{1}{2} \alpha^2  (\rho_{1,2} \frac{\sigma_1}{\sigma_2})^2 \E[X^{(2) 4}] \\
    &= \pmb{\frac{3}{2}\rho_{1,2}^2(\alpha\sigma_{1}\sigma_2)^2}.
\end{align*}

\subsection{Final MDA Limits}
Overall, using Proposition \ref{prop_MDA}, we obtain
\begin{align*}
    \textrm{MDA}^{\star (1)} =& 
    \underbrace{\frac{1}{2}(\alpha\sigma_{1}\sigma_2)^2(1 - \rho_{1,2}^2)}_{\textrm{MDA}_1^{\star (1)}} + \underbrace{\frac{1}{2}(\alpha\sigma_{1}\sigma_2)^2}_{\textrm{MDA}_2^{\star (1)}} + \underbrace{\frac{3}{2}\rho_{1,2}^2(\alpha\sigma_{1}\sigma_2)^2}_{\textrm{MDA}_3^{\star (1)}} \\
    \textrm{MDA}^{\star (1)} =& \pmb{(\alpha\sigma_{1}\sigma_2)^2(1 + \rho_{1,2}^2)}.
\end{align*}
By symmetry, $\textrm{MDA}^{\star (2)} = \textrm{MDA}^{\star (1)} = \pmb{(\alpha\sigma_{1}\sigma_2)^2(1 + \rho_{1,2}^2)}$, and
\begin{align*}
    \textrm{MDA}^{\star (4)} = \textrm{MDA}^{\star (5)} = \pmb{(\beta\sigma_{4}\sigma_5)^2(1 + \rho_{4,5}^2)}.
\end{align*}
Finally, since $X^{(3)}$ is independent of the other variables, Corollary $1$ gives
\begin{align*}
    \textrm{MDA}^{\star (3)} &= 2\textrm{MDA}^{\star (3)}_1 = 2 \E[\V[m(\bX) | \bX^{(-3}]].
\end{align*}
Next,
\begin{align*}
    \E[m(\bX) | \bX^{(-3)}] &= \E[ \alpha X^{(1)} X^{(2)} \mathds{1}_{X^{(3)} > 0} + \beta X^{(4)} X^{(5)} \mathds{1}_{X^{(3)} < 0} | \bX^{(-3)}] \\
    &= \frac{1}{2} \alpha X^{(1)} X^{(2)} + \frac{1}{2} \beta X^{(4)} X^{(5)},
\end{align*}
and 
\begin{align*}
    \V[\E[m(\bX) | \bX^{(-3)}]] &= \frac{1}{4} \alpha^2 \V[X^{(1)} X^{(2)}] + \frac{1}{4} \beta^2 \V[X^{(4)} X^{(5)}].
\end{align*}
Since
\begin{align*}
    \V[X^{(1)} X^{(2)}] =& \E[(X^{(1)} X^{(2)})^2] - \E[X^{(1)} X^{(2)}]^2 \\
    =& (1 + 2\rho_{1,2}^2)\sigma_1^2 \sigma_2^2 - (\rho_{1,2} \sigma_1 \sigma_2)^2 \\
    =& (1 + \rho_{1,2}^2) \sigma_1^2 \sigma_2^2,
\end{align*}
we obtain
\begin{align*}
    \V[\E[m(\bX) | \bX^{(-3)}]] &= \frac{1}{4} \alpha^2 (1 + \rho_{1,2}^2) \sigma_1^2 \sigma_2^2 + \frac{1}{4} \beta^2 (1 + \rho_{4,5}^2) \sigma_4^2 \sigma_5^2.
\end{align*}
On the other hand, 
\begin{align*}
    \V[m(\bX)] =& \alpha^2 \V[X^{(1)} X^{(2)} \mathds{1}_{X^{(3)} > 0}] + \beta^2 \V[X^{(4)} X^{(5)} \mathds{1}_{X^{(3)} < 0}] \\ \quad & + 2\textrm{Cov}[\alpha X^{(1)} X^{(2)} \mathds{1}_{X^{(3)} > 0}, \beta X^{(4)} X^{(5)} \mathds{1}_{X^{(3)} < 0}] \\
    =& \frac{\alpha^2}{2}(1 + 2\rho_{1,2}^2)\sigma_1^2 \sigma_2^2 - \frac{\alpha^2}{4}(\rho_{1,2} \sigma_1 \sigma_2)^2
    + \frac{\beta^2}{2}(1 + 2\rho_{4,5}^2)\sigma_4^2 \sigma_5^2 - \frac{\beta^2}{4}(\rho_{4,5} \sigma_4 \sigma_5)^2 \\
    \quad & - 2 \alpha \beta \frac{1}{4} \E[X^{(1)} X^{(2)}] \E[X^{(4)} X^{(5)}] \\
    =& \frac{\alpha^2}{2}(1 + \frac{3}{2} \rho_{1,2}^2)\sigma_1^2 \sigma_2^2
    + \frac{\beta^2}{2}(1 + \frac{3}{2} \rho_{4,5}^2)\sigma_4^2 \sigma_5^2
    - 2 \alpha \beta \frac{1}{4} \rho_{1,2} \sigma_1 \sigma_2 \rho_{4,5} \sigma_4 \sigma_5.
\end{align*}
Finally,
\begin{align*}
    \textrm{MDA}^{\star (3)} &= 2 \E[\V[m(\bX) | \bX^{(-3}]] = 2 (\V[m(\bX)] - \V[\E[m(\bX) | \bX^{(-3)}]]) \\
    &= 2 (\frac{\alpha^2}{4}(1 + 2 \rho_{1,2}^2)\sigma_1^2 \sigma_2^2
    + \frac{\beta^2}{4}(1 + 2 \rho_{4,5}^2)\sigma_4^2 \sigma_5^2
    - 2 \alpha \beta \frac{1}{4} \rho_{1,2} \sigma_1 \sigma_2 \rho_{4,5} \sigma_4 \sigma_5) \\
    &= \pmb{\frac{1}{2}(\alpha\sigma_{1}\sigma_2)^2(1 + \rho_{1,2}^2) + \frac{1}{2}(\beta\sigma_{4}\sigma_5)^2(1 + \rho_{4,5}^2) + \frac{1}{2}(\alpha\rho_{1,2}\sigma_{1}\sigma_2 - \beta\rho_{4,5}\sigma_{4}\sigma_5)^2}.
\end{align*}

\subsection{High Correlation Setting.}
In a high correlation setting, the third term becomes the main MDA contribution for variables $\bX^{(1)}$, $\bX^{(2)}$, $\bX^{(4)}$, and $\bX^{(5)}$. Since computations are similar, we only consider $\bX^{(1)}$:
\begin{align*}
    \textrm{MDA}^{\star (1)}_3 &> \textrm{MDA}^{\star (1)}_1 + \textrm{MDA}^{\star (1)}_2 \\
    \frac{3}{2}\rho_{1,2}^2(\alpha\sigma_{1}\sigma_2)^2 &> \frac{1}{2}(\alpha\sigma_{1}\sigma_2)^2(1 - \rho_{1,2}^2) + \frac{1}{2}(\alpha\sigma_{1}\sigma_2)^2 \\
    3 \rho_{1,2}^2(\alpha\sigma_{1}\sigma_2)^2 &> 2 (\alpha\sigma_{1}\sigma_2)^2 - (\alpha\sigma_{1}\sigma_2)^2 \rho_{1,2}^2 \\
    4 \rho_{1,2}^2(\alpha\sigma_{1}\sigma_2)^2 &> 2 (\alpha\sigma_{1}\sigma_2)^2 \\
    \rho_{1,2}^2 &> \frac{1}{2} \\
    \pmb{\rho_{1,2}} &> \pmb{\frac{\sqrt{2}}{2}}.
\end{align*}

\end{document}